\pdfoutput=1

\documentclass[11pt]{article}

\usepackage[final]{acl}

\usepackage{times}
\usepackage{latexsym, amsthm, mathtools}
\usepackage{appendix, subfigure}
\usepackage{tcolorbox, multicol, multirow, makecell, amsfonts, booktabs, colortbl}
\usepackage{algorithm, algpseudocode}

\tcbuselibrary{skins, breakable, theorems}
\usepackage[T1]{fontenc}

\usepackage[utf8]{inputenc}

\usepackage{microtype}
\usepackage{float}
\usepackage{inconsolata}
\usepackage{url}

\usepackage{graphicx}

\title{\textit{Adapt to Thrive!} Adaptive Power-Mean Policy Optimization for Improved LLM Reasoning}

\author{
  Yiming Huang\textsuperscript{1}, 
  Zhenbo Shi\textsuperscript{1}, 
  Shuzheng Gao\textsuperscript{3}, 
  Cuiyun Gao\textsuperscript{1}\thanks{Corresponding authors.}, 
  Peiyi Han\textsuperscript{1,2}, 
  Chuanyi Liu\textsuperscript{1,2}\footnotemark[1]
  \\
  \textsuperscript{1}Harbin Institute of Technology, Shenzhen \\
  \textsuperscript{2}Peng Cheng Laboratory \\
  \textsuperscript{3}The Chinese University of Hong Kong \\
  \small \textbf{Correspondence:} 
  \href{mailto:24b951042@stu.hit.edu.cn,2023311604@stu.hit.edu.cn,szgao98@gmail.com}{
    \{24b951042,2023311604\}@stu.hit.edu.cn, szgao98@gmail.com,
  }
  \\ 
  \small \href{mailto:gaocuiyun@hit.edu.cn,hanpeiyi@hit.edu.cn,liuchuanyi@hit.edu.cn}{
    \{gaocuiyun,hanpeiyi,liuchuanyi\}@hit.edu.cn
  }
} 

\begin{document}
\maketitle
\begin{abstract}
Reinforcement Learning with Verifiable Rewards (RLVR) is an essential paradigm that enhances the reasoning capabilities of Large Language Models (LLMs). However, existing methods typically rely on static policy optimization schemes that misalign with the model's evolving reasoning capabilities. To address this issue, we propose \textbf{Adaptive Power-Mean Policy Optimization (APMPO)}, which comprises two main innovations: Power-Mean Policy Optimization (PMPO) and Feedback-Adaptive Clipping (FAC). Specifically, PMPO introduces a generalized power-mean objective. This enables the model to adaptively transition from the signal‑amplifying behavior of the arithmetic mean to the consistency‑enforcing behavior of the geometric mean. FAC adaptively adjusts clipping bounds based on real-time reward statistics to overcome the limitations of static mechanisms. Capitalizing on these innovations, APMPO improves learning dynamics and reasoning performance. Extensive experiments on nine datasets across three reasoning tasks showcase the superiority of APMPO over state-of-the-art RLVR-based baselines. For instance, APMPO boosts the average Pass@1 score on mathematical reasoning benchmarks by 3.0 points compared to GRPO when using Qwen2.5-3B-Instruct.

\end{abstract}

\section{Introduction}
\begin{quote}
\textit{The measure of intelligence is the ability to change.}
\end{quote}
\begin{flushright}
\textit{- Albert Einstein}
\end{flushright}

Enhancing the reasoning capabilities of Large Language Models (LLMs) \cite{hurst2024gpt, yang2024qwen2} has become a central research focus. As a promising approach, reinforcement learning (RL) empowers LLMs to surpass simple pattern matching by refining their decision‑making strategies based on task‑specific feedback \cite{guo2025deepseek}. Within this field, Reinforcement Learning with Verifiable Rewards (RLVR) \cite{wu2025invisible, wen2025reinforcement} is recognized for effectively enhancing complex reasoning. Leveraging outcome-based rewards, RLVR has driven notable progress in multiple domains such as mathematics \cite{chen2025bridging}, coding \cite{ye2025process}, and multi-modal reasoning \cite{wang2025geometryzero}.

A cornerstone algorithm in RLVR is Group Relative Policy Optimization (GRPO) \cite{shao2024deepseekmath}, which has gained widespread attention. By eliminating the need for a value model, GRPO reduces memory consumption and streamlines the training process \cite{ramesh2024group}. These benefits have led to several GRPO variants, each designed to address specific limitations in policy optimization \cite{yu2025dapo, zhao2025geometric}. Despite these efforts, existing methods face inherent limitations that impede their wider adoption:

\textbf{Limitation 1: Static objective functions misalign with the model's evolving reasoning performance.} Current RLVR-based methods typically rely on fixed objective functions throughout training, ignoring the need to adapt sensitivity to reward signals at different learning phases.  
For instance, GRPO \cite{shao2024deepseekmath} and DAPO \cite{yu2025dapo} use the arithmetic mean, which is
highly sensitive to high-reward outliers. While this sensitivity can amplify rare high‑reward signals, it often drives the policy to overfit specific solutions and induces entropy collapse. In contrast, GMPO \cite{zhao2025geometric} adopts the geometric mean to aggregate rewards from multiple reasoning paths. Since a single low reward can sharply reduce the geometric mean, GMPO tends to discard promising but unstable reasoning paths. This makes GMPO less effective during early training when correct reasoning paths are scarce. Consequently, these methods cannot adjust their 
sensitivity to match evolving learning dynamics. These observations highlight the need for an \textit{adaptive objective function} that balances the amplification of high‑value signals with sustained policy optimization for improved reasoning.

\textbf{Limitation 2: Static policy optimization constraints overlook variations in reward signal stability.} Standard algorithms such as GRPO employ clipping mechanisms to stabilize training. They typically enforce fixed constraint thresholds irrespective of the statistical stability of reward signals. This static design is suboptimal, since the stability of reward signals varies across training batches. When rewards within a batch are statistically stable, they provide a clearer direction for policy improvement. This allows for more aggressive updates within a wider trust region. Conversely, batches with highly fluctuating rewards reflect ambiguity in the policy’s decisions and require tighter trust regions to mitigate unstable policy updates. Ignoring the stability of reward signals can lead to misguided policy updates, ultimately degrading model performance \cite{huang2025low, yu2025dapo, yoonconfpo2025}. This motivates the design of an \textit{adaptive clipping mechanism} that adjusts clipping bounds based on the real-time statistical stability of reward signals.

In light of the above limitations, this work investigates the following key research question:

\textbf{\textit{How to design an adaptive algorithm that aligns learning objectives and policy optimization constraints with the model's learning process?}}

To answer this question, we propose \textbf{Adaptive Power-Mean Policy Optimization (APMPO)}, a novel RLVR-based algorithm designed to strengthen the reasoning capabilities of LLMs. Specifically, APMPO integrates two innovations: (1) \textbf{Power-Mean Policy Optimization (PMPO)}: To address \textit{Limitation 1}, PMPO introduces a power-mean formulation that adaptively modulates the objective behavior between the arithmetic and geometric mean objectives. By adaptively balancing these two extremes, PMPO can mitigate training instability and promote the discovery of high-value signals. (2) \textbf{Feedback-Adaptive Clipping (FAC)}: To address \textit{Limitation 2}, FAC introduces a feedback-driven mechanism that modulates clipping bounds based on statistical stability of real-time reward signals. In this design, reward signals serve as feedback proxies for evaluating policy reliability. Consistent reward feedback requires larger policy updates, whereas unstable reward feedback demands stricter policy constraints. Collectively, these innovations enable APMPO to achieve adaptive policy optimization. Extensive experiments on nine benchmarks spanning three reasoning tasks demonstrate the superiority of APMPO over state-of-the-art RLVR-based baselines.

Before delving into the details, the main contributions of this work are summarized as follows:

(1) We identify and formalize two critical limitations in current RLVR-based methods, particularly the static nature of their objective functions and clipping mechanisms.     These limitations hinder the adaptive tuning of policy optimization strategies in response to changes in model capability.  

(2) We introduce APMPO, a novel RLVR-based algorithm that integrates a power mean-based objective function with a feedback-adaptive clipping mechanism. This design enables continuous adjustment of policy optimization strategies based on real‑time training dynamics.  

(3) Extensive experimental results on nine datasets across three tasks substantiate the superiority of APMPO in enhancing the reasoning accuracy of LLMs.

\begin{figure*}[t]
\centering
\subfigure[Training dynamics of reward curves using different RL-based methods.]{
		\includegraphics[scale=0.23]{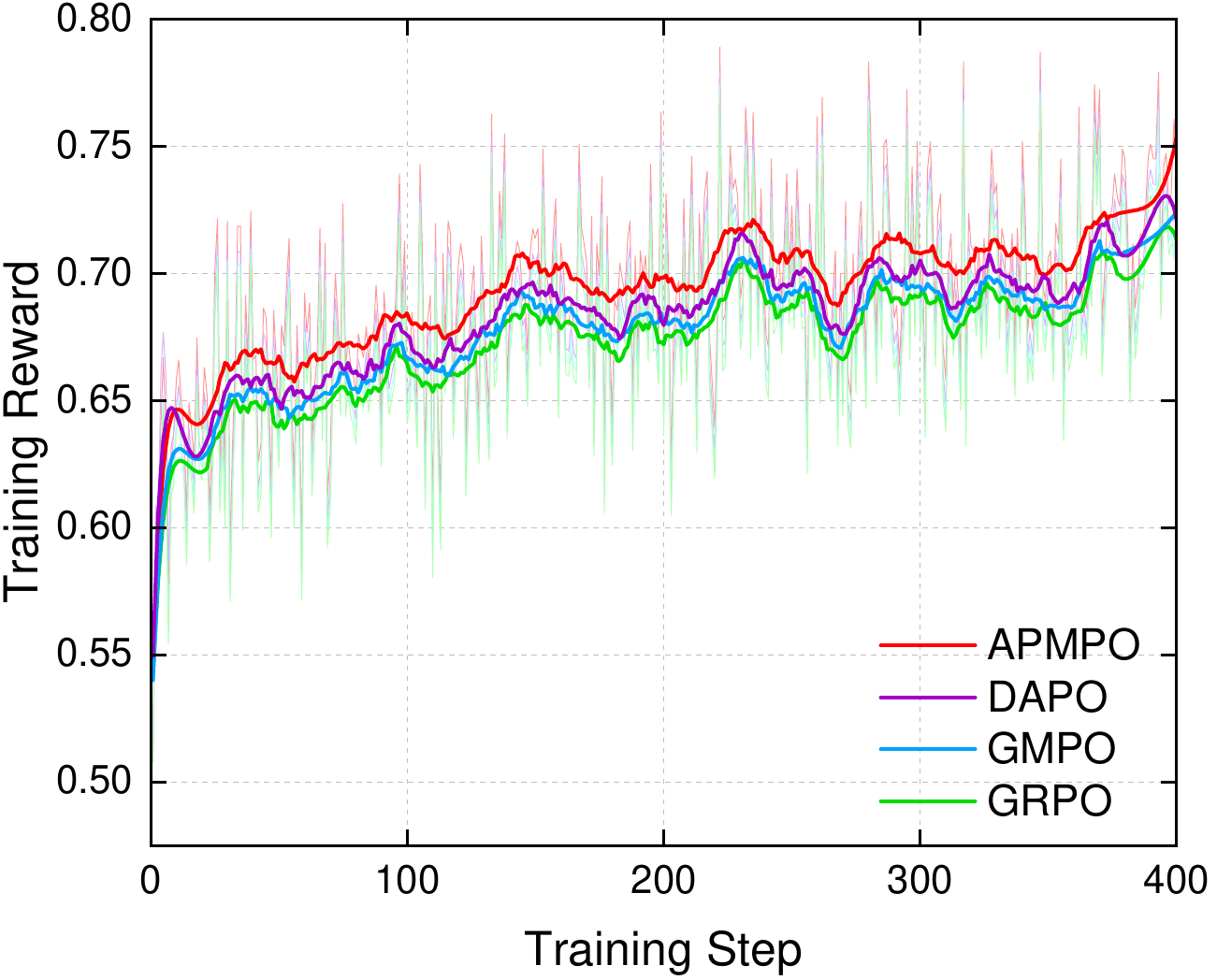}}
\subfigure[Training dynamics of entropy curves using different RL-based methods.]{
		\includegraphics[scale=0.23]{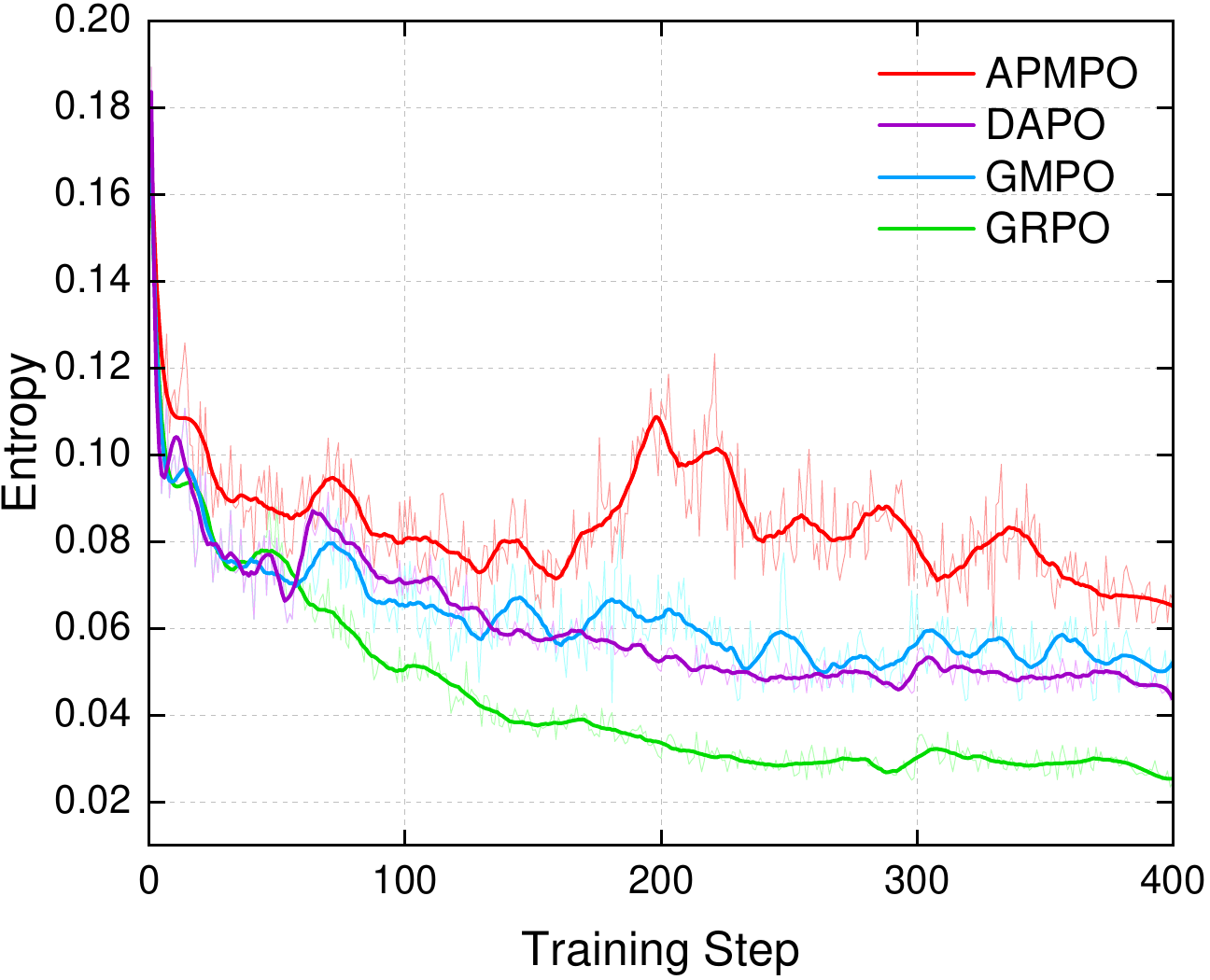}}
\subfigure[Wall-clock training time using different RL-based methods.]{
		\includegraphics[scale=0.23]{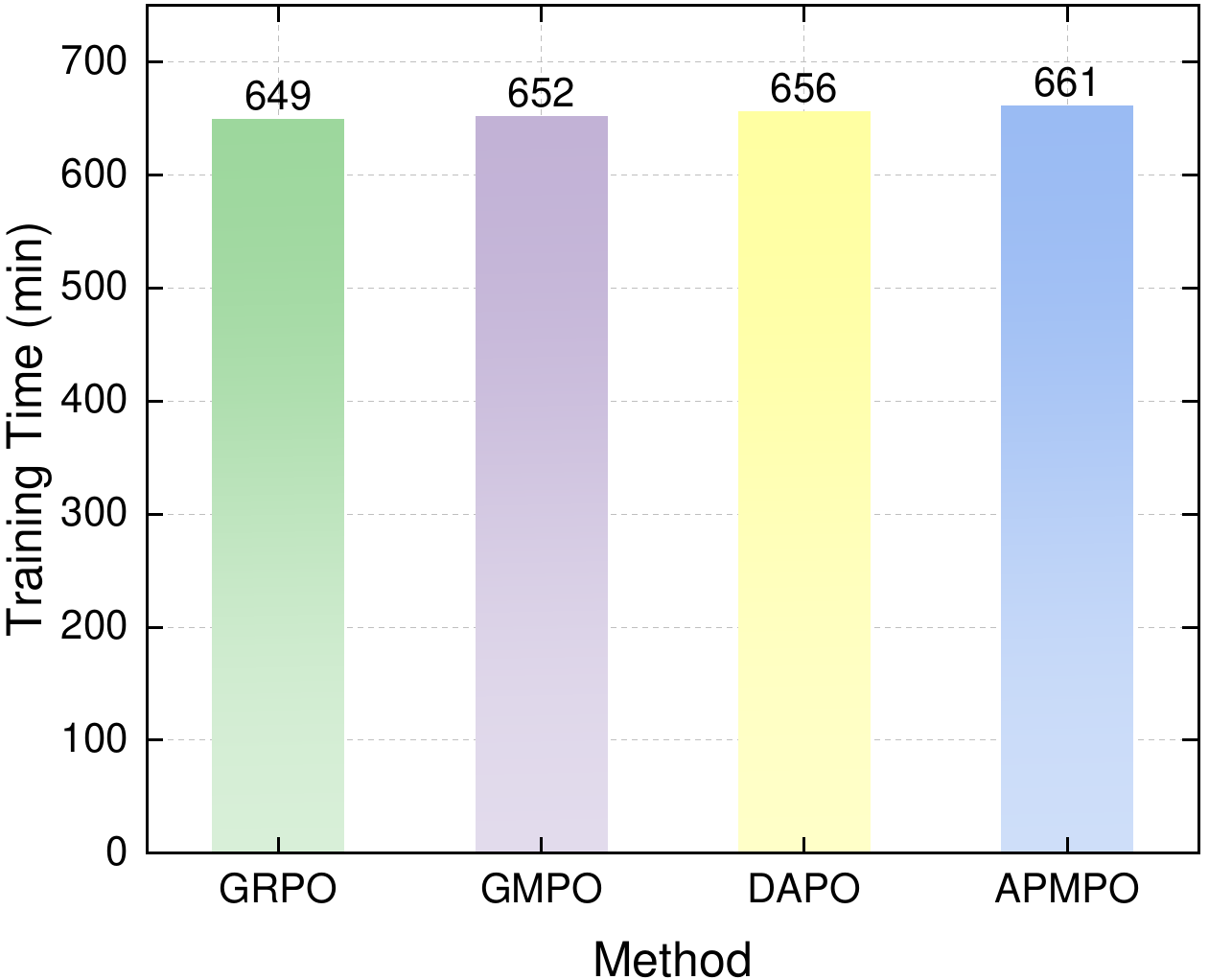}}
\caption{Illustrations of training dynamics in terms of training rewards, policy entropy, and training time using the MATH training dataset.}
\label{fig:training}
\end{figure*}

\section{Related Work}

GRPO \cite{shao2024deepseekmath} has become prominent in RLVR. Its core innovation lies in directly estimating advantages from reward scores across multiple sampled solutions, thereby eliminating the need for an explicit value model. To promote stable training, GRPO constrained policy updates via a clipping function controlled by a fixed hyperparameter. This approach has inspired several variants aimed at enhancing the reasoning performance of LLMs \cite{yu2025dapo, liu2025understanding, chu2025gpg, zhao2025geometric}. For instance, DAPO \cite{yu2025dapo} introduced dynamic sampling to explicitly filter out zero-advantage samples, thereby improving gradient quality at the cost of reduced data efficiency. GMPO \cite{zhao2025geometric} tackled training instability by optimizing the geometric mean of rewards instead of the arithmetic mean, making it inherently less sensitive to reward outliers. Additional details on these algorithms are provided in Appendix~\ref{appendix:compared}.

Despite these advances, existing approaches remain constrained by static design choices. Specifically, objective functions and trust-region bounds remain static across all training batches. This static design limits their adaptability to varying reward distributions and training dynamics. As a result, the ability to adaptively adjust policy optimization behavior in response to real‑time learning signals is critical for sustained performance improvement. In light of this, we propose APMPO to promote superior policy optimization.

\section{Preliminary Analysis}
The motivation for APMPO arises from a preliminary analysis of the training dynamics in GRPO and GMPO, centering on training reward and policy entropy. 

\subsection{Motivation for PMPO: The Dilemma of Static Objective Functions}
Figures~\ref{fig:training}(a) and (b) reveal the inherent limitations of static objective functions.
Specifically, the arithmetic-mean objective in GRPO exhibits \textit{excessive sensitivity to high-reward outliers}. While this sensitivity facilitates early identification of promising signals, it often drives the policy toward early convergence on suboptimal strategies. This is evidenced by a sharp entropy collapse (green curve). 

In contrast, GMPO applies a geometric-mean objective to enforce consistency. This approach alleviates entropy collapse and path-specific overfitting, leading to superior policy updates than GRPO (blue curve). Nonetheless, this strict consensus requirement reduces sensitivity. Mathematically, the geometric mean functions as a global filter that weakens the influence of distinct high‑value signals. As a result, GMPO risks impeding the acquisition of correct reasoning in early training phases. 

These findings emphasize the need for an adaptive objective function that \textit{balances signal amplification with distributional consistency}. This insight directly motivates PMPO, which adaptively interpolates between these distinct learning patterns. A detailed analysis of GRPO and GMPO is presented in Appendix~\ref{sec:appendix_example}.

\subsection{Motivation for Adaptive Clipping: The Need for Adaptive Trust Regions}
Current RL‑based methods employ fixed clipping hyperparameters to impose static constraints on policy updates. However, this design is suboptimal given the continual changes in reward distributions. Notably, the raw reward signals in Figure~\ref{fig:training}(a) (thin lines) fluctuate significantly, reflecting disparate levels in statistical stability across batches. Current methods inherently assume stationary reward statistics, an assumption that rarely holds in practice. Consequently, static clipping bounds fail to adapt to the reward distributional shifts. They can be overly restrictive for batches exhibiting high reward stability and excessively permissive for those with unstable reward patterns. Therefore, the policy struggles to converge during stable phases and remains exposed to detrimental updates during unstable phases. These observations motivate the design of FAC, which adaptively modulates clipping bounds based on the real-time statistical properties of the reward signal.

\begin{figure*}[h]
\centering 
\includegraphics[width=0.97\textwidth]{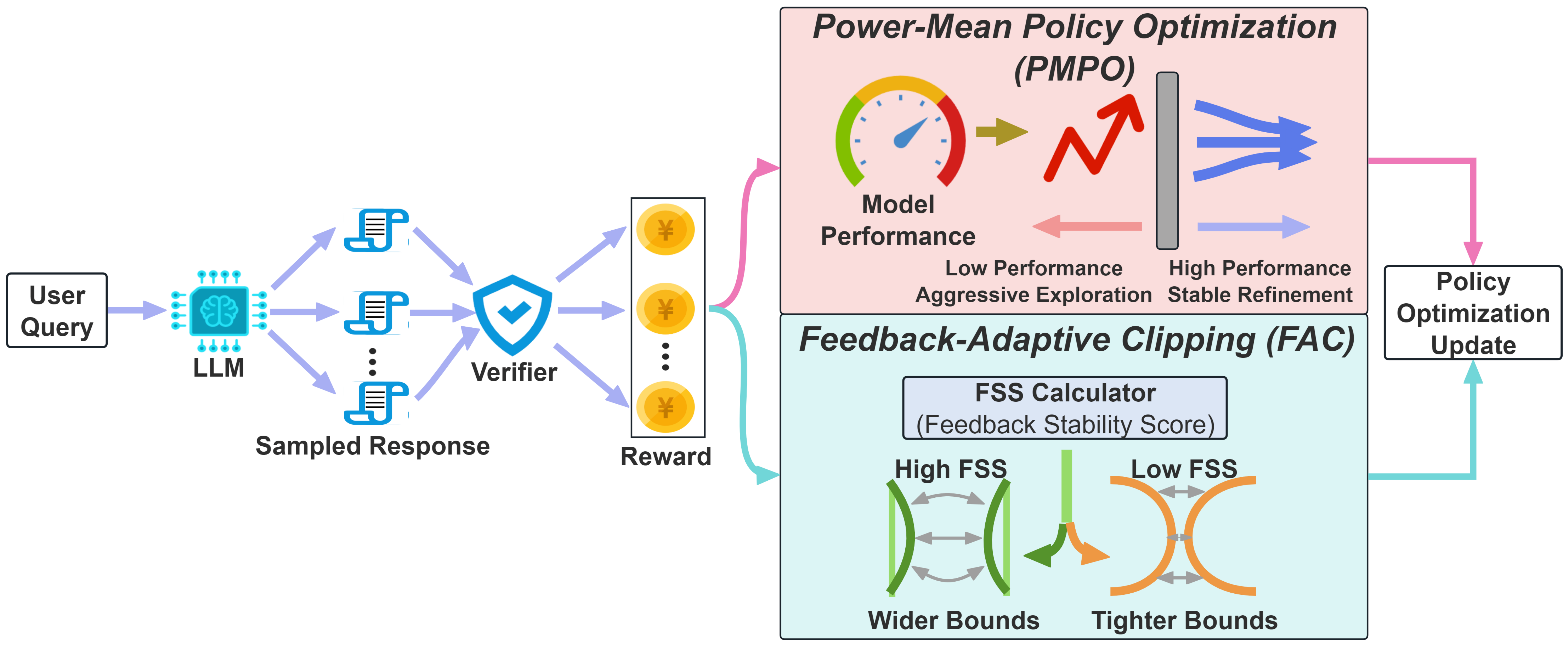}
\caption{Illustration of APMPO, which consists of Power-Mean Policy Optimization (PMPO) and Feedback-Adaptive Clipping (FAC).}
\label{fig:framework}
\end{figure*}

\section{Methodology}\label{sec:method}

As depicted in Figure \ref{fig:framework}, APMPO 
is proposed to enhance reasoning capabilities of LLMs. It consists of Power-Mean Policy Optimization (PMPO) and Feedback-Adaptive Clipping (FAC). In the following, some notations and preliminaries are briefly introduced before each of its key innovations is elaborated in detail.

\subsection{Notations and Preliminaries}
\noindent \textbf{Group Sampling.} For each input query $q$, a group of $G$ responses is sampled from the old policy $\pi_{\theta_{\text{old}}}$:
\begin{equation}
\{o_i\}_{i=1}^G \sim \pi_{\theta_{\text{old}}}(\cdot | q)
\label{eq:group_sampling}
\end{equation}
where each $o_i$ comprises a sequence of tokens $\{o_{i,1},o_{i,2}, \dots\}$, and a scalar reward $R_i$ is assigned to each completed response.

\noindent \textbf{Group-Normalized Advantage.} The advantage $\hat{A}_{i}$ is computed for each sample given as:
\begin{equation}
\hat{A}_{i} = \frac{R_i - \mu_R}{\sigma_R + \delta}
\label{eq:group_advantage}
\end{equation}
where $\mu_R$ and $\sigma_R$ denote the mean and standard deviation of rewards within the group. 

\noindent \textbf{Importance Sampling Ratio.} For each token $o_{i,t}$, the importance sampling ratio $r_{i,t}(\theta)$ measures the change in action probability between the current policy $\pi_\theta$ and the old policy $\pi_{\theta_\text{old}}$:
\begin{equation}
    r_{i,t}(\theta) = \frac{\pi_\theta(o_{i,t} | q, o_{i,<t})}{\pi_{\theta_{\text{old}}}(o_{i,t} | q, o_{i,<t})}
\label{eq:importance_ratio}
\end{equation}


\subsection{Power-Mean Policy Optimization (PMPO)} \label{PMPO}

Existing RL-based methods typically employ static objective functions that exhibit contrasting limitations. Concretely, the arithmetic‑mean objective in GRPO induces overly aggressive policy updates by prioritizing outliers, thereby causing mode collapse. Conversely, conservative formulations such as GMPO overly penalize reward variance and hinder the amplification of high-reward outliers.

To resolve this dilemma, we introduce \textbf{Power-Mean Policy Optimization (PMPO)} that adaptively interpolates between these two learning patterns. The key insight is that the degree of policy‑update aggressiveness can be explicitly controlled by the choice of the mean operator. This design builds on the theoretical insight that both the arithmetic and geometric means are special cases of the generalized power mean (see Appendix~\ref{appendix:theo1}). Accordingly, PMPO employs a generalized power mean where the exponent $p$ is adaptively determined by the model’s real‑time performance. Specifically, we use the average reward of each batch $\mu_R \in [0,1]$, which reflects the model's learning progress. To facilitate a smooth transition from signal discovery to stable refinement, we adopt an exponential decay function for the exponent $p$:
\begin{equation}
p = \exp(-\gamma \cdot \mu_R)
\label{eq:dynamic_p}
\end{equation}
where $\gamma$ regulates the sensitivity of $p$ to performance variations. This formulation naturally implements a performance-driven scheduling strategy:

(1) \textbf{Exploration Phase (Low $\mu_R$):} When performance is low, $p$ approaches 1. This encourages signal amplification by prioritizing high-reward outliers, thereby facilitating the rapid discovery of rare correct solutions.  

(2) \textbf{Consolidation Phase (High $\mu_R$):} As performance improves, $p$ decays toward 0. This transitions the policy into a consistency-enforcing phase, prioritizing reward stabilization to regularize the policy against overfitting.

This adaptive mechanism empowers PMPO to continuously adjust its learning dynamics, seamlessly interpolating between local sensitivity and global stability within a unified framework. The complete objective function formulation is detailed in Section~\ref{subsec:rl_training} (See Appendix \ref{appendix:theory_dynamics} for more analysis).

\subsection{Feedback-Adaptive Clipping (FAC)} \label{subsec:adaptive_clipping}
Current RL-based approaches typically utilize a fixed clipping ratio $\epsilon$ to constrain policy updates, thereby enforcing a uniform trust region across all batches. Nonetheless, this static design cannot adapt to the fluctuating statistical stability of training batches. Consequently, a fixed $\epsilon$ is overly restrictive for statistically stable batches while remaining overly permissive for noisy batches.

To address this limitation, we propose \textbf{Feedback-Adaptive Clipping (FAC)}, which adaptively adjusts clipping bounds based on the real-time reward distributions. FAC assesses the reward feedback to estimate its suitability for guiding policy updates. The key insight is that a high-stability feedback batch exhibits a strong average reward signal and high internal consistency. Accordingly, we formalize this notion by using the \textit{Feedback Stability Score (FSS)} as follows:
\begin{equation}
\text{FSS} = \frac{\mu_{R}}{\sigma_{R} + \delta}
\label{eq:snr}
\end{equation}
where $\mu_{R}$ and $\sigma_{R}$ are the mean and standard deviation of the rewards within a batch, respectively. $\delta$ is a small constant for numerical stability. Notably, FSS serves as a critical non-linear scaling factor that modulates the effective signal magnitude. Intuitively, a higher FSS signifies a high-confidence batch with stable feedback, thereby justifying an expanded trust region to accelerate learning. Conversely, a low FSS implies that the signal is dominated by high uncertainty, necessitating a conservative clipping bound to alleviate potential policy degradation. Additional analysis on FSS is shown in Appendix \ref{appendix:trust}.

Based on the computed FSS, the clipping bound is adaptively updated. Specifically, FSS is mapped to an adaptive upper clipping bound $\epsilon_{\text{ada}}$ given as:
\begin{equation}
\epsilon_{\text{ada}} = \epsilon_{\text{min}} + (\epsilon_{\text{max}} - \epsilon_{\text{min}}) \cdot \tanh(\text{FSS})
\label{eq:adaptive_epsilon}
\end{equation}
where $\epsilon_{\text{min}}$ and $\epsilon_{\text{max}}$ are the predefined minimum and maximum clipping bounds, respectively. The $\tanh(\cdot)$ function smoothly maps unbounded FSS values to the range $(0,1)$, enabling the clipping bound to adaptively widen in stable phases and narrow in unstable phases. A detailed analysis of FAC is provided in Appendix \ref{appendix:theo2}.

\subsection{RL Training} \label{subsec:rl_training}
PMPO and FAC are jointly integrated to construct the final objective of APMPO. The training process proceeds in three logically connected steps:

\textbf{Step 1:} An adaptive clipping function $\rho_{i,t}(\theta)$ is defined to establish an asymmetric trust region. While the lower bound is fixed to ensure baseline stability, the upper bound is adaptively modulated by $\epsilon_{\text{ada}}$ 
from Eq. (\ref{eq:adaptive_epsilon}) to adapt to signal quality:
\begin{equation}
\rho_{i,t}(\theta) = \text{clip}(r_{i,t}(\theta), 1 - \epsilon_{\text{low}}, 1 + \epsilon_{\text{ada}})
\label{eq:effective_ratio}
\end{equation}
where $\epsilon_{\text{low}}$ is a fixed lower bound to prevent excessive policy changes. This mechanism allows the policy to aggressively exploit high-quality signals while remaining conservative under noisy feedback. Further analysis on this asymmetric design is shown in Appendix \ref{appendix:clipping asymmetry}.

\textbf{Step 2:} To satisfy the non-negative constraint of the power-mean operator, the computation is decoupled into a non‑negative magnitude term and a directional term. The token-level magnitude $\phi_{i,t}(\theta)$ is defined as:
\begin{equation}
\phi_{i,t}(\theta) = \left| \min \left( r_{i,t}(\theta)\hat{A}_{i}, \rho_{i,t}(\theta)\hat{A}_{i} \right) \right|
\label{eq:token_magnitude}
\end{equation}

\textbf{Step 3:} Finally, APMPO aggregates the token-level magnitudes via the power‑mean operator and introduces a directional control term. The per-sequence objective is defined as:
\begin{equation}
\mathcal{J}_i(\theta) = \underbrace{\left[\frac{1}{|o_i|} \sum_{t=1}^{|o_i|} \left(\phi_{i,t}(\theta) \right)^p \right]^{1/p}}_{\text{PMPO}} \cdot \underbrace{\text{sgn}(\hat{A}_i)}_{\text{Directional Control}}\label{eq:per_sequence_objective}
\end{equation}
where $\text{sgn}(\hat{A}_i) \in \{-1,1\}$ ensures that positive advantages drive policy maximization while negative advantages incur penalties. The complete objective function $\mathcal{J}(\theta)$ is then given as:
\begin{equation}
\mathcal{J}(\theta) = \frac{1}{G} \sum_{i=1}^G \mathcal{J}_i(\theta)- \beta D_{\text{KL}}(\pi_\theta || \pi_{\theta_{\text{ref}}})
\label{eq:final_objective}
\end{equation}
where $\pi_{\theta_{\text{ref}}}$ is the reference policy. A detailed gradient derivation, convergence analysis, and pseudocode are provided in Appendices \ref{appendix:gradient}, \ref{appendix:convergence}, and \ref{appendix:pseudo}, respectively.

\begin{table*}[t]
\centering
\small
\setlength{\tabcolsep}{0.8mm}
\begin{tabular}{cccccccc}
\toprule
\textbf{Method} & \textbf{Math500} & \textbf{AIME24} & \textbf{AIME25} & \textbf{AMC23} & \textbf{Minerva} & \textbf{Olympiad}& \textbf{Avg.}  \\
\midrule
\multicolumn{8}{c}{\textit{Qwen2.5-Math-1.5B-Instruct}} \\
\midrule
Base & 74.2 & 10.0 / 16.7 & 3.3 / 10.0 & 47.5 / 70.0 & 28.7 & 35.2 & 33.2 / 32.2  \\
GRPO & 75.2$_{\pm 0.4}$ & 13.3$_{\pm 0.0}$ / 16.7$_{\pm 0.0}$ & 13.3$_{\pm 0.0}$ / 16.7$_{\pm 0.0}$ & 52.5$_{\pm 1.4}$ / 75.0$_{\pm 0.0}$ & 29.4$_{\pm 0.3}$ & 39.0$_{\pm 0.6}$ & 37.1$_{\pm 0.5}$ / 36.1$_{\pm 0.0}$ \\
DAPO & 77.2$_{\pm 0.5}$ & 16.7$_{\pm 1.9}$ / 23.3$_{\pm 0.0}$ & 16.7$_{\pm 1.9}$ / 23.3$_{\pm 0.0}$ & 57.5$_{\pm 0.0}$ / 80.0$_{\pm 1.4}$ & 29.0$_{\pm 0.4}$ & 40.4$_{\pm 0.6}$ & 39.6$_{\pm 0.9}$ / 42.2$_{\pm 0.5}$  \\
GMPO & 76.6$_{\pm 0.4}$ & 13.3$_{\pm 0.0}$ / 20.0$_{\pm 1.9}$ & \textbf{20.0}$_{\pm 0.0}$ / \textbf{26.7}$_{\pm 0.0}$ & 55.0$_{\pm 1.4}$ / 82.5$_{\pm 0.0}$ & 30.1$_{\pm 0.3}$ & 38.7$_{\pm 0.5}$ & 39.0$_{\pm 0.4}$ / 43.1$_{\pm 0.6}$  \\
APMPO & \textbf{78.0}$_{\pm 0.3}$ & \textbf{20.0}$_{\pm 0.0}$ / \textbf{30.0}$_{\pm 0.0}$ & 16.7$_{\pm 0.0}$ / \textbf{26.7}$_{\pm 0.0}$ & \textbf{62.5}$_{\pm 0.0}$ / \textbf{85.0}$_{\pm 0.0}$ & \textbf{30.5}$_{\pm 0.2}$ & \textbf{42.4}$_{\pm 0.3}$ & \textbf{41.7}$_{\pm 0.1}$ / \textbf{47.2}$_{\pm 0.0}$  \\
\midrule
\multicolumn{8}{c}{\textit{Qwen2.5-3B-Instruct}} \\
\midrule
Base & 62.0 & 0.0 / 3.3 & 0.0 / 6.7 & 35.0 / 55.0 & 24.3 & 29.1 & 25.1 / 21.7 \\
GRPO & 66.0$_{\pm 0.4}$ & 6.7$_{\pm 0.0}$ / 13.3$_{\pm 1.9}$ & 6.7$_{\pm 0.0}$ / 13.3$_{\pm 0.0}$ & 40.0$_{\pm 0.0}$ / 60.0$_{\pm 1.4}$ & 25.4$_{\pm 0.3}$ & 31.5$_{\pm 0.4}$ & 29.4$_{\pm 0.2}$ / 28.9$_{\pm 1.1}$ \\
DAPO & 67.6$_{\pm 0.4}$ & 6.7$_{\pm 1.9}$ / 16.7$_{\pm 0.0}$ & \textbf{10.0}$_{\pm 0.0}$ / \textbf{20.0}$_{\pm 0.0}$ & 45.0$_{\pm 1.4}$ / 65.0$_{\pm 0.0}$ & 26.8$_{\pm 0.4}$ & 32.6$_{\pm 0.5}$ & 31.5$_{\pm 0.8}$ / 33.9$_{\pm 0.0}$ \\
GMPO & 66.8$_{\pm 0.5}$ & 6.7$_{\pm 0.0}$ / 16.7$_{\pm 0.0}$ & \textbf{10.0}$_{\pm 0.0}$ / 16.7$_{\pm 1.9}$ & 42.5$_{\pm 0.0}$ / 60.0$_{\pm 1.4}$ & 26.1$_{\pm 0.3}$ & 32.2$_{\pm 0.4}$ & 30.7$_{\pm 0.2}$ / 31.1$_{\pm 1.1}$ \\
APMPO & \textbf{68.4}$_{\pm 0.2}$ & \textbf{10.0}$_{\pm 0.0}$ / \textbf{20.0}$_{\pm 0.0}$ & \textbf{10.0}$_{\pm 0.0}$ / \textbf{20.0}$_{\pm 0.0}$ & \textbf{45.0}$_{\pm 0.0}$ / \textbf{70.0}$_{\pm 0.0}$ & \textbf{27.9}$_{\pm 0.2}$ & \textbf{33.2}$_{\pm 0.3}$ & \textbf{32.4}$_{\pm 0.1}$ / \textbf{36.7}$_{\pm 0.0}$ \\
\midrule
\multicolumn{8}{c}{\textit{DeepSeek-R1-Distill-Qwen-1.5B}} \\
\midrule
Base & 64.6 & 6.7 / 26.7 & 13.3 / 40.0 & 47.5 / 70.0 & 24.6 & 30.9 & 31.3 / 45.6 \\
GRPO & 75.4$_{\pm 0.5}$ & 13.3$_{\pm 0.0}$ / 33.3$_{\pm 1.9}$ & 20.0$_{\pm 1.9}$ / 43.3$_{\pm 0.0}$ & 57.5$_{\pm 1.4}$ / 82.5$_{\pm 0.0}$ & 29.8$_{\pm 0.3}$ & 43.2$_{\pm 0.5}$ & 39.9$_{\pm 0.8}$ / 53.0$_{\pm 0.6}$ \\
DAPO & 79.8$_{\pm 0.4}$ & 20.0$_{\pm 1.9}$ / 46.7$_{\pm 0.0}$ & 23.3$_{\pm 0.0}$ / \textbf{50.0}$_{\pm 0.0}$ & 60.0$_{\pm 0.0}$ / 90.0$_{\pm 1.4}$ & 30.1$_{\pm 0.4}$ & 43.8$_{\pm 0.5}$ & 42.8$_{\pm 0.5}$ / 62.2$_{\pm 0.5}$ \\
GMPO & 76.6$_{\pm 0.6}$ & 16.7$_{\pm 0.0}$ / 43.3$_{\pm 1.9}$ & 23.3$_{\pm 1.9}$ / 46.7$_{\pm 0.0}$ & 62.5$_{\pm 1.4}$ / 87.5$_{\pm 0.0}$ & 30.9$_{\pm 0.5}$ & 44.8$_{\pm 0.6}$ & 42.5$_{\pm 0.8}$ / 59.2$_{\pm 0.6}$ \\
APMPO & \textbf{81.6}$_{\pm 0.3}$ & \textbf{23.3}$_{\pm 0.0}$ / \textbf{50.0}$_{\pm 0.0}$ & \textbf{26.7}$_{\pm 0.0}$ / \textbf{50.0}$_{\pm 0.0}$ & \textbf{65.0}$_{\pm 0.0}$ / \textbf{92.5}$_{\pm 0.0}$ & \textbf{32.7}$_{\pm 0.2}$ & \textbf{46.6}$_{\pm 0.4}$ & \textbf{46.0}$_{\pm 0.2}$ / \textbf{64.2}$_{\pm 0.0}$ \\
\bottomrule
\end{tabular}
\caption{Experimental results on multiple mathematical reasoning benchmarks. The results are reported as mean and standard deviation across 3 random seeds (format: $\text{Mean}_{\pm \text{Std}}$). The best results are highlighted in bold. Note that ``$\cdot/\cdot$'' indicates ``Pass@1/Pass@16'', and the last column indicates ``Average Pass@1 / Average Pass@16''.}
\label{tab:exp}
\end{table*}
\section{Experiments}\label{sec:exp}

In this study, we conducted a systematic evaluation to address the following research questions:

\textbf{RQ1:} How does APMPO compare to state-of-the-art RLVR-based baselines? \textbf{RQ2:} Does APMPO achieve superior training performance than previous methods? \textbf{RQ3:} What are the contributions of PMPO and FAC to overall performance? \textbf{RQ4:} How sensitive is APMPO to different values of $\gamma$, $\epsilon_{\text{min}}$, and $\epsilon_{\text{max}}$?

\subsection{Settings}

\noindent \textbf{Models.} In this work, Qwen2.5-Math-1.5B-Instruct \cite{yang2024qwen21}, Qwen2.5-3B-Instruct \cite{yang2024qwen2}, and DeepSeek-R1-Distill-Qwen-1.5B \cite{guo2025deepseek} were employed for mathematical reasoning. For SQL generation and multi-modal reasoning, we utilized Qwen2.5-Coder-3B-Instruct \cite{hui2024qwen2} and Qwen2.5-VL-3B-Instruct \cite{bai2025qwen2}, respectively.

\noindent \textbf{Datasets.} For mathematical reasoning, training was performed on the MATH dataset \cite{hendrycks2021measuring}. Moreover, MATH500 \cite{hendrycks2021measuring}, AIME24 \cite{li2024numinamath}, AIME25 \cite{codeforcesamerican}, AMC23 \cite{ouyang2022training}, Minerva \cite{lewkowycz2022solving}, and OlympiadBench \cite{huang2024olympicarena} were used for evaluation. For SQL generation, BIRD-Train \cite{li2024can} served as the training set, and we evaluated on Spider-Dev \cite{yu2018spider} and BIRD-Dev \cite{li2024can}. For multi-modal reasoning, we used Geometry3K \cite{lu2021inter}, which included dedicated training and test subsets. Further details of models and datasets are given in Appendix \ref{appendix:reference}.

\noindent \textbf{Implementation Details.}
For all experiments, we employed a binary reward function where each response received a reward of $1$ if it was correct and $0$ otherwise. During training, the coefficient of KL loss term was $\beta=0.001$. The batch size and the number of rollouts were $512$ and $8$, and we used a sampling temperature of $1.0$. We used the AdamW optimizer \cite{zhou2024towards} with a learning rate of $1 \times 10^{-6}$ and trained for $400$ steps. For APMPO, $\gamma$, $\epsilon_\text{min}$, $\epsilon_\text{max}$, $\epsilon_{\text{low}}$, and $\delta$ were fixed at $0.8$, $0.2$, $0.4$, $0.2$, and $1 \times 10^{-6}$, respectively. During evaluation, the sampling temperature was set to $0.6$, and Pass@1 was used as the primary metric. Pass@16 was also reported for small-sized datasets (\textit{i.e.}, AIME24, AIME25, and AMC23). For SQL generation, the reward was based on execution accuracy. All experiments were conducted on four NVIDIA GeForce A100 40GB GPUs.

\begin{figure*}[t]
\centering
\subfigure[Experimental results on Spider-Dev and BIRD-Dev by using different RL-based methods.]{
		\includegraphics[scale=0.35]{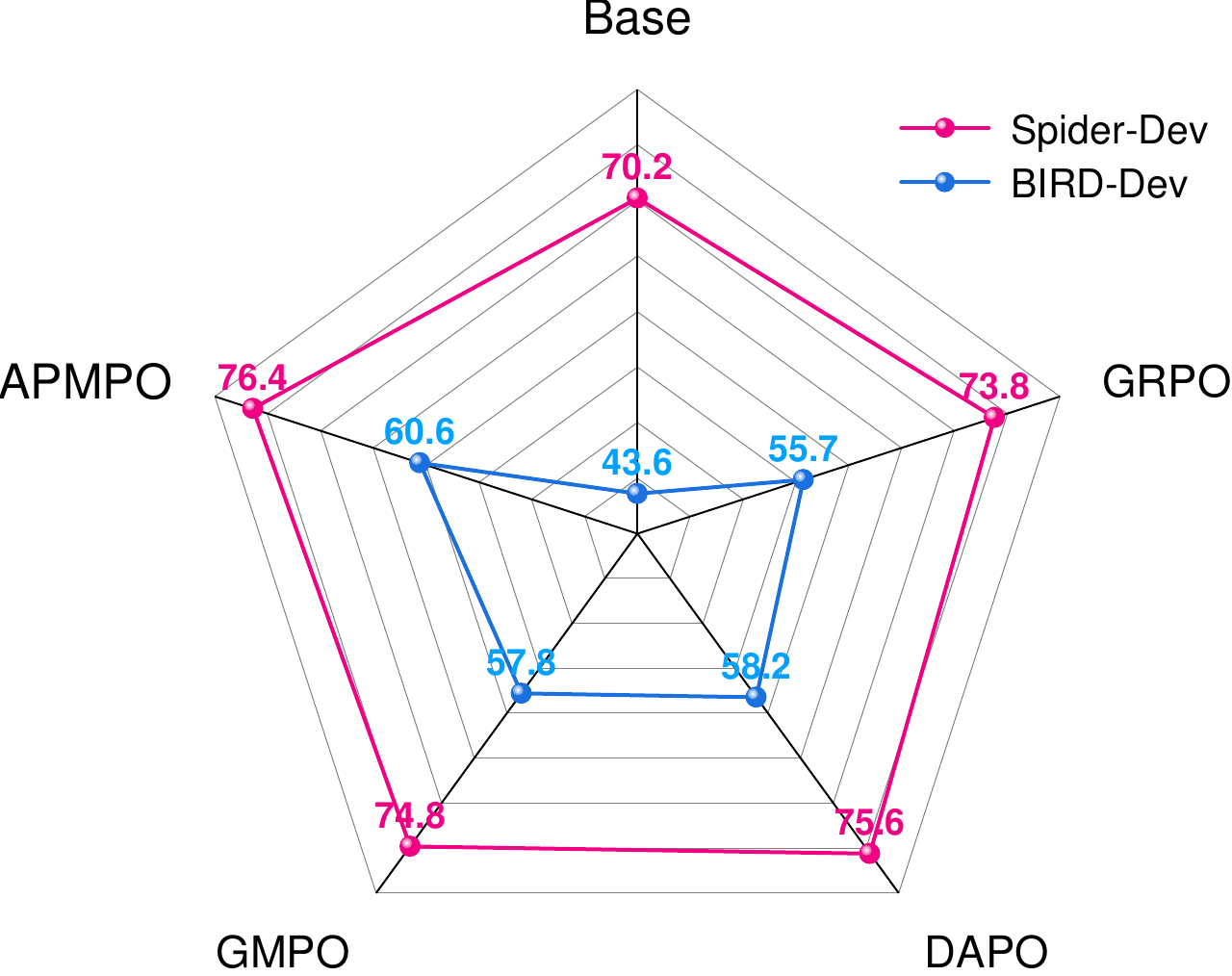}}
\subfigure[Experimental results on Geometry3K by using different RL-based methods.]{
		\includegraphics[scale=0.35]{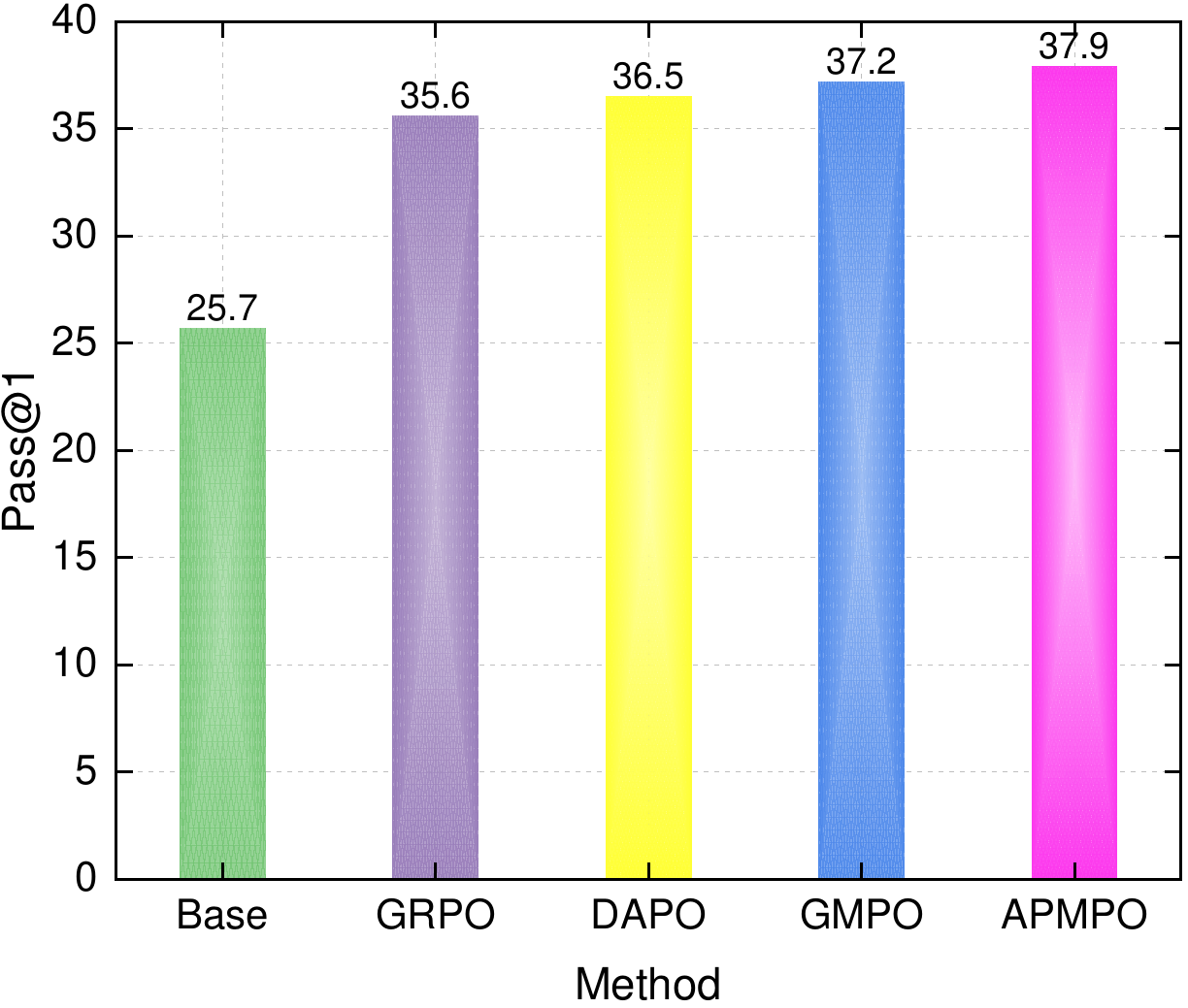}}
\caption{Experimental results on (a) SQL generation, and (b) multi-modal reasoning tasks.}
\label{fig:geoSQL}
\end{figure*}

\subsection{Experimental Results}

\subsubsection{Main Results (RQ1)}

As presented in Table \ref{tab:exp}, APMPO consistently outperforms all baselines across mathematical reasoning benchmarks. For instance, on Qwen2.5-Math-1.5B-Instruct, APMPO achieves an average score of 41.7, surpassing the strongest baseline (DAPO) by 2.1 points. This consistent superiority across diverse model sizes and architectures confirms the superiority of APMPO. Additionally, we evaluate APMPO on SQL generation and multi-modal reasoning to showcase its broad applicability. The results in Figure \ref{fig:geoSQL} show consistent improvements in SQL generation and multi-modal reasoning tasks. For instance, APMPO achieves the highest Pass@1 scores of 75.4 and 57.6 on Spider-Dev and BIRD-Dev, respectively. Further results are shown in Appendix \ref{appendix:geoSQL}.

\subsubsection{Analysis of Training Dynamics (RQ2)}

\noindent \textbf{Training Rewards and Entropy.} Figure~\ref{fig:training}(a) indicates that APMPO (red curve) achieves higher training rewards than all baselines, which correlates with the policy entropy trends in Figure~\ref{fig:training}(b). While other methods exhibit a notable entropy collapse, APMPO maintains higher entropy throughout the majority of training. This sustained entropy signifies better regularization, allowing the model to explore diverse reasoning paths before converging. In essence, APMPO alleviates the early collapse of GRPO and and mitigates GMPO’s overly conservative exploration, leading to stronger final performance. Further analysis on the output diversity of APMPO is shown in Appendix \ref{sec:diversity_analysis}.

\noindent \textbf{Training Efficiency.} 
Figure~\ref{fig:training}(c) presents the computational efficiency of all compared methods on Qwen2.5‑Math-1.5B-Instruct. The adaptive mechanisms in APMPO are computationally lightweight, where computing $p$ and $\epsilon_{\text{ada}}$ requires only batch-level statistics (\textit{i.e.}, mean and standard deviation). As a result, APMPO incurs negligible computational overhead while delivering substantial performance gains. A further analysis on training efficiency is provided in Appendix \ref{appendix:complexity_analysis}.

\subsubsection{Ablation Study (RQ3)}

\begin{figure*}[h] 
	\centering  
	\subfigure[Experimental results using different components of APMPO.]{
		\includegraphics[width=0.31\linewidth]{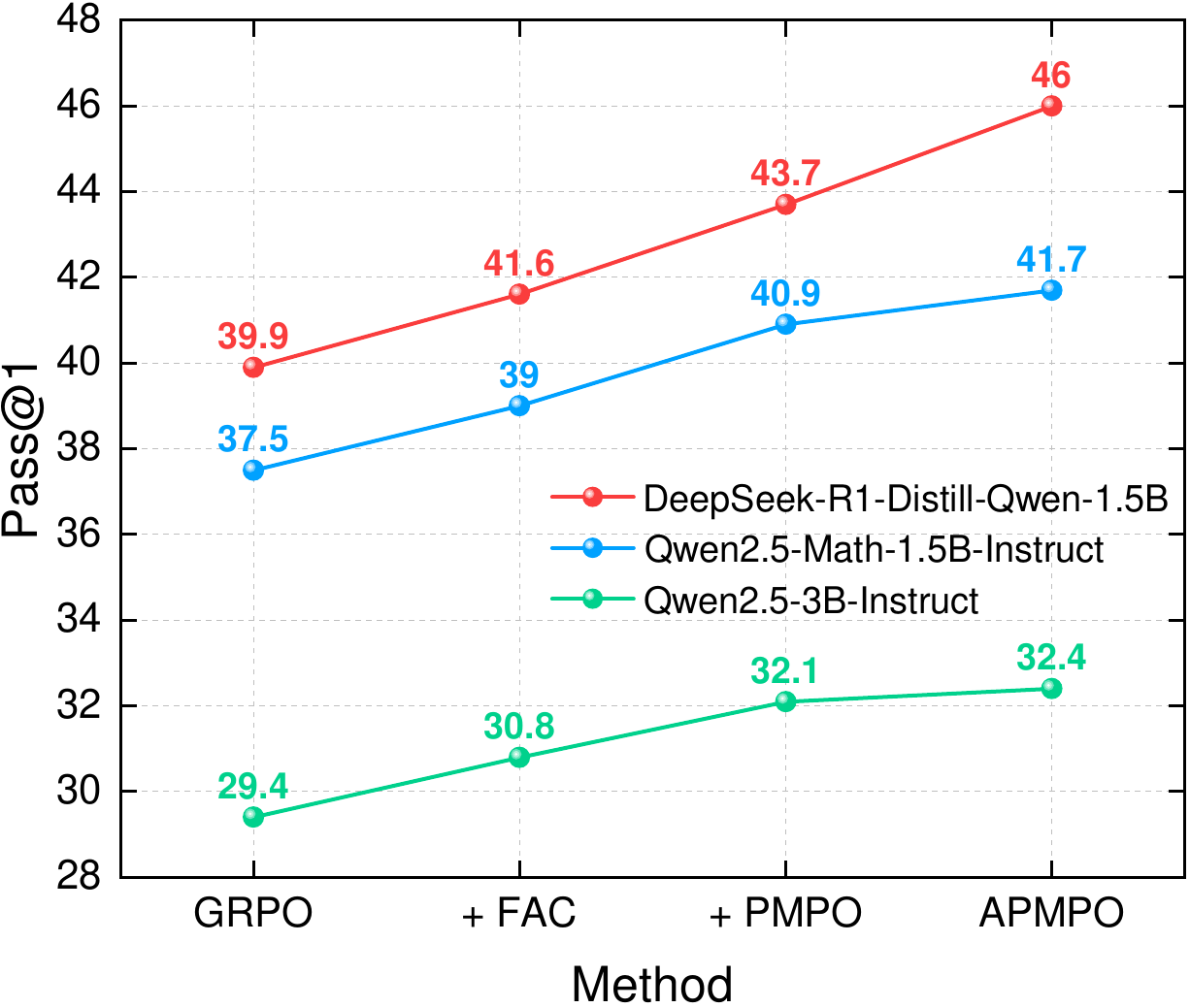}}
	\quad 
	\subfigure[Experimental results using different variants of FSS.]{
		\includegraphics[width=0.31\linewidth]{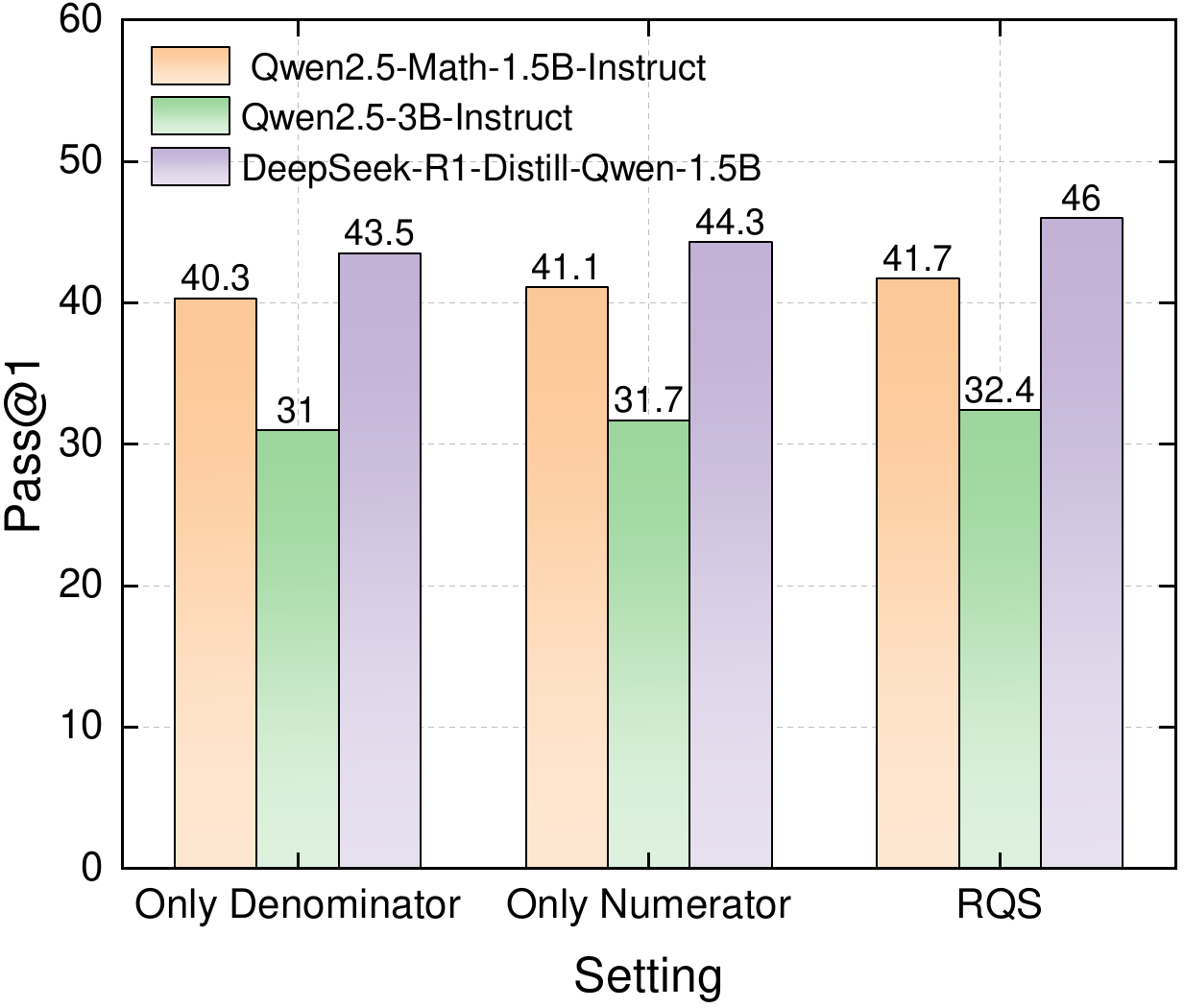}}
    \subfigure[Experimental results using different variants of adaptive exponent function.]{
		\includegraphics[width=0.31\linewidth]{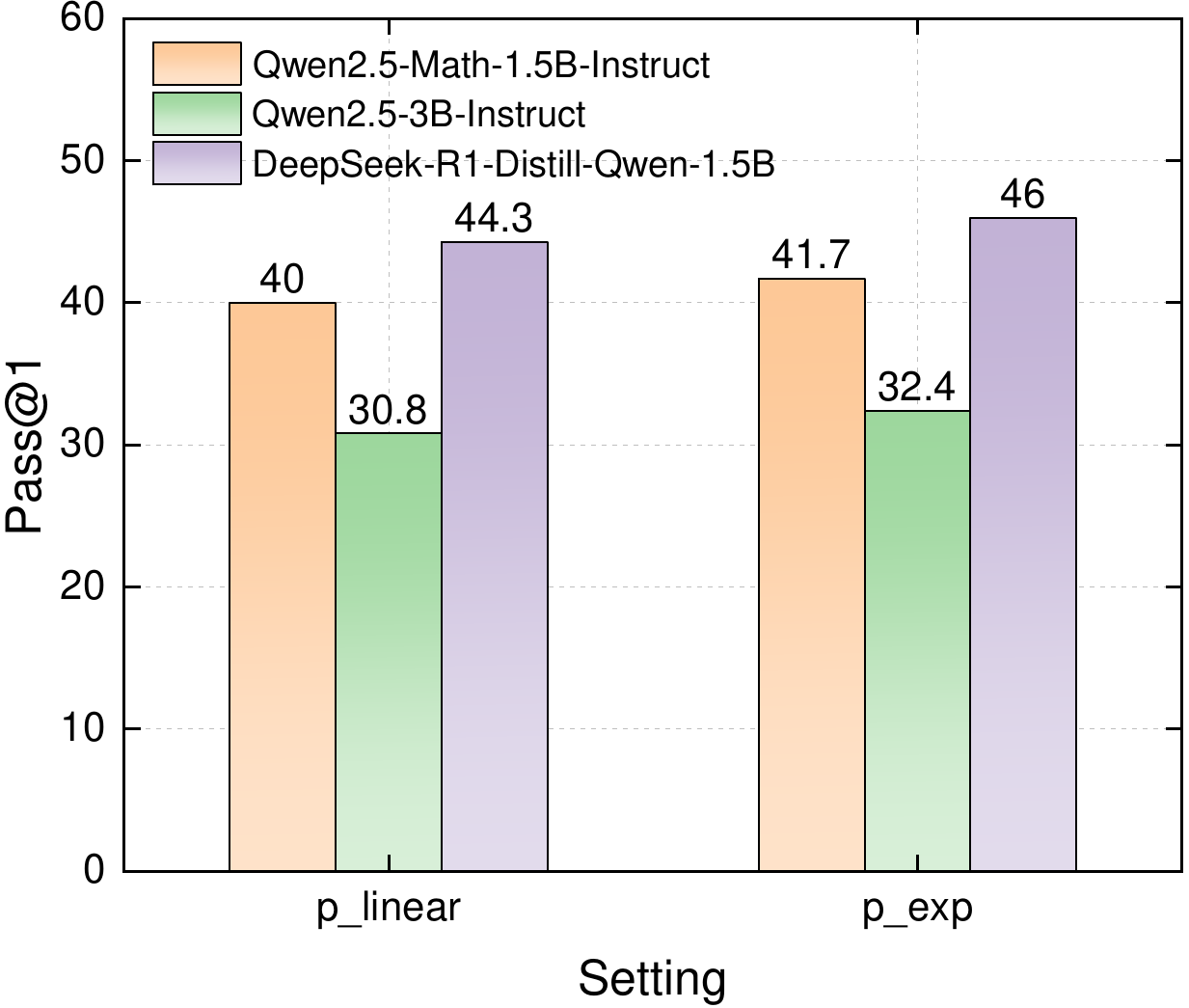}}
	\subfigure[Experimental results using different values of $\gamma$.]{
		\includegraphics[width=0.31\linewidth]{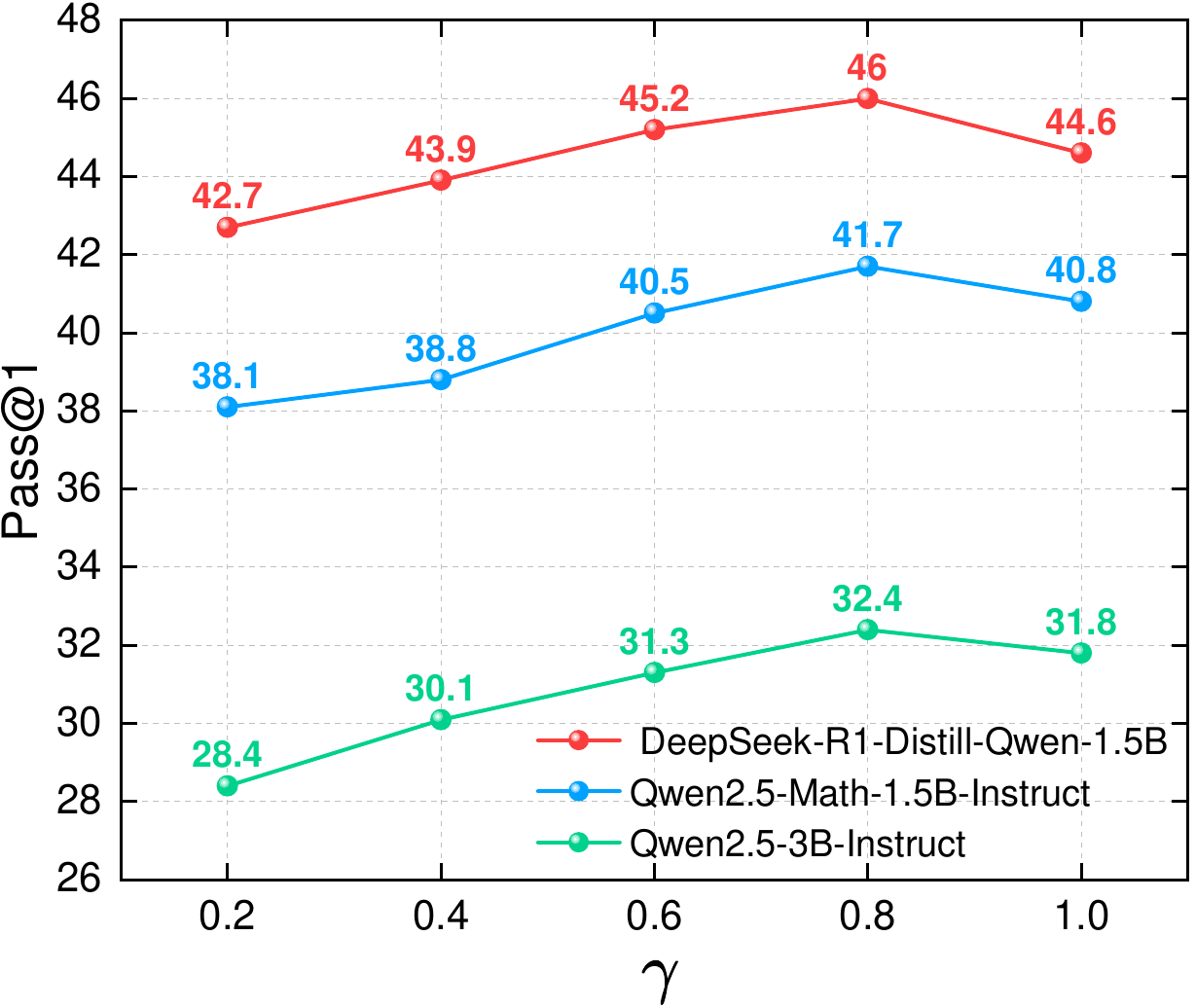}}
	\quad
	\subfigure[Experimental results using different values of $(\epsilon_\text{min},\epsilon_\text{max})$.]{
		\includegraphics[width=0.31\linewidth]{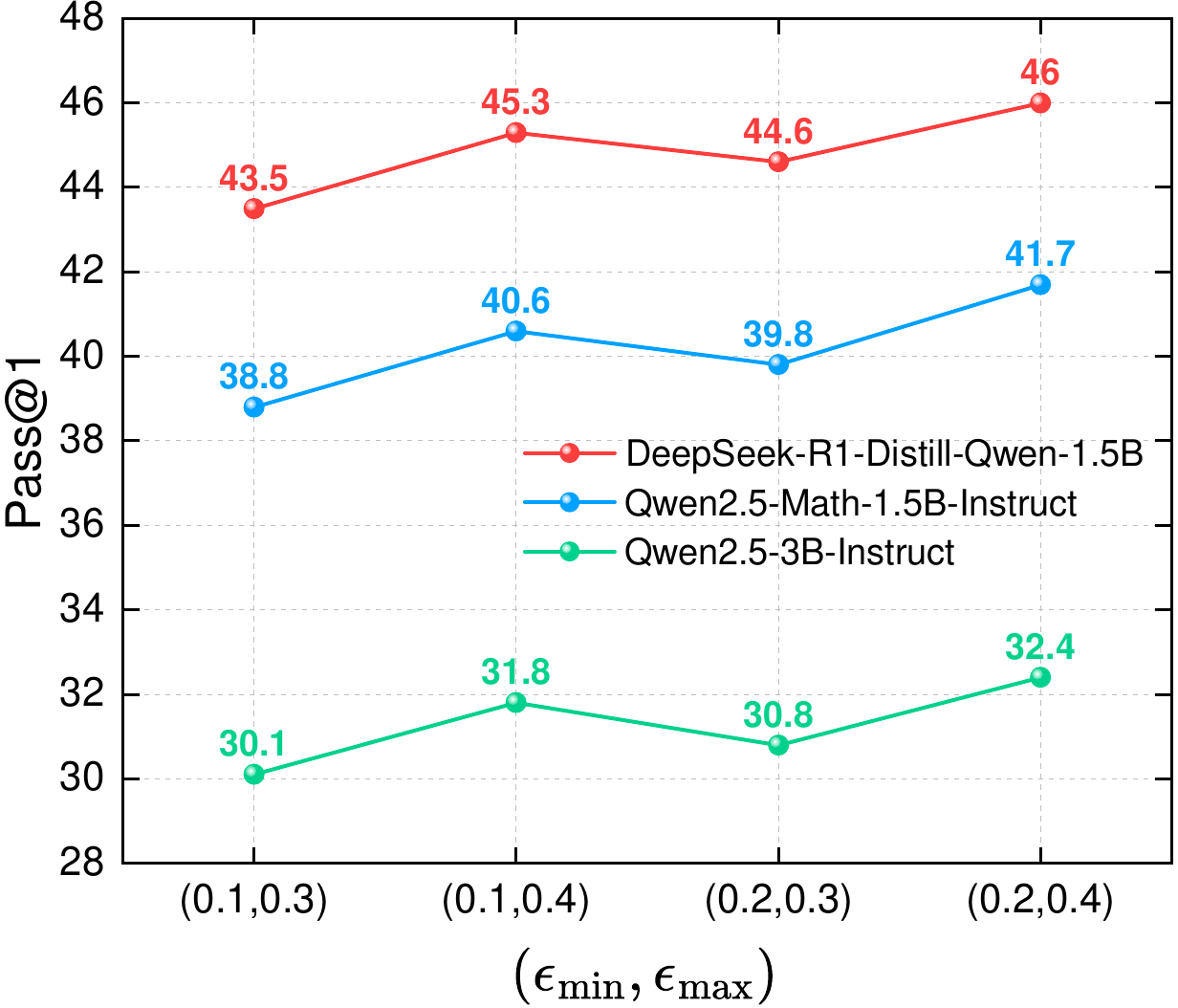}}
	\caption{Ablation studies and sensitivity analysis. Results are reported as average Pass@1 scores on mathematical reasoning benchmarks. (a) Contribution of each component (PMPO and FAC). (b-c) Effectiveness of different design choices for FSS and the adaptive exponent function. (d-e) Sensitivity analysis for $\gamma$ and $(\epsilon_\text{min},\epsilon_\text{max})$.}
	\label{fig:abl}
\end{figure*}

\noindent \textbf{Efficacy of PMPO and FAC.} As shown in Figure \ref{fig:abl}(a), an ablation study is performed on mathematical reasoning to quantify the contribution of each component. Starting from GRPO, incorporating FAC alone yields consistent performance gains, indicating that adaptively adjusting clipping bounds benefited training. In contrast, applying PMPO alone leads to larger improvements, suggesting that handling the exploration-stability trade-off via an adaptive objective is the primary source of these gains. Combining PMPO and FAC achieves the highest scores across all benchmarks, thereby showcasing a clear synergistic effect. A more in-depth analysis and additional results are shown in Appendices \ref{appendix:synergy} and \ref{appen:component}, respectively.

\noindent \textbf{Ablation of FSS Components.} The full FSS is compared with variants using only the numerator ($\mu_{R}$) and the denominator ($1/\sigma_{R}$). Results in Figure \ref{fig:abl}(b) reveal the superiority of the complete FSS. Specifically, relying solely on $\mu_{R}$ offers limited sensitivity to variations in reward stability. Conversely, dependence only on $1/\sigma_{R}$ favors batches where reward signals show minimal variation, potentially promoting consistent failure. These findings indicate that the efficacy of FSS stems from its specific formulation, which amplifies high-confidence successful signals while suppressing unreliable or consistently incorrect outcomes. Further analyses of FSS and additional results are shown in Appendices \ref{appendix:FSS} and \ref{appen:FSS}, respectively.

\noindent \textbf{Ablation of Adaptive Exponent Formulation.} The exponential-decay formulation for the adaptive exponent $p$ (Eq. (\ref{eq:dynamic_p})) is compared against a simplified linear-decay variant defined as:
\begin{equation}
   p_\text{linear}=1-\gamma \cdot \mu_R
\end{equation}

Figure~\ref{fig:abl}(c) shows that the exponential‑decay formulation outperforms the linear‑decay variant. This result suggests that the efficacy of PMPO stems from its smooth and asymptotic transition, which avoids abrupt shifts in the learning objective and supports stable policy updates. Further analysis and additional results are provided in Appendices \ref{appendix:p} and \ref{appendix:exponent}, respectively.

\subsubsection{Sensitivity Analysis (RQ4)}
\noindent \textbf{$\gamma$ in PMPO.} 
A sensitivity analysis is performed to assess the impact of $\gamma$ in PMPO. We vary $\gamma$ and report average Pass@1 scores on mathematical reasoning datasets. As shown in Figure \ref{fig:abl}(d), a small $\gamma$ induces a slow transition, where the objective remains GRPO-like for an extended period. Under this setting, heightened sensitivity to reward outliers leads to constrained exploration. Conversely, a large $\gamma$ induces a fast shift to a GMPO-like objective, thereby excessively suppressing high-reward outliers and risking early convergence to suboptimal solutions. Based on the results, $\gamma=0.8$ is adopted in this study. More experimental results are provided in Appendix \ref{appen:gamma}.

\noindent \textbf{$(\epsilon_\text{min}, \epsilon_\text{max})$ in FAC.} We further assess the reasoning performance of APMPO under variations in the clipping bounds ($\epsilon_{\text{min}}$, $\epsilon_{\text{max}}$). Figure \ref{fig:abl}(e) illustrates Pass@1 scores on mathematical reasoning datasets. While APMPO reaches its peak performance using the clipping bound $(0.2, 0.4)$, other tested alternatives produces comparable results, including the wider range of $(0.1,0.4)$ and the narrower range of $(0.2,0.3)$. These findings demonstrate that APMPO remains robust to the choice of clipping bounds. In this work, $(0.2, 0.4)$ is selected, and more results are provided in Appendix \ref{appen:epsilon}.

\section{Conclusion}\label{sec:con}

This work introduces APMPO, which is designed to enhance the reasoning capabilities of LLMs from an adaptive perspective. To overcome the limitations of static objective functions and rigid clipping strategies in previous methods, APMPO incorporates Power-Mean Policy Optimization (PMPO) and Feedback-Adaptive Clipping (FAC). Extensive experiments on several reasoning benchmarks validate the efficacy of APMPO, showcasing a superior learning process compared to state-of-the-art RLVR-based baselines.

\section*{Limitations}

Despite the promising results achieved by APMPO, there are several limitations that warrant further attention. (1) Due to limited computational resources, the experiments were constrained to models with 1.5B and 3B parameters. However, the observed results consistently demonstrate the effectiveness of APMPO across these scales. Meanwhile, it is expected that the proposed adaptive mechanisms would yield comparable or even greater benefits when applied to larger models exhibiting more complex emergent behaviors. (2) While our experiments span multiple domains, the core of the method depends on the availability of accurate and verifiable outcome‑based rewards. The applicability of APMPO to domains where such reward signals are difficult to define remains a promising direction for future work.

\section*{Ethical Considerations}
We provide the following ethical statements: (1) The efficacy of APMPO was evaluated using open‑source LLMs. These models require substantial computational resources, which may contribute to increased carbon dioxide emissions and high energy consumption. (2) All fine‑tuned models in this work are derived from publicly released, open‑source architectures. No proprietary or confidential models were used. (3) The datasets used for training and evaluation are publicly available, ensuring that no data privacy concerns arise.

\section*{Acknowledgments} 
This work is supported by the National Key Research and Development Program of China under Grant 2023YFB3106504, the National Natural Science Foundation of China under project (No. 62472126), Shenzhen Science and Technology Program under Grant ZDSYS20210623091809029, the Major Key Project of PCL under Grant PCL2024A04 and PCL2025A16, and CCF-Huawei Populus Grove Fund.

\bibliography{custom}

\newpage
\appendix

\newpage
\section{Appendix A: Description of the Compared Methods} \label{appendix:compared}

This section provides a formal description of the policy optimization methods compared in our work. All methods estimate advantage based on a group of sampled rollouts, thereby eliminating the need for a separate value model.

\subsection{GRPO}
GRPO is the foundational method, which first samples $G$ responses for each query, assigns rewards $R$ via a rule-based reward function, and estimates the corresponding advantages as in Eq. (\ref{eq:group_advantage}). Finally, the model parameters are updated as follows:
\begin{equation}
    \begin{aligned}
    \mathcal{J}(\theta) &= 
    \frac{1}{G} \sum_{i=1}^G \frac{1}{|o_i|} \sum_{t=1}^{|o_i|} 
    \min\Big( r_{i,t} \hat{A}_{i},\,  \\
    &\text{clip}(r_{i,t}, 1-\epsilon, 1+\epsilon) \hat{A}_{i} \Big)
    \end{aligned}
    \label{eq:GRPO}
\end{equation}
where $\epsilon$ was fixed at $0.2$ in our implementation.

\subsection{DAPO}
The primary innovations in DAPO lie in advanced clipping mechanism and the Token Level Mean loss (TLM). Specifically, DAPO decouples the clipping range into asymmetric bounds, which focuses on enhancing exploration and mitigating entropy collapse. Moreover, since longer sequences exert a disproportionate influence on gradient update in GRPO, DAPO standardizes token counts across the entire batch. Collectively, the objective function of DAPO is formally given as:
\begin{equation}
    \begin{aligned}
    \mathcal{J}(\theta) &= 
    \frac{1}{\sum_{i=1}^G |o_i|} \sum_{i=1}^G \sum_{t=1}^{|o_i|} 
    \min\Big( r_{i,t} \hat{A}_{i},\, \\ 
    &\text{clip}( r_{i,t}, 1-\epsilon_{\text{low}}, 1+\epsilon_{\text{high}}) \hat{A}_{i} \Big)
    \end{aligned}
    \label{eq:DAPO}
\end{equation}
where the upper clipping bound $\epsilon_{\text{high}}$ is set higher than the lower bound $\epsilon_{\text{low}}$. Following the original work \cite{yu2025dapo}, $\epsilon_{\text{low}}$ and $\epsilon_{\text{high}}$ were set to $0.2$ and $0.28$, respectively.

\subsection{GMPO}
GMPO addresses the training instability caused by outlier rewards by changing the optimization objective. Instead of maximizing the arithmetic mean of token-level importance-weighted rewards, GMPO maximizes their geometric mean. While the advantage calculation remains the same as in Eq. (\ref{eq:group_advantage}), the objective function for a single response $o_i$ is conceptually structured to optimize for the geometric mean of importance sampling ratios:
\begin{equation}
    \begin{aligned}
    \mathcal{J}(\theta) &= 
    \frac{1}{G} \sum_{i=1}^G \Bigg\{ \prod_{t=1}^{|o_i|} 
    \min\Big( r_{i,t}^{\text{sgn}(\hat{A}_i)},  \\
    &\text{clip}(r_{i,t}^{\text{sgn}(\hat{A}_i)}, \epsilon_1, \epsilon_2) \Big) \Bigg\}^{\frac{1}{|o_i|}} \hat{A}_{i}
    \end{aligned}
    \label{eq:GMPO}
\end{equation}
where $\epsilon_1$ and $\epsilon_2$ were set to $e^{-0.4}$ and $e^{0.4}$, respectively. This structural change provides more stable policy updates in the presence of reward outliers. 

\section{Appendix B: Details of Models and Benchmarks} \label{appendix:reference}

In the following, the links regarding the models and benchmarks used in this work are provided. Notably, all datasets adopted in this study are publicly available under the CC BY-SA 4.0 license.

(1) \textbf{Models:}
\begin{itemize}
    \item Qwen2.5-Math-1.5B-Instruct:\\
    \url{https://huggingface.co/Qwen/Qwen2.5-Math-1.5B-Instruct}
    \item Qwen2.5-3B-Instruct:\\
    \url{https://huggingface.co/Qwen/Qwen2.5-3B-Instruct}
    \item DeepSeek-R1-Distill-Qwen-1.5B:\\\url{https://huggingface.co/deepseek-ai/DeepSeek-R1-Distill-Qwen-1.5B}
    \item Qwen2.5-Coder-3B-Instruct:\\
    \url{https://huggingface.co/Qwen/Qwen2.5-Coder-3B-Instruct}
    \item Qwen2.5-VL-3B-Instruct:\\
    \url{https://huggingface.co/Qwen/Qwen2.5-VL-3B-Instruct}
\end{itemize}

(2) \textbf{Benchmarks:}
\begin{itemize}
    \item MATH: (12000 training samples)\\
    \url{https://huggingface.co/datasets/HuggingFaceH4/MATH}
    \item MATH500: (500 evaluation samples)\\
    \url{https://huggingface.co/datasets/HuggingFaceH4/MATH-500}
    \item AIME24: (30 evaluation samples)\\
    \url{https://huggingface.co/datasets/HuggingFaceH4/aime_2024}
    \item AIME25: (30 evaluation samples)\\ 
    \url{https://huggingface.co/datasets/HuggingFaceH4/aime_2025}
    \item AMC23: (40 evaluation samples)\\
    \url{https://huggingface.co/datasets/math-ai/amc23}
    \item Minerva: (272 evaluation samples)\\
    \url{https://huggingface.co/datasets/svc-huggingface/minerva-math}
    \item OlympiadBench: (674 evaluation samples)\\
    \url{https://huggingface.co/datasets/knoveleng/OlympiadBench}
    \item Spider: (1034 evaluation samples)\\
    \url{https://yale-lily.github.io/spider}
    \item BIRD: (9428 and 1534 samples for training and evaluation\\
    \url{https://bird-bench.github.io/}
    \item Geometry3K: (2100 and 601 samples for training and evaluation) \\
    \url{https://huggingface.co/datasets/hiyouga/geometry3k}
\end{itemize}

\section{Appendix C: Gradient Analysis of GRPO and GMPO} \label{sec:appendix_example}

\newtheorem{theorem}{\bf Theorem}
\newtheorem{definition}{\bf Definition}
\newtheorem{proposition}{\bf Proposition}
\newtheorem{lemma}{\bf Lemma}
\newtheorem{argument}{\bf Argument}
\newtheorem{conclusion}{\bf Conclusion}
\newtheorem{Proof}{\bf Proof}
\newtheorem{assumption}{\bf Assumption}
\newtheorem{remark}{\bf Remark}
\newtheorem{corollary}{\bf Corollary}

\newcommand{\adv}{\hat{A}_i}
\newcommand{\absadv}{|\hat{A}_i|}
\newcommand{\rhoo}{\rho_{i,t}(\theta)}
\newcommand{\psii}{\psi_{i,t}(\theta)}
\newcommand{\oldpi}{\pi_{\text{old}}}
\newcommand{\newpi}{\pi_{\theta}}

In this section, we provide a gradient derivation for GRPO and GMPO. We omit clipping mechanisms and KL penalties to isolate the impact of each objective function on the learning signal, specifically regarding sensitivity to high-reward outliers.

\subsection{Preliminaries}
Let $G$ denote the group size. For the $i$-th sample $o_i$ sampled from the group ($i \in \{1, \dots, G\}$), let $|o_i|$ be its token length. We define $r_{i,t}(\theta) = \frac{\newpi(o_{i,t}|o_{i,<t})}{\oldpi(o_{i,t}|o_{i,<t})}$ and $\psii = \nabla_\theta \log \newpi(o_{i,t}|o_{i,<t})$.

\subsection{Gradient Derivation of GRPO (Arithmetic Mean)}

\begin{definition}[\textbf{GRPO Objective}]
The objective function of GRPO based on arithmetic mean is defined as:
\begin{equation}
    \mathcal{J}_{\text{GRPO}}(\theta) = \frac{1}{G} \sum_{i=1}^G \left( \frac{1}{|o_i|} \sum_{t=1}^{|o_i|} r_{i,t}(\theta) \right) \cdot \adv 
\end{equation}
\end{definition}

\begin{proposition}[\textbf{Linear Scaling of GRPO}]
The gradient of $\mathcal{J}_{\text{GRPO}}$ scales linearly with the advantage value $\adv$.
\end{proposition}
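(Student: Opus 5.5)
The plan is to differentiate $\mathcal{J}_{\text{GRPO}}$ directly, treating $\adv$ as a constant with respect to $\theta$. This is legitimate because $\adv$ is computed from the rewards $R_i$ of responses already sampled from $\oldpi$, as in Eq.~(\ref{eq:group_advantage}). With that in hand, linearity of the gradient lets us pull $\adv$ and the normalizing constants $\frac{1}{G}$ and $\frac{1}{|o_i|}$ outside, so that
\begin{equation*}
\nabla_\theta \mathcal{J}_{\text{GRPO}}(\theta) = \frac{1}{G}\sum_{i=1}^G \frac{\adv}{|o_i|}\sum_{t=1}^{|o_i|} \nabla_\theta r_{i,t}(\theta).
\end{equation*}

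Next, we rewrite each token-level term using the log-derivative identity. Since $\oldpi$ does not depend on $\theta$, we have $\nabla_\theta r_{i,t}(\theta) = r_{i,t}(\theta)\,\nabla_\theta \log \newpi(o_{i,t}\mid o_{i,<t}) = r_{i,t}(\theta)\,\psii$. The per-sample contribution is therefore
\begin{equation*}
g_i(\theta) = \frac{\adv}{G\,|o_i|}\sum_{t=1}^{|o_i|} r_{i,t}(\theta)\,\psii,
\end{equation*}
which is $\adv$ multiplied by a vector that depends only on the token ratios and score functions, and not on the reward.

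From this expression, linear scaling follows at once. Replacing $\adv$ by $c\,\adv$ multiplies $g_i$ by $c$, so each sample's push on the policy is proportional to its advantage. In particular, an outlier sample with a large $|\adv|$ contributes proportionally more to the update, which is the sensitivity claimed in the motivation. At the on-policy point $\theta=\theta_{\text{old}}$ we have $r_{i,t}=1$, and the gradient reduces to $\frac{\adv}{G|o_i|}\sum_t \psii$, which makes the linear dependence fully explicit.

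There is no real technical obstacle here. The only subtle point is justifying that $\adv$ is a constant, since it is normalized by group statistics. I will address this by noting that $\mu_R$ and $\sigma_R$ depend only on the fixed sampled rewards and not on $\theta$. The claim should also be read as linearity in $\adv$ for fixed sampled trajectories, because $r_{i,t}$ still depends on $\theta$.
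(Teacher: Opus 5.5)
Your proposal is correct and follows essentially the same route as the paper: you differentiate term by term with $\hat{A}_i$ held fixed, apply the log-derivative identity $\nabla_\theta r_{i,t}(\theta) = r_{i,t}(\theta)\,\nabla_\theta \log \pi_\theta(o_{i,t}\mid o_{i,<t})$, and read off that each sample's contribution is $\hat{A}_i$ times a reward-independent vector. The paper leaves implicit that $\hat{A}_i$ does not depend on $\theta$, because $\mu_R$ and $\sigma_R$ come from already-sampled rewards, so your explicit justification of this point is a small but welcome addition.
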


\begin{proof}
Differentiating $\mathcal{J}_{\text{GRPO}}$ with respect to $\theta$:
\begin{equation}
    \begin{aligned}
    \nabla_\theta \mathcal{J}_{\text{GRPO}} &= \frac{1}{G} \sum_{i=1}^G \frac{\adv}{|o_i|} \sum_{t=1}^{|o_i|} \nabla_\theta r_{i,t}(\theta) \\
    &= \frac{1}{G} \sum_{i=1}^G \frac{\adv}{|o_i|} \sum_{t=1}^{|o_i|} \frac{\nabla_\theta \newpi(\cdot)}{\oldpi(\cdot)}
\end{aligned}
\end{equation}

Using the identity $\frac{\nabla \pi}{\pi_{old}} = \frac{\pi}{\pi_{old}} \nabla \log \pi = r_{i,t}(\theta) \psii$ yields:
\begin{equation}
    \nabla_\theta \mathcal{J}_{\text{GRPO}} \approx \frac{\hat{A}_i}{|o_i|} \sum_{t=1}^{|o_i|} \underbrace{r_{i,t}(\theta)}_{\text{Local Weight}} \cdot \psii
\end{equation}

Therefore, the gradient magnitude is directly proportional to $\adv$, and outliers with large $\adv$ disproportionately dominate the gradient update.
\end{proof}

\subsection{Gradient Derivation of GMPO (Geometric Mean)}

\begin{definition}[\textbf{GMPO Objective}]
GMPO calculates the geometric mean of the importance sampling ratios over the sequence length, scaled by the advantage. This formulation fundamentally alters the weighting mechanism of individual token gradients compared to GRPO.
\begin{equation}
    \mathcal{J}_{\text{GMPO}}(\theta) = \frac{1}{G} \sum_{i=1}^G \underbrace{\left\{ \prod_{t=1}^{|o_i|} r_{i,t}(\theta) \right\}^{\frac{1}{|o_i|}}}_{U_i(\theta)} \cdot \hat{A}_i
\end{equation}
\end{definition}

\begin{proposition}[\textbf{Global vs. Local Gradient Weighting}]
The gradient derivations reveal a structural divergence in weighting mechanisms. While GRPO assigns a \textit{local weight} (\textit{i.e.}, $r_{i,t}$) to each gradient step, GMPO assigns a \textit{global weight} (\textit{i.e.}, $U_i$) to every token in the sequence.
\end{proposition}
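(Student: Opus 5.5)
The plan is to differentiate $\mathcal{J}_{\text{GMPO}}$ directly and compare the result with the GRPO gradient obtained in the preceding proposition. The aim is to show that every token's score function $\psii$ in sequence $o_i$ is multiplied by the same sequence-level factor $U_i(\theta)$, whereas in GRPO it carries its own factor $r_{i,t}(\theta)$. As in the GRPO derivation, clipping and the KL term are dropped, and $\adv$ is treated as a constant with respect to $\theta$.

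\textbf{Step 1: differentiate $U_i$ via the log-derivative trick.} Write
\begin{equation}
\log U_i(\theta) = \frac{1}{|o_i|}\sum_{t=1}^{|o_i|} \log r_{i,t}(\theta).
\end{equation}
Since $\pi_{\text{old}}$ does not depend on $\theta$, we have $\nabla_\theta \log r_{i,t}(\theta) = \psii$. Using $\nabla U_i = U_i \nabla \log U_i$ then gives
\begin{equation}
\nabla_\theta U_i(\theta) = U_i(\theta)\,\frac{1}{|o_i|}\sum_{t=1}^{|o_i|}\psii ,
\end{equation}
and therefore
\begin{equation}
\nabla_\theta \mathcal{J}_{\text{GMPO}} = \frac{1}{G}\sum_{i=1}^G \frac{\adv}{|o_i|}\sum_{t=1}^{|o_i|} \underbrace{U_i(\theta)}_{\text{Global Weight}}\cdot\psii .
\end{equation}

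\textbf{Step 2: place the two gradients side by side.} Match this expression term by term with the GRPO gradient. Both have the form $\frac{\adv}{|o_i|}\sum_t w_{i,t}\psii$. GRPO uses $w_{i,t}=r_{i,t}(\theta)$, which depends on $t$. GMPO uses $w_{i,t}=U_i(\theta)=\big(\prod_s r_{i,s}\big)^{1/|o_i|}$, which is independent of $t$ and shared across the whole sequence. This is exactly the local-versus-global distinction the statement asserts.

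\textbf{Step 3: interpret the result.} I would note that $U_i$ is the geometric mean of the ratios. It therefore couples all tokens: one small $r_{i,s}$ lowers the weight of every token in $o_i$. By the AM--GM inequality $U_i \le \frac{1}{|o_i|}\sum_t r_{i,t}$, so the aggregate weight is never larger than GRPO's. This supports the ``global filter'' reading from the preliminary analysis.

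The only subtle point, which I expect to be the main obstacle, is being precise about what ``weight'' means: the claim concerns the coefficient multiplying $\psii$, not the overall gradient magnitude. I would state this explicitly, and assume $r_{i,t}>0$ so that the logarithm is well defined. That assumption is automatic for a softmax policy.
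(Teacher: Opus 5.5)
Your proposal is correct and matches the paper's proof: the paper likewise applies the log-derivative trick $\nabla U_i = U_i \nabla \log U_i$ to get $\nabla_\theta \mathcal{J}_i = \frac{\hat{A}_i}{|o_i|}\sum_t U_i(\theta)\,\psi_{i,t}(\theta)$. It then contrasts this shared coefficient with the per-token coefficient $r_{i,t}(\theta)$ from the preceding GRPO derivation; your AM--GM remark, that $|o_i|\,U_i \le \sum_t r_{i,t}$, is a correct addition that the paper does not make.
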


\begin{proof}
We analyze the gradient contribution of a single sample $i$. Applying the log-derivative trick $\nabla f = f \nabla \log f$:
\begin{equation}
    \begin{aligned}
    \nabla_\theta \mathcal{J}_i(\theta) &= \hat{A}_i \cdot \nabla_\theta U_i(\theta) \\
    &= \hat{A}_i \cdot U_i(\theta) \cdot \nabla_\theta \left( \frac{1}{|o_i|} \sum_{t=1}^{|o_i|} \log r_{i,t}(\theta) \right) \\
    &= \frac{\hat{A}_i}{|o_i|} \sum_{t=1}^{|o_i|} \underbrace{U_i(\theta)}_{\text{Global Weight}} \cdot \psii
\end{aligned}
\end{equation}
\end{proof}

\begin{corollary}[\textbf{Holistic Stability vs. Local Aggressiveness}]
The disparity in coefficient terms between GRPO and GMPO aligns with the empirical analysis in Figures \ref{fig:training}(a) and (b):

\begin{itemize}
    \item \textbf{GRPO (Local Sensitivity):} The weight for token $t$ corresponds directly to its individual ratio $r_{i,t}(\theta)$, implying that an extreme ratio at a single token induces a disproportionately large update for that specific token. Consequently, GRPO becomes highly sensitive to local fluctuations and prone to high variance.
    
    \item \textbf{GMPO (Holistic View):} The weight for token $t$ is $U_i(\theta)$, representing the geometric mean of \textit{all} ratios in the sequence. This mechanism imposes a holistic perspective where the update strength is determined by the consistency of the entire generation path. Even if a single token exhibits an extreme ratio, the geometric mean dampens its influence to promote stability. While GMPO effectively suppresses outliers, it inevitably restricts the model's capacity to reinforce pivotal token-level breakthroughs.
\end{itemize}

This analytical contrast motivates a dynamic exponent $p$ in PMPO to switch between local sensitivity and holistic stability depending on training phase.
\end{corollary}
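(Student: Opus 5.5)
The corollary is a comparison of the two gradient formulas already derived, so the plan is to read off the per-token coefficients and then show how each responds to a perturbation of a single token's ratio. First I will put both gradients in the common form
\begin{equation*}
\nabla_\theta \mathcal{J}_i = \frac{\hat{A}_i}{|o_i|}\sum_{t=1}^{|o_i|} w_{i,t}(\theta)\,\psi_{i,t}(\theta),
\end{equation*}
with $w_{i,t}=r_{i,t}(\theta)$ for GRPO, from the Linear Scaling proposition, and $w_{i,t}=U_i(\theta)$ for GMPO, from the Global vs.\ Local proposition. Both statements of the corollary then become claims about how $w_{i,t}$ depends on the individual ratios $r_{i,1},\dots,r_{i,|o_i|}$.

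For the GRPO bullet, the weight on token $t$ is exactly $r_{i,t}$. So $\partial w_{i,t}/\partial r_{i,t}=1$ and $\partial w_{i,t}/\partial r_{i,s}=0$ for $s\neq t$. If a single token's ratio becomes large, say $r_{i,t}\to M$, its gradient term grows linearly in $M$ and is not offset by the other tokens. I will formalize ``high variance'' as follows. Treat the ratios as random with variance $\sigma^2$ and assume $\psi_{i,t}$ is bounded. Then the variance of each coefficient $w_{i,t}$ is $\sigma^2$, and it does not shrink with sequence length.

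For the GMPO bullet, I will use $\log U_i=\frac{1}{|o_i|}\sum_s \log r_{i,s}$. This gives $\partial U_i/\partial r_{i,t}=U_i/(|o_i|\,r_{i,t})$. The sensitivity to any one token is therefore damped by the factor $1/|o_i|$. Moreover, $r_{i,t}\to M$ changes $U_i$ only by the multiplicative factor $(M/r_{i,t})^{1/|o_i|}$, which grows sublinearly. The holistic claim follows directly: every token shares one weight, fixed by the product of all ratios. The cost claim follows by symmetry. A pivotal token with large $r_{i,t}$ receives the same weight $U_i$ as every other token, so its own term cannot be singled out for amplification. In addition, by the AM--GM inequality, $U_i\le \frac{1}{|o_i|}\sum_s r_{i,s}$, with equality only when all ratios are equal. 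This bounds the total GMPO weight by the total GRPO weight, which is the sense in which GMPO is ``less aggressive.''

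The main obstacle is that terms like ``high variance'' and ``restricts capacity'' are informal, so the proof can only make them precise through the derivative and AM--GM comparisons above. I will also state openly that the clipping operation and the KL term were dropped, as in the propositions this corollary builds on. The closing remark on PMPO is motivational, not something to prove; I will support it only by noting, via Appendix~\ref{appendix:theo1}, that the power mean with exponent $p$ recovers these two weighting regimes as $p=1$ and $p\to 0$.
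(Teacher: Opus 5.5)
Your proposal is correct in substance, and its core matches the paper. The corollary has no separate proof there: the paper simply reads off the local coefficient $r_{i,t}(\theta)$ from the GRPO proposition and the shared global coefficient $U_i(\theta)$ from the GMPO proposition, then interprets them qualitatively.

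What you add is quantitative content for the interpretive phrases, and two pieces of it genuinely buy something the paper does not supply. The first is the large-deviation comparison: an outlier $r_{i,t}\to M$ enters the GRPO gradient linearly in $M$, but rescales $U_i$ only by $(M/r_{i,t})^{1/|o_i|}$. The second is the AM--GM bound $|o_i|\,U_i\le\sum_s r_{i,s}$ on the total weight.

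Be careful, though, with the first-order claim that sensitivity is ``damped by $1/|o_i|$''. Perturbing $r_{i,t}$ moves all $|o_i|$ GMPO coefficients at once, so the aggregate sensitivity is $U_i/r_{i,t}$. At $r\equiv 1$ this equals $1$, because the geometric and arithmetic means agree to first order there. So near the trust region GMPO redistributes a single token's fluctuation over the sequence rather than shrinking it. Moreover, as $r_{i,t}\to 0$ the sensitivity blows up, while $U_i\to 0$ silences the whole sequence. The same caveat applies to your coefficient-variance proxy: GMPO's lower-variance coefficient multiplies $\sum_s\psi_{i,s}$, so with aligned $\psi_{i,s}$ the two updates have the same variance to first order. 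I would therefore restrict the damping claim to high-side outliers ($r_{i,t}\ge U_i$), which is the case the corollary actually concerns, and let the sublinear-growth and AM--GM arguments carry it.

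For the closing remark, there is a more direct link to your coefficient comparison than the objective-level limits of Appendix~D. With clipping dropped, the paper's gradient derivation for APMPO gives the same form $\frac{\hat{A}_i}{|o_i|}\sum_t w_{i,t}\psi_{i,t}$, with $w_{i,t}=\bigl(\tfrac{1}{|o_i|}\sum_s r_{i,s}^p\bigr)^{(1-p)/p} r_{i,t}^{p}$. This weight is exactly $r_{i,t}$ at $p=1$ and tends to $U_i$ as $p\to 0$.
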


\section{Appendix D: Theoretical Analysis on PMPO} \label{appendix:theo1}

In this section, we provide a theoretical analysis of the PMPO objective. Remarkably, PMPO establishes a unified framework connecting the arithmetic mean-based objective (\textit{e.g.}, GRPO) and the geometric mean-based objective (\textit{e.g.}, GMPO). This allows for dynamic adjustment of policy optimization strategy by varying the exponent $p$. 

\subsection{Preliminaries}

We begin by formally establishing the necessary mathematical definitions.

\begin{definition}[Power Mean]
For a set of non-negative real numbers $X=\{x_1, x_2, \dots, x_n\}$, the generalized power mean with exponent $p \in \mathbb{R}\textbackslash \{0\}$ is defined as:
\begin{equation}
    M_p(X) = \left( \frac{1}{n} \sum_{t=1}^{n} x_t^p \right)^{\frac{1}{p}}
    \label{equ:power}
\end{equation}

The cases for $p \to 1$ and $p \to 0$ are of special interest:
\begin{itemize}
    \item The \textit{Arithmetic Mean} is defined as:
    \begin{equation}
        M_1(X) = \frac{1}{n}\sum_{t=1}^{n} x_t
    \end{equation}
    \item The \textit{Geometric Mean} is defined as:
    \begin{equation}
        M_0(X) = \lim_{p \to 0} M_p(X) = \left( \prod_{t=1}^{n} x_t \right)^{\frac{1}{n}}
    \end{equation}
\end{itemize}
\end{definition}

\begin{definition}[Per-Sequence PMPO Objective]
The full per-sequence PMPO objective for a given response $o_i$ is defined by:
\begin{equation}
\mathcal{J}_i(\theta) = sgn(\hat{A}_i) \cdot M_p(\Phi_i(\theta))
\end{equation}
where $\Phi_i(\theta) = \{\phi_{i,1}(\theta), ..., \phi_{i,|o_i|}(\theta)\}, \phi_{i,t}(\theta) \ge 0$ is the set of token-level magnitudes. Each $\phi_{i,t}(\theta)$ is defined in Eq. (\ref{eq:token_magnitude}). 
\end{definition}

\subsection{Asymptotic Behavior of the PMPO Objective}

We now formally demonstrate that the PMPO objective encompasses arithmetic and geometric mean-based counterparts as limiting cases, establishing its role as a unifying theoretical framework.

\begin{proposition}[Convergence to the GRPO Objective] 
As the exponent $p \to 1$, the per-sequence PMPO objective $\mathcal{J}_i(\theta)$ asymptotically converges to the standard GRPO objective.
\end{proposition}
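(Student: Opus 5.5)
Setting $p=1$ in the power mean gives $M_1(\Phi_i(\theta)) = \frac{1}{|o_i|}\sum_t \phi_{i,t}(\theta)$. Since $p\mapsto M_p(X)$ is continuous on $(0,\infty)$ for a fixed finite set of non-negative reals, we get $\lim_{p\to 1} M_p(\Phi_i(\theta)) = M_1(\Phi_i(\theta))$. The limit is therefore the per-sequence objective $\mathrm{sgn}(\hat{A}_i)\cdot \frac{1}{|o_i|}\sum_t \phi_{i,t}(\theta)$. What remains is to show that this signed arithmetic mean of magnitudes equals the GRPO per-sequence term $\frac{1}{|o_i|}\sum_t \min\bigl(r_{i,t}\hat{A}_i, \rho_{i,t}\hat{A}_i\bigr)$ from Eq. (\ref{eq:GRPO}), with $\rho_{i,t}$ in place of the symmetric clip.

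The plan has three steps.

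\textbf{Step 1: continuity in $p$.} For $p>0$, the map $p\mapsto \frac{1}{n}\sum_t x_t^p$ is continuous and strictly positive whenever some $x_t>0$. Composing with $y\mapsto y^{1/p}$, which is jointly continuous in $(y,p)$ for $y>0$ and $p>0$, gives continuity of $M_p$. If every $x_t=0$, then $M_p=0$ for all $p$ and there is nothing to prove.

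\textbf{Step 2: sign identity.} Since $r_{i,t}>0$ and $\rho_{i,t}>0$, the quantity $\min(r_{i,t}\hat{A}_i, \rho_{i,t}\hat{A}_i)$ has the sign of $\hat{A}_i$. So $\mathrm{sgn}(\hat{A}_i)\,\phi_{i,t} = \min(r_{i,t}\hat{A}_i, \rho_{i,t}\hat{A}_i)$ for each token.

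\textbf{Step 3: linearity.} Pull $\mathrm{sgn}(\hat{A}_i)$ inside the sum and apply Step 2 term by term to recover the GRPO summand. Averaging over $i$ and subtracting the KL term gives the full objective, Eq. (\ref{eq:final_objective}), as the GRPO objective up to the clipping range. That range reduces to the standard one when $\epsilon_{\text{ada}}=\epsilon_{\text{low}}=\epsilon$.

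The main obstacle is Step 2, the sign bookkeeping, including the degenerate case $\hat{A}_i=0$. There $\phi_{i,t}=0$, so both sides vanish whatever value is assigned to $\mathrm{sgn}(0)$; it is worth stating this explicitly, since the paper writes $\mathrm{sgn}\in\{-1,1\}$.

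A further subtlety is what ``asymptotically converges'' means under the adaptive schedule. The schedule $p=\exp(-\gamma\mu_R)$ gives $p\to1$ exactly when $\gamma\mu_R\to 0$, for example in the low-reward exploration phase. I would note this as the regime in which the proposition applies.

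If a quantitative statement is wanted, I would also give a first-order bound from $\frac{d}{dp}\log M_p$ evaluated near $p=1$. This shows that the gap $|M_p - M_1|$ is $O(|1-p|)$, with a constant depending on the spread of $\log\phi_{i,t}$ over tokens with $\phi_{i,t}>0$.
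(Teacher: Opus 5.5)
Your proposal is correct and follows essentially the same route as the paper. Both arguments evaluate the power mean at $p=1$ to get the arithmetic mean of the $\phi_{i,t}(\theta)$. Both then use that $\min(r_{i,t}\hat{A}_i, \rho_{i,t}\hat{A}_i)$ carries the sign of $\hat{A}_i$ (since $r_{i,t},\rho_{i,t}>0$), so $\text{sgn}(\hat{A}_i)\,\phi_{i,t}(\theta)$ reproduces each GRPO summand.

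Your extra points tighten details the paper glosses over when it calls the result ``precisely the GRPO objective'': you justify the limit by continuity of $p\mapsto M_p$ rather than bare substitution, treat $\hat{A}_i=0$, and note that the recovered objective uses the asymmetric clip $\rho_{i,t}$, matching standard GRPO only when $\epsilon_{\text{ada}}=\epsilon_{\text{low}}=\epsilon$.
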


\begin{proof}
By direct substitution of $p=1$ and $n=|o_i|$ into the power mean definition (\textit{i.e.}, Eq. (\ref{equ:power})), we get the arithmetic mean:
\begin{equation}
    \lim_{p \to 1} M_p(\Phi_{i}(\theta)) = M_1(\Phi_{i}(\theta)) = \frac{1}{|o_i|} \sum_{t=1}^{|o_i|} \phi_{i,t}(\theta)
\end{equation}

Substituting this into the full PMPO objective yields:
\begin{equation}
\begin{aligned}
&\lim_{p \to 1} \mathcal{J}_i(\theta) = \text{sgn}(\hat{A}_i) \cdot \left( \frac{1}{|o_i|} \sum_{t=1}^{|o_i|} \phi_{i,t}(\theta) \right) \\
&= \frac{1}{|o_i|} \sum_{t=1}^{|o_i|} \text{sgn}(\hat{A}_i) \cdot |\min(r_{i,t}(\theta)\hat{A}_i, \rho_{i,t}(\theta)\hat{A}_i)|
\end{aligned}
\end{equation}

In particular, $\text{sgn}(X) \cdot |X|=X$. Since the term $\min(r_{i,t}(\theta)\hat{A}_i, \rho_{i,t}(\theta)\hat{A}_i)$ inherently preserves the direction of $\hat{A}_i$, it follows that:
\begin{equation}
\begin{aligned}
    &\text{sgn}(\hat{A}_i) \cdot |\min(r_{i,t}(\theta)\hat{A}_i, \rho_{i,t}(\theta)\hat{A}_i)| \\
    &= \min(r_{i,t}(\theta)\hat{A}_i, \rho_{i,t}(\theta)\hat{A}_i)
\end{aligned}
\end{equation}

Therefore, the objective simplifies to:
\begin{equation}
    \lim_{p \to 1} \mathcal{J}_i(\theta) = \frac{1}{|o_i|} \sum_{t=1}^{|o_i|} \min(r_{i,t}(\theta)\hat{A}_i, \rho_{i,t}(\theta)\hat{A}_i)
\end{equation}

This is precisely the GRPO objective. As a result, PMPO exactly recovers GRPO as a special case.

\end{proof}

\begin{proposition}[Convergence to the GMPO-Style Objective]
In the theoretical limit as $p \to 0^+$, the per-sequence PMPO objective $\mathcal{J}_i(\theta)$ converges to a geometric mean-based formulation, recovering the strict consistency constraints characteristic of GMPO.

\end{proposition}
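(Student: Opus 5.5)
The plan is to prove the standard limit $\lim_{p\to0^+}M_p(X)=M_0(X)$ for a finite set of non-negative numbers, and then substitute $X=\Phi_i(\theta)$ with $n=|o_i|$. Because $\mathrm{sgn}(\hat{A}_i)$ does not depend on $p$, it passes through the limit, giving
\begin{equation}
\lim_{p\to0^+}\mathcal{J}_i(\theta)=\mathrm{sgn}(\hat{A}_i)\Big(\prod_{t=1}^{|o_i|}\phi_{i,t}(\theta)\Big)^{1/|o_i|}.
\end{equation}
The main work lies in the limit itself and in interpreting the product as a GMPO-style objective.

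\textbf{Strictly positive magnitudes.} First assume every $\phi_{i,t}(\theta)>0$. Write
\begin{equation}
\log M_p=\frac{1}{p}\log\Big(\frac{1}{n}\sum_t e^{p\log\phi_{i,t}}\Big).
\end{equation}
As $p\to0^+$, both the numerator and the denominator vanish, since the inner average tends to $1$. Either L'Hôpital's rule or the expansion $e^{p y}=1+p y+O(p^2)$ together with $\log(1+u)=u+O(u^2)$ then gives $\log M_p\to\frac1n\sum_t\log\phi_{i,t}$. Continuity of $\exp$ yields the geometric mean.

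\textbf{Zero magnitudes.} This is the step I expect to need care. A magnitude $\phi_{i,t}=0$ occurs when $\hat{A}_i=0$, or when $r_{i,t}=0$. Suppose $k\ge1$ of the $n$ entries vanish. For $p>0$ the zero entries contribute $0^p=0$, so
\begin{equation}
M_p=\Big(\frac{n-k}{n}\Big)^{1/p}M_p(\text{positive entries}).
\end{equation}
Since $(n-k)/n<1$, the first factor tends to $0$ as $p\to0^+$. The second factor stays bounded by the maximum entry. Hence $M_p\to0=\prod_t\phi_{i,t}^{1/n}$, and the limit holds in all cases.

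\textbf{Connection to GMPO.} Finally, I relate the result to GMPO (Eq.~(\ref{eq:GMPO})). When $\hat{A}_i\neq0$, each token magnitude factors as $\phi_{i,t}=|\hat{A}_i|\,\tilde r_{i,t}$, where $\tilde r_{i,t}=\min(r_{i,t},\rho_{i,t})$ for $\hat{A}_i>0$ and $\tilde r_{i,t}=\max(r_{i,t},\rho_{i,t})$ for $\hat{A}_i<0$. Both are non-negative. Pulling $|\hat{A}_i|$ out of the product and using $\mathrm{sgn}(\hat{A}_i)|\hat{A}_i|=\hat{A}_i$ gives
\begin{equation}
\lim_{p\to0^+}\mathcal{J}_i=\hat{A}_i\Big(\prod_t\tilde r_{i,t}\Big)^{1/|o_i|}.
\end{equation}
This is a clipped geometric mean of importance ratios scaled by the advantage, which is exactly the structure of GMPO; only the clipping conventions differ, which is why the statement says ``GMPO-style''. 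The resulting product form means a single small ratio drives the whole sequence weight toward zero, and this is the strict consistency constraint the statement refers to.
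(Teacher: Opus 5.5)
Your proof is correct and follows the same route as the paper: take logarithms, resolve the $0/0$ form by L'H\^opital (or a first-order expansion) to get $\frac{1}{n}\sum_t \ln\phi_{i,t}$, exponentiate, and pass the $p$-independent factor $\mathrm{sgn}(\hat{A}_i)$ through the limit. Your two additions go beyond the paper, which silently assumes every $\phi_{i,t}>0$ when it writes $\ln x_t$ and only asserts the GMPO connection. The zero-entry case is handled correctly; when all entries vanish (e.g.\ $\hat{A}_i=0$), just note $M_p\equiv 0$ directly rather than via the empty ``positive entries'' mean. The factorization $\phi_{i,t}=|\hat{A}_i|\,\tilde r_{i,t}$ makes the ``GMPO-style'' claim concrete.
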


\begin{Proof}
First, we establish the limit of the power mean term. Let $J = \lim_{p \to 0^+} M_p(\phi_{i,t}(\theta))$. This limit presents an indeterminate form of type $1^\infty$. To resolve this, we analyze the limit of $\ln(J)$. For notational simplicity, let $n = |o_i|$ and $x_t = \phi_{i,t}(\theta)$:
\begin{equation}
\begin{aligned}
   \ln(J) &= \lim_{p \to 0^+} \ln \left[ \left( \frac{1}{n} \sum_{t=1}^{n} x_t^p \right)^{\frac{1}{p}} \right] \\
   &= \lim_{p \to 0^+} \frac{\ln \left( \frac{1}{n} \sum_{t=1}^{n} x_t^p \right)}{p} \\
   &\quad \text{\textcolor{gray}{\small{(Indeterminate form $\frac{0}{0}$, applying L'Hôpital's Rule)}}} \\
   &= \lim_{p \to 0^+} \frac{\frac{d}{dp} \left( \ln \left( \frac{1}{n} \sum_{t=1}^{n} x_t^p \right) \right)}{\frac{d}{dp}(p)} \\
   &= \lim_{p \to 0^+} \frac{\frac{1}{\frac{1}{n} \sum_{t=1}^{n} x_t^p} \cdot \left( \frac{1}{n} \sum_{t=1}^{n} x_t^p \ln(x_t) \right)}{1} \\
   &= \frac{1}{\frac{1}{n} \sum_{t=1}^{n} x_t^0} \cdot \left( \frac{1}{n} \sum_{t=1}^{n} x_t^0 \ln(x_t) \right) \\
   &= \frac{1}{1} \cdot \left( \frac{1}{n} \sum_{t=1}^{n} \ln(x_t) \right) \\
   &= \frac{1}{n} \sum_{t=1}^{n} \ln(x_t) \\
   &= \ln \left( \left( \prod_{t=1}^{n} x_t \right)^{\frac{1}{n}} \right)
\end{aligned}
\end{equation}

By exponentiating both sides, we obtain:
\begin{equation}
    J = \left( \prod_{t=1}^{n} x_t \right)^{\frac{1}{n}} = \left( \prod_{t=1}^{|o_i|} \phi_{i,t}(\theta) \right)^{\frac{1}{|o_i|}}
\end{equation}

Substituting this back into the full PMPO objective:
\begin{equation}
    \lim_{p \to 0^+} \mathcal{J}_i(\theta) = \text{sgn}(\hat{A}_i) \cdot \left( \prod_{t=1}^{|o_i|} \phi_{i,t}(\theta) \right)^{\frac{1}{|o_i|}}
\end{equation}

Therefore, the PMPO objective converges to a geometric aggregation of token-level magnitudes.
\end{Proof}

\subsection{Conclusion}

By tuning $p$ based on model's performance, PMPO fluidly interpolates between a high-sensitivity pattern and a high-stability pattern. This allows it to effectively balance the pursuit of high-reward outliers with stable policy refinement, thereby providing a more adaptive learning mechanism.

\section{Appendix E: Theoretical Justification of Gradient Dynamics in PMPO} \label{appendix:theory_dynamics}

In this section, we provide a unified theoretical framework to analyze the superiority of PMPO. We employ \textit{Gradient Sensitivity Analysis} to showcase how PMPO adaptively balances signal amplification and noise suppression.

\subsection{Gradient Sensitivity and the Crossover Point}

We analyze the policy update dynamics by examining the \textit{effective gradient contribution} of the advantage signal $A$, which is modeled as a transformation function $\Psi(A)$. 

(1) For GRPO (arithmetic mean), the objective is linear in $A$ (\textit{i.e.}, $\mathcal{J} \sim \Sigma A_i$). Therefore, the gradient contribution is directly proportional to the magnitude of the advantage itself, which is given as $\Psi_{GRPO}(A) = A$. 

(2) For PMPO (power mean), the objective involves the $p$-th power of advantages (\textit{i.e.}, $\mathcal{J} \sim (\Sigma A_i^p)^{1/p}$). Crucially, when computing the gradient with respect to the policy parameters, the chain rule introduces a factor proportional to $A^{p-1}$ scaling the original gradient direction. Therefore, the effective gradient scaling factor behaves as $\Psi_{PMPO}(A) \propto A^p$ where $p \in (0, 1]$.

\begin{definition}[Gradient Sensitivity]
The sensitivity of the policy update with respect to the advantage signal magnitude $A$ is defined as $S(A) = \frac{\partial \Psi(A)}{\partial A}$.
\end{definition}

For GRPO, $S_{GRPO}(A) = 1$ (constant sensitivity). For APMPO, $S_{PMPO}(A) = p A^{p-1}$.

\begin{proposition}[Dual-Phase Sensitivity and Unique Crossover]
For any fixed adaptive parameter $p \in (0, 1)$, there exists a unique Crossover Point $A^*$ such that PMPO exhibits two distinct behaviors relative to GRPO:
\begin{equation}
    A^* = p^{\frac{1}{1-p}}
\end{equation}

The sensitivity ratio $\rho(A) = \frac{S_{PMPO}(A)}{S_{GRPO}(A)}$ satisfies:
\begin{enumerate}
    \item \textbf{Signal Boosting Phase ($A < A^*$):} In this phase, $\rho(A) > 1$. Gradients for low‑advantage samples are amplified relative to the linear baseline, preventing gradient vanishing in rare but correct reasoning paths that may not yield maximum rewards.
    \item \textbf{Outlier Damping Phase ($A > A^*$):} In this phase, $\rho(A) < 1$. Gradients for high‑advantage outliers are attenuated. This functions as a \textit{gradient‑clipping} mechanism, preventing any single sample from destabilizing the policy update.
\end{enumerate}
\end{proposition}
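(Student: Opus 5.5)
Since $S_{\text{GRPO}}(A)=1$, the sensitivity ratio is simply $\rho(A)=S_{\text{PMPO}}(A)=pA^{p-1}$, defined for $A>0$ (the magnitudes entering the power mean are non-negative, and we take $A>0$ so that $A^{p-1}$ is finite). The whole proposition reduces to comparing this one-variable function with $1$. I will locate where $\rho(A)=1$, then use monotonicity to read off the sign of $\rho(A)-1$ on either side.

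\emph{Existence and value of the crossover.} Solve $pA^{p-1}=1$ for $A>0$. Because $p\in(0,1)$, the exponent $1-p$ is strictly positive, so the equation can be rewritten as $A^{1-p}=p$. Taking the positive $(1-p)$-th root gives $A^*=p^{1/(1-p)}$. This is well defined and lies in $(0,1)$, since $0<p<1$ and $1/(1-p)>0$.

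\emph{Uniqueness and the two phases.} On $(0,\infty)$ we have
\[
\rho'(A)=p(p-1)A^{p-2}<0,
\]
because $p>0$ and $p-1<0$. Hence $\rho$ is strictly decreasing, and the level $1$ is attained at most once, which gives uniqueness of $A^*$. Strict monotonicity also gives the phases directly: $\rho(A)>\rho(A^*)=1$ for $A<A^*$ (signal boosting), and $\rho(A)<1$ for $A>A^*$ (outlier damping). As a cross-check, the boundary behaviour is consistent with this picture: $\rho(A)\to\infty$ as $A\to0^+$ and $\rho(A)\to0$ as $A\to\infty$.

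\emph{Main obstacle.} The calculus is routine; the real difficulty is conceptual. The interpretive clauses in the proposition (gradient amplification for rare correct paths, gradient-clipping-like damping) are justified only relative to the modelling assumption $\Psi_{\text{PMPO}}(A)\propto A^p$ adopted just before the definition. I would state explicitly that the proposition is proved under that surrogate, not for the full clipped objective.

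I would also note two limiting cases. At $p=1$ we get $\rho\equiv1$, so no unique crossover exists, which is why $p=1$ is excluded. If a proportionality constant $c$ in $\Psi_{\text{PMPO}}$ were kept, the crossover would become $(cp)^{1/(1-p)}$; I will fix $c=1$ to match the stated formula.
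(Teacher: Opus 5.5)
Your proposal is correct and takes essentially the same route as the paper: you solve $pA^{p-1}=1$ to get $A^*=p^{1/(1-p)}$, then use strict monotonicity of $pA^{p-1}$ on $(0,\infty)$ to read off both phases (you show monotonicity via the derivative, the paper via the negative exponent $p-1<0$). Two clarifications in your version are left implicit in the paper: the explicit uniqueness argument, and the caveat that the result holds only under the surrogate $\Psi_{\text{PMPO}}(A)\propto A^p$ with unit constant.
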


\begin{proof}
We analyze the ratio $\rho(A) = p A^{p-1}$. To find the crossover point where sensitivities are equal, we set $\rho(A) = 1$:
\begin{equation}
\begin{aligned}
    p A^{p-1} = 1 &\implies A^{p-1} = \frac{1}{p} \\
    &\implies A = \left(\frac{1}{p}\right)^{\frac{1}{p-1}} = p^{\frac{1}{1-p}}
\end{aligned}
\end{equation}

Let $f(A) = p A^{p-1}$. Since $p \in (0, 1)$, the exponent $p-1 < 0$. As a result, $f(A)$ is strictly monotonically decreasing on $(0, \infty)$.
\begin{itemize}
    \item For $A < A^*$, $f(A) > f(A^*) = 1$, implying signal boosting.
    \item For $A > A^*$, $f(A) < f(A^*) = 1$, implying outlier suppression.
\end{itemize}

This establishes PMPO as a \textit{non-linear filter} that enhances weak signals while suppressing noise.
\end{proof}

\subsection{Adaptive Stability Analysis}

A critical feature of PMPO is the adaptive nature of $p$, \textit{i.e.}, $p = \exp(-\gamma \mu_R)$. We demonstrate that this mechanism automatically transitions the learning strategy from exploration to stabilization.

\begin{proposition}[Asymptotic Stability]
As the policy performance improves ($\mu_R \uparrow$), the adaptive parameter $p \to 0$. Consequently, the crossover point $A^*$ approaches zero:
\begin{equation}
    \lim_{p \to 0^+} A^* = 0
\end{equation}
\end{proposition}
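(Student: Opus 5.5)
The plan has two parts: first link the premise $\mu_R \uparrow$ to $p \to 0^+$ through the schedule in Eq.~(\ref{eq:dynamic_p}), then compute the limit of the crossover point $A^* = p^{1/(1-p)}$ obtained in the preceding Proposition (Dual-Phase Sensitivity and Unique Crossover).

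\textbf{Step 1: the premise.} Since $p = \exp(-\gamma \mu_R)$ with $\gamma > 0$, $p$ is strictly decreasing in $\mu_R$ and always positive. So improving performance drives $p$ monotonically downward, staying inside $(0,1)$ whenever $\mu_R > 0$.

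A point to be honest about: with $\mu_R \in [0,1]$ the exponent only reaches $e^{-\gamma}$. The statement ``$p \to 0$'' therefore has to be read as the idealized regime $\gamma \mu_R \to \infty$, for example by treating $\gamma$ as large or as a formal asymptotic. I will state that interpretation explicitly rather than claim $p$ literally vanishes in training.

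\textbf{Step 2: the limit.} Write
\[
A^* = \exp\!\Big(\tfrac{\ln p}{1-p}\Big).
\]
As $p \to 0^+$, $\ln p \to -\infty$ and $1-p \to 1$, so the exponent tends to $-\infty$ and $A^* \to 0$. The expression is of the form $0^{1}$, not an indeterminate form, so there is no real obstacle and no need for L'H\^opital.

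For a cleaner statement I can also give a bound. For $p \in (0,1)$ we have $1/(1-p) > 1$ and $p<1$, so $0 < A^* < p$. Squeezing then gives the limit directly.

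\textbf{Step 3: monotonicity (optional).} To match the narrative that the transition is gradual, I would check that $A^*$ moves in the same direction as $p$ near $0$, so it shrinks as $\mu_R$ grows. The derivative of $\ln A^* = \ln p/(1-p)$ is
\[
\frac{\tfrac{1-p}{p} + \ln p}{(1-p)^2}.
\]
This is positive on $(0,1)$: setting $h(p) = \tfrac{1-p}{p} + \ln p$, we have $h(1)=0$ and $h'(p) = -1/p^2 + 1/p < 0$, so $h > 0$ on $(0,1)$. Hence $A^*$ is increasing in $p$, and therefore decreasing in $\mu_R$.

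\textbf{Interpretation.} Combined with the preceding Proposition, this means the signal-boosting region $A < A^*$ collapses as training improves. Almost all advantages then fall in the damping regime, which is the claimed stabilization behavior.
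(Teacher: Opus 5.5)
Your proof is correct and takes the same route as the paper: both pass to $\ln A^* = \ln p/(1-p)$ and note that this tends to $-\infty$ as $p \to 0^+$, so $A^* \to 0$. Your squeeze bound $0 < A^* < p$ and your monotonicity check are correct additions the paper does not make, as is your caveat that $\mu_R \in [0,1]$ only gives $p \ge e^{-\gamma}$, so ``$p \to 0$'' describes an idealized regime.
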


\begin{proof}
Let $L = \lim_{p \to 0^+} \ln(A^*) = \lim_{p \to 0^+} \frac{\ln p}{1-p}$.
As $p \to 0^+$, $\ln p \to -\infty$ and $1-p \to 1$.
As a result, $L \to -\infty$.
Since $\ln(A^*) \to -\infty$, it follows that $A^* \to 0$.
\end{proof}

\begin{corollary}[Phased Learning Dynamics]
This proposition implies a theoretical guarantee for the learning schedule:
\begin{itemize}
    \item \textbf{Early Training (Low $\mu_R \implies p \approx 1$):} This indicates minimal non‑linear distortion. In this regime, PMPO operates almost linearly (similar to GRPO), allowing high‑reward outliers to exert full influence without suppression. This is crucial for signal discovery, enabling the model to rapidly capture the earliest correct reasoning paths.

    \item \textbf{Late Training (High $\mu_R \implies p \approx 0$):} The crossover point $A^* \to 0$. The Outlier Damping Phase then dominates the entire signal space ($A > A^*$ for almost all $A$). In this regime, PMPO applies strong attenuation to high‑advantage signals, effectively driving the policy to optimize for consistency rather than pursuing isolated high rewards.
\end{itemize}
\end{corollary}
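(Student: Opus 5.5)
The corollary follows by combining the Asymptotic Stability proposition with the Dual-Phase Sensitivity proposition and the explicit schedule $p=\exp(-\gamma\mu_R)$ from Eq.~(\ref{eq:dynamic_p}). I would treat the two bullets as two limiting regimes of one continuous map $\mu_R\mapsto p\mapsto A^*(p)=p^{1/(1-p)}$. First I would note that $p$ is strictly decreasing in $\mu_R$, and that $p\to1^-$ as $\mu_R\to0^+$. For $p$ to approach $0$ we need $\gamma\mu_R$ to be large. With $\mu_R\in[0,1]$ and fixed $\gamma$ the smallest attainable value is $p=e^{-\gamma}$. So I would state the late-training claim under the regime $\gamma\mu_R\to\infty$, or else quantify it as ``$p$ small'', and flag this as a modelling assumption rather than something that follows from $\mu_R\le1$ alone.

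\textbf{Early training.} As $p\to1^-$, I would show the sensitivity ratio $\rho(A)=pA^{p-1}\to1$ uniformly on compact subsets of $(0,\infty)$; this is a direct continuity argument. Equivalently, $\Psi_{PMPO}(A)\propto A^p\to A=\Psi_{GRPO}(A)$. I would also invoke the earlier Convergence to the GRPO Objective proposition, which gives $\mathcal{J}_i\to$ GRPO. Together these formalize ``minimal non-linear distortion'': outliers are neither boosted nor damped.

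One subtlety is worth recording: $A^*=p^{1/(1-p)}\to e^{-1}$ as $p\to1$, via $\ln A^*=\ln p/(1-p)\to-1$. The crossover therefore does not vanish in this limit, but the \emph{size} of the distortion does, because $|\rho(A)-1|\to0$. The claim should be read in that sense.

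\textbf{Late training.} Apply the Asymptotic Stability proposition: $A^*\to0$ as $p\to0^+$. For any fixed $A>0$ there is then a $p_0$ with $A^*(p)<A$ for all $p<p_0$. By the monotonicity of $f(A)=pA^{p-1}$ established in the Dual-Phase proposition, $\rho(A)<1$ for every such $A$, so the damping phase covers $(A^*,\infty)$, which exhausts any fixed positive range. I would strengthen this by showing $\rho(A)=pA^{p-1}\to0$ for each fixed $A>0$, which gives ``strong attenuation''. Finally I would tie this to the Convergence to the GMPO-Style Objective proposition to justify ``optimize for consistency''.

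The main obstacle is rigor rather than computation. ``Almost all $A$'' has to be read as ``for every fixed $A>0$, eventually''; it is not uniform near $0$, since $\rho(A)\to\infty$ as $A\to0$ for any fixed $p$. The reachability of $p\approx0$ under the bounded schedule also needs the stated caveat. I would phrase both claims as pointwise limits with these qualifications explicit.
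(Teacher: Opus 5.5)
Your proposal is correct and follows essentially the paper's route. The paper gives no separate argument: it reads the corollary off the Asymptotic Stability proposition ($A^*\to0$ as $p\to0^+$) together with the monotonicity of $\rho(A)=pA^{p-1}$ from the Dual-Phase proposition, which is your late-training step, and your early-training continuity argument ($\rho\to1$ on compacts, $A^p\to A$) just makes rigorous the paper's informal ``minimal non-linear distortion'' remark. Your extra caveats are valid refinements the paper omits: $p\ge e^{-\gamma}$ (about $0.45$ for the paper's $\gamma=0.8$), so $p\approx0$ is never reached, and $A^*\to e^{-1}$ rather than vanishing as $p\to1^-$.
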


\section{Appendix F: Theoretical Analysis on FAC} \label{appendix:theo2}

This section provides a theoretical analysis demonstrating the superiority of FAC over static clipping mechanisms. The core argument posits that by adaptively adjusting the clipping bound based on fluctuated reward distributions, FAC circumvents the inherent trade-offs of fixed constraints.

\subsection{Preliminaries}
\begin{definition}[Static and Adaptive Clipping]
A static clipping mechanism constrains the importance sampling ratio $r_{i,t}(\theta)$ to a fixed interval $[1-\epsilon_{\text{static1}}, 1+\epsilon_{\text{static2}}]$, imposing a uniform constraint across all updates. In contrast, FAC defines an adaptive interval $[1-\epsilon_{\text{low}}, 1+\epsilon_{\text{ada}}]$, where the upper bound $\epsilon_{\text{ada}}$ is adaptively adjusted.
\end{definition}

\begin{definition}[Feedback Stability Score]
We utilize the FSS from Eq. (\ref{eq:snr}) as a quantitative proxy for signal stability within a batch.
\end{definition}

We make the following assumption:
\begin{assumption}[FSS as a Quality Proxy]
FSS is positively correlated with the stability of the positive learning signal. High FSS implies high-fidelity reinforcement signals, whereas low FSS indicates unstable signals necessitating caution.
\end{assumption}

\subsection{Design Rationale for Asymmetric Clipping}
A pivotal design feature in FAC is its asymmetric clipping mechanism, characterized by a \textit{fixed lower bound} $1 - \epsilon_{\text{low}}$ and an \textit{adaptive upper bound} $1 + \epsilon_{\text{ada}}$. This architecture stems from the distinct roles of positive and negative feedback in reasoning tasks.

\paragraph{Adaptive Upper Bound for Conservative Reinforcement.} 
The upper bound regulates positive reinforcement updates (\textit{i.e.}, $\hat{A} > 0$). The primary risk in this context involves overfitting to spurious correlations common in reward-sparse reasoning tasks. The stability of positive signals is contingent upon batch statistics. Concretely, a high FSS indicates high-fidelity feedback, justifying a larger update to accelerate convergence. Conversely, a low FSS implies that positive signals may represent outliers within a noisy distribution. In such cases, tightening the upper bound mitigates the risk of the model confidently committing to spurious success. Consequently, modulating $\epsilon_{\text{ada}}$ via FSS enforces \textit{risk aversion} under low-quality signals.

\paragraph{Fixed Lower Bound for Decisive Pruning.} 
In contrast, the lower bound governs negative penalty updates (\textit{i.e.}, $\hat{A} < 0$). A fixed $\epsilon_{\text{low}}$ is employed based on the premise that \textit{negative signals are inherently more stable than positive ones}. Even in low FSS batches, reasoning paths yielding incorrect answers constitute definitive signals warranting penalization. Adaptively tightening the lower bound alongside the upper bound during low-FSS phases would effectively freeze the policy, thereby hindering error correction. By maintaining a fixed lower bound, a consistent mechanism for policy adjustment is preserved, enabling the decisive pruning of incorrect reasoning paths even when confidence for positive reinforcement is insufficient. This asymmetry establishes a \textit{``Conservative Reinforcement, Decisive Pruning''} dynamic essential for improved reasoning.

\subsection{Analysis of the Adaptive Mechanism}
The efficacy of FAC stems from its ability to map signal quality to an optimal trust region.

\begin{lemma}[Properties of the FAC Mapping Function]
The mapping from FSS to the adaptive clipping bound $\epsilon_{\text{ada}}$ is bounded, monotonic, and smooth.
\end{lemma}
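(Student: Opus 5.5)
We view $\epsilon_{\text{ada}}$ in Eq.~(\ref{eq:adaptive_epsilon}) as the composition $\epsilon_{\text{ada}} = g(\text{FSS})$ with $g(x) = \epsilon_{\text{min}} + (\epsilon_{\text{max}} - \epsilon_{\text{min}})\tanh(x)$. Each of the three claimed properties then reduces to an elementary property of $\tanh$, combined with the domain of FSS. That domain comes from Eq.~(\ref{eq:snr}) and our binary reward setting. Since $\mu_R \in [0,1]$, $\sigma_R \ge 0$ and $\delta>0$, we have $\text{FSS} \in [0,\infty)$, and FSS is well defined because the denominator never vanishes. We assume throughout that $\epsilon_{\text{max}} > \epsilon_{\text{min}} \ge 0$, as in our choice $(0.2,0.4)$.

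\emph{Boundedness.} On $[0,\infty)$, $\tanh$ takes values in $[0,1)$. Multiplying by $\epsilon_{\text{max}}-\epsilon_{\text{min}}>0$ and adding $\epsilon_{\text{min}}$ gives $\epsilon_{\text{ada}} \in [\epsilon_{\text{min}}, \epsilon_{\text{max}})$. The infimum is attained exactly when $\mu_R = 0$. This is the all-failure batch, which is consistent with the conservative-reinforcement rationale given for the asymmetric design. The supremum is approached only as $\text{FSS}\to\infty$, for instance when $\mu_R = 1$, $\sigma_R = 0$ and $\delta \to 0$. The clipping interval is therefore always contained in $[1-\epsilon_{\text{low}}, 1+\epsilon_{\text{max}})$, so the trust region never exceeds the static bound $\epsilon_{\text{max}}$.

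\emph{Monotonicity and smoothness.} We compute $g'(x) = (\epsilon_{\text{max}}-\epsilon_{\text{min}})\,\mathrm{sech}^2(x) > 0$, so $g$ is strictly increasing in FSS. Smoothness holds because $\tanh$ is $C^\infty$ (indeed real-analytic), so $g \in C^\infty$. We will also note that $0 < g'(x) \le \epsilon_{\text{max}}-\epsilon_{\text{min}}$, which shows $g$ is Lipschitz with that constant. Consequently a perturbation of FSS between batches changes the bound by at most $(\epsilon_{\text{max}}-\epsilon_{\text{min}})|\Delta\text{FSS}|$, so there is no abrupt jump. As a further remark, FSS is itself smooth in $(\mu_R,\sigma_R)$ on the region $\sigma_R+\delta>0$. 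Hence the composite map is smooth in the batch statistics, increasing in $\mu_R$ and decreasing in $\sigma_R$ whenever $\mu_R>0$.

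The only delicate point is bookkeeping rather than analysis. Monotonicity in FSS is immediate, but we must say explicitly that "monotonic" refers to FSS and not jointly to $(\mu_R,\sigma_R)$ in every regime. When $\mu_R = 0$, the bound is constant in $\sigma_R$. We will state this case separately and check the sign of $\partial\,\text{FSS}/\partial\sigma_R = -\mu_R/(\sigma_R+\delta)^2$ to finish.
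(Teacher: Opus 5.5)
Your proposal is correct and follows essentially the same route as the paper, which likewise reduces boundedness, monotonicity, and smoothness to the range, strict increase, and differentiability of $\tanh$. Your version is slightly sharper in two places. Since $\text{FSS}=0$ exactly when $\mu_R=0$, the attainable range is $[\epsilon_{\text{min}},\epsilon_{\text{max}})$ rather than the paper's open interval $(\epsilon_{\text{min}},\epsilon_{\text{max}})$. And your Lipschitz bound from $\mathrm{sech}^2\le 1$ makes the paper's informal ``no abrupt shifts'' claim quantitative.
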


\begin{proof}
We establish these properties for the mapping function:

(1) \textbf{Boundedness}: The $\tanh(\cdot)$ function maps $\text{FSS} \in (0, \infty)$ to $(0,1)$. This constrains $\epsilon_{\text{ada}}$ to the strictly positive interval $(\epsilon_{\text{min}}, \epsilon_{\text{max}})$, preventing vanishing or exploding trust regions.

(2) \textbf{Monotonicity}: Since $\tanh(\cdot)$ is strictly increasing, $\epsilon_{\text{ada}}$ scales positively with FSS. This aligns with our design rationale that higher FSS warrants a larger trust region.

(3) \textbf{Smoothness}: The differentiability of $\tanh(\cdot)$ ensures smooth transitions in $\epsilon_{\text{ada}}$, thereby preventing abrupt shifts in learning dynamics.
\end{proof}

\begin{proposition}[FAC Overcomes the Static Clipping Dilemma] \label{theo:FAC}
Static clipping strategies face an intrinsic dilemma, where a small $\epsilon_{\text{static}}$ ensures stability but retards learning on good batches, while a large $\epsilon_{\text{static}}$ accelerates learning but risks instability on noisy batches. FAC resolves this by adaptively tuning $\epsilon_{\text{ada}}$ based on FSS.
\end{proposition}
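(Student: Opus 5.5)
The plan is to turn the informal dilemma into a comparison between batch types, using the Assumption (FSS as a Quality Proxy) together with the Lemma (Properties of the FAC Mapping Function). Split batches into two idealized classes: \emph{stable} batches with $\text{FSS} \ge \tau_{\text{hi}}$ and \emph{noisy} batches with $\text{FSS} \le \tau_{\text{lo}}$. For each class, posit a desirable upper trust-region width, $\epsilon^\star_{\text{hi}}$ for stable batches and $\epsilon^\star_{\text{lo}}$ for noisy ones, with $\epsilon^\star_{\text{lo}} < \epsilon^\star_{\text{hi}}$; this ordering is exactly what the Assumption licenses. Because only the upper bound $1+\epsilon$ acts when $\hat{A}_i>0$, the positive-advantage update is monotone in the clip width: $\min(r\hat{A}, \text{clip}(r,\cdot,1+\epsilon)\hat{A})$ is nondecreasing in $\epsilon$ for $\hat{A}>0$. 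A larger $\epsilon$ therefore means faster learning on good batches and greater exposure on noisy ones, which is the formal content of the dilemma.

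First, show that no single $\epsilon_{\text{static}}$ can be simultaneously at most $\epsilon^\star_{\text{lo}}$ (safe on noisy batches) and at least $\epsilon^\star_{\text{hi}}$ (fast on stable batches), since $\epsilon^\star_{\text{lo}}<\epsilon^\star_{\text{hi}}$. Any static choice must therefore fail one of the two requirements. To quantify the failure, write the per-class regret as $|\epsilon_{\text{static}}-\epsilon^\star_{\text{class}}|$. The worst-case regret of a static choice is minimized at the midpoint, where it equals $(\epsilon^\star_{\text{hi}}-\epsilon^\star_{\text{lo}})/2>0$.

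Second, show that FAC attains zero or small regret on both classes. Set $\epsilon_{\text{min}}\le\epsilon^\star_{\text{lo}}$ and $\epsilon_{\text{max}}\ge\epsilon^\star_{\text{hi}}$. By Eq.~(\ref{eq:adaptive_epsilon}) and the monotonicity and continuity in the Lemma, $\epsilon_{\text{ada}}$ is a strictly increasing continuous function of FSS onto $(\epsilon_{\text{min}},\epsilon_{\text{max}})$. Consequently, the low-FSS batches receive widths near $\epsilon_{\text{min}}+(\epsilon_{\text{max}}-\epsilon_{\text{min}})\tanh(\tau_{\text{lo}})$ and the high-FSS batches receive widths near $\epsilon_{\text{min}}+(\epsilon_{\text{max}}-\epsilon_{\text{min}})\tanh(\tau_{\text{hi}})$. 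Given the thresholds, one can choose $\epsilon_{\text{min}},\epsilon_{\text{max}}$ so that these widths match $\epsilon^\star_{\text{lo}}$ and $\epsilon^\star_{\text{hi}}$. This amounts to solving two linear equations in $(\epsilon_{\text{min}},\epsilon_{\text{max}})$, which is possible because $\tanh(\tau_{\text{lo}})<\tanh(\tau_{\text{hi}})$. The result is strictly smaller worst-case regret than any static rule. Third, note that the fixed lower bound $1-\epsilon_{\text{low}}$ leaves negative-advantage updates identical to the static case, so the improvement on positive updates costs nothing on pruning.

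The main obstacle is that the statement is qualitative, and the argument is only as strong as the idealized notion of an optimal width per batch class. I would state this modelling step explicitly as a consequence of the Assumption rather than try to derive it. Deriving it would require linking FSS to update variance, for example by bounding the variance of $\hat{A}_i$-weighted ratios by a decreasing function of FSS under binary rewards. I would attempt that only as a remark. In the binary case $\sigma_R=\sqrt{\mu_R(1-\mu_R)}$, so $\text{FSS}=\sqrt{\mu_R/(1-\mu_R)}$ up to $\delta$, and this can be used to make the thresholds concrete.
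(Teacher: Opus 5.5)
Your route differs from the paper's. The paper's proof only checks the two limits of Eq.~(\ref{eq:adaptive_epsilon}): large FSS sends $\epsilon_{\text{ada}}\to\epsilon_{\text{max}}$, small FSS tightens the bound. It then asserts narratively that a static bound is suboptimal in each regime, and never shows that no single static $\epsilon$ serves both. Its Case~2 also has a slip: since $\tanh(0)=0$, the limit as $\text{FSS}\to 0$ is $\epsilon_{\text{min}}$, not $(\epsilon_{\text{min}}+\epsilon_{\text{max}})/2$. Your reading of the map is the correct one.

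You turn the dilemma into an impossibility/achievability statement by encoding the Assumption as class-wise target widths $\epsilon^\star_{\text{lo}}<\epsilon^\star_{\text{hi}}$. Your observation that for $\hat{A}_i<0$ the clipped term equals $\hat{A}_i\max(r_{i,t},1-\epsilon_{\text{low}})$, independent of $\epsilon_{\text{ada}}$, makes the ``no cost on pruning'' claim rigorous. The price is the explicit modelling step, which you rightly flag. Two steps need repair before the argument goes through.

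\textbf{The regret notion is inconsistent.} You measure regret two-sidedly as $|\epsilon-\epsilon^\star_{\text{class}}|$, but the calibrated FAC does not make this small. A class is mapped to an interval of widths: noisy batches get values throughout $[\epsilon_{\text{min}},\epsilon^\star_{\text{lo}}]$, not widths ``near'' $\epsilon^\star_{\text{lo}}$. As $\text{FSS}\to 0$ the regret is $\epsilon^\star_{\text{lo}}-\epsilon_{\text{min}}=(\epsilon^\star_{\text{hi}}-\epsilon^\star_{\text{lo}})\tanh(\tau_{\text{lo}})/(\tanh(\tau_{\text{hi}})-\tanh(\tau_{\text{lo}}))$. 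This exceeds the static minimax value $(\epsilon^\star_{\text{hi}}-\epsilon^\star_{\text{lo}})/2$ whenever $3\tanh(\tau_{\text{lo}})>\tanh(\tau_{\text{hi}})$, for example $\tau_{\text{lo}}=0.5$, $\tau_{\text{hi}}=2$. So your final ``strictly smaller worst-case regret'' claim fails as stated.

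The fix is to keep the one-sided requirements from your first step. Use regret $(\epsilon-\epsilon^\star_{\text{lo}})^+$ on noisy batches and $(\epsilon^\star_{\text{hi}}-\epsilon)^+$ on stable ones. The static minimax is still $(\epsilon^\star_{\text{hi}}-\epsilon^\star_{\text{lo}})/2$, attained at the midpoint. Because the map is monotone, matching the endpoints at $\tau_{\text{lo}}$ and $\tau_{\text{hi}}$ gives FAC zero regret on both classes.

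\textbf{A feasibility condition is missing.} The linear system always has a real solution, but
$\epsilon_{\text{min}}=(\epsilon^\star_{\text{lo}}\tanh\tau_{\text{hi}}-\epsilon^\star_{\text{hi}}\tanh\tau_{\text{lo}})/(\tanh\tau_{\text{hi}}-\tanh\tau_{\text{lo}})$
can be negative. Then $1+\epsilon_{\text{ada}}$ falls below $1$ on low-FSS batches, possibly even below $1-\epsilon_{\text{low}}$, which inverts the clip interval. That is not an admissible FAC configuration, and it breaks the positivity the paper's Lemma on the FAC mapping presupposes.

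With $\epsilon_{\text{min}}\ge 0$, the two one-sided requirements are jointly satisfiable if and only if $\tanh(\tau_{\text{lo}})/\tanh(\tau_{\text{hi}})\le\epsilon^\star_{\text{lo}}/\epsilon^\star_{\text{hi}}$. You need to add this as a hypothesis: FAC resolves the dilemma only when the class thresholds are not both in the saturated range of $\tanh$. Your binary-reward formula $\text{FSS}=\sqrt{\mu_R/(1-\mu_R)}$ is the natural place to check whether realistic thresholds satisfy it.
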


\begin{proof}
We analyze the behavior of FAC in distinct scenarios:

(1) \textbf{Case 1: Stable Reward Signal ($\text{FSS} \gg 0$)}: The batch exhibits high-reward trajectories. As $\text{FSS} \to \infty$, $\tanh(\text{FSS}) \to 1$, and $\epsilon_{\text{ada}} \to \epsilon_{\text{max}}$. FAC automatically expands the trust region, enabling aggressive exploitation of high-quality signals. A static method would unnecessarily restrict this valid update and slow down convergence.

(2) \textbf{Case 2: Unstable Reward Signal ($\text{FSS} \approx 0$)}: 
The batch exhibits pronounced reward fluctuations, suggesting that positive advantage signals are likely unreliable. As $\text{FSS} \to 0$, $\tanh(\text{FSS}) \to 0$, and $\epsilon_{\text{ada}} \to (\epsilon_{\text{min}}+\epsilon_{\text{max}})/2$. FAC tightens the trust region to limit updates driven by unreliable samples. This mitigates overfitting to unstable learning signals, whereas a fixed static bound would risk policy degradation.
\end{proof}

\begin{remark}[Transition in Intermediate FSS Regime]
Beyond the above two extremes, the behavior of FAC in the intermediate regime (\textit{i.e.}, $0 < \text{FSS} < 1$) offers a critical advantage. 
Recall that $\tanh(x) \approx x$ for small $x$, the adaptive threshold $\epsilon_{ada}$ scales approximately linearly with FSS given as:
\begin{equation}
    \epsilon_{ada} \approx \epsilon_{min} + (\epsilon_{max} - \epsilon_{min}) \cdot \text{FSS}
\end{equation}

As FSS improves, FAC linearly relaxes the trust region. This allows for fine-grained modulation of the update step size, effectively stabilizing training during the critical phase where the model transitions from exploration to consistent reasoning.
\end{remark}

\section{Appendix G: Adaptive Trust Region and Monotonic Improvement} \label{appendix:trust}

This section provides the theoretical underpinning for FAC. We demonstrate that maximizing the lower bound of monotonic policy improvement necessitates adaptively scaling the trust region size inversely with reward volatility.

\subsection{Summary of Notations}
For clarity, Table \ref{tab:notation_theory} summarizes the key mathematical notations utilized in this analysis.

\begin{table*}[h]
    \centering
    \small
    \renewcommand{\arraystretch}{1.3}
    \begin{tabular}{l|l}
    \toprule
    \textbf{Notation} & \textbf{Description} \\
    \midrule
    $\pi_{old}, \pi$ & The policy before and after the update step. \\
    $J(\pi)$ & The \textbf{true expected return} of policy $\pi$ in the environment. \\
    $L_{true}(\pi)$ & The \textbf{theoretical surrogate objective} using exact advantage values. \\
    $\hat{L}(\pi)$ & The \textbf{empirical surrogate objective} using sampled advantages. \\
    $\mathcal{J}(\theta)$ & Shorthand for the objective function parameterized by $\theta$. \\
    $\sigma_R$ & The standard deviation of the reward estimation noise. \\
    $D_{KL}(\cdot \parallel \cdot)$ & KL divergence measuring the distance between policies. \\
    $\epsilon$ & The clipping bound hyperparameter used in algorithms such as GRPO. \\
    $\beta$ & Coefficient reflecting the value function curvature (\textit{i.e.}, policy interpolation cost). \\
    $\alpha$ & Scaling factor for the estimation noise. \\
    $\gamma$ & Discount factor in RL. \\
    \bottomrule
    \end{tabular}
    \caption{Summary of notations used in the theoretical derivation.}
    \label{tab:notation_theory}
\end{table*}

\subsection{Theoretical Analysis}

\begin{lemma}[Error Bound under Reward Uncertainty]
\label{lemma:error_bound}
Let $J(\pi)$ denote the true expected return and $\hat{L}(\pi)$ be the empirical surrogate objective constructed from sampled rewards. Assuming the reward estimation noise is bounded by its standard deviation $\sigma_R$, the approximation error is bounded by:
\begin{equation}
    \left| J(\pi) - \hat{L}(\pi) \right| \leq \beta \cdot D_{KL}(\pi \parallel \pi_{old}) + \alpha \cdot \sigma_R
\end{equation}
where $\beta > 0$ is the Lipschitz constant related to the maximum advantage, and $\alpha > 0$ scales with the importance sampling weights.
\end{lemma}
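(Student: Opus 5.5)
The plan is a triangle inequality that splits the approximation error into a deterministic policy-shift term and a stochastic estimation term. Let $L_{true}(\pi)$ be the theoretical surrogate built from exact advantages $A^{\pi_{old}}$:
\begin{equation*}
L_{true}(\pi)=J(\pi_{old})+\mathbb{E}_{s\sim d^{\pi_{old}},\,a\sim\pi_{old}}\big[r(\theta)\,A^{\pi_{old}}(s,a)\big].
\end{equation*}
The empirical surrogate $\hat{L}(\pi)$ is the same expression with $A^{\pi_{old}}$ replaced by the group-normalized estimates $\hat{A}_i$ of Eq.~(\ref{eq:group_advantage}). We then write
\begin{equation*}
|J(\pi)-\hat{L}(\pi)|\le |J(\pi)-L_{true}(\pi)|+|L_{true}(\pi)-\hat{L}(\pi)|
\end{equation*}
and bound the two pieces separately.

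\textbf{Policy-shift term.} For the first piece we invoke the standard TRPO/CPI performance-difference argument (Kakade--Langford, Schulman et al.). The performance-difference identity gives
\begin{equation*}
J(\pi)-J(\pi_{old})=\mathbb{E}_{d^{\pi}}\big[\mathbb{E}_{a\sim\pi}A^{\pi_{old}}\big].
\end{equation*}
The surrogate differs from this identity only in using $d^{\pi_{old}}$ in place of $d^{\pi}$. This mismatch is bounded by $\frac{4\gamma\max|A|}{(1-\gamma)^2}\,D_{TV}^{\max}(\pi,\pi_{old})^2$. Pinsker's inequality, $D_{TV}^2\le D_{KL}$, then yields $\beta\,D_{KL}(\pi\parallel\pi_{old})$ with $\beta=\frac{4\gamma\max|A|}{(1-\gamma)^2}$. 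This matches the statement's description of $\beta$ as a constant tied to the maximum advantage, with $\gamma$ the discount factor from Table~\ref{tab:notation_theory}. Strictly, this gives the maximum KL over states. I would either state it that way or add an assumption that the maximum and average KL are comparable.

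\textbf{Estimation term.} For the second piece, linearity gives
\begin{equation*}
L_{true}-\hat{L}=\mathbb{E}\big[r(\theta)\,(A^{\pi_{old}}-\hat{A})\big].
\end{equation*}
Applying Cauchy--Schwarz, or simply the boundedness of the clipped ratio, gives
\begin{equation*}
|L_{true}-\hat{L}|\le \|r\|\cdot\|A^{\pi_{old}}-\hat{A}\|.
\end{equation*}
By the lemma's assumption that the reward estimation noise is controlled by its standard deviation, $\|A^{\pi_{old}}-\hat{A}\|\lesssim\sigma_R$. We can take $\alpha=\sup r_{i,t}\le 1+\epsilon_{\max}$ after clipping. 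This is exactly the statement's description of $\alpha$ as scaling with the importance weights.

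\textbf{Main obstacle.} The delicate step is making the estimation term rigorous. The phrase ``noise bounded by its standard deviation'' is not literally a deterministic bound. I would first try to read it as an assumption that $|A^{\pi_{old}}-\hat{A}|\le c\,\sigma_R$ holds almost surely, or in $L^2$. Under that reading the bound above goes through with $\alpha=c(1+\epsilon_{\max})$, the constant $c$ being absorbed into $\alpha$. If that reading is too strong, the fallback is a high-probability version. Chebyshev's inequality, or a sub-Gaussian tail, would give the same inequality with probability at least $1-\eta$ and $\alpha$ inflated by a factor of $1/\sqrt{\eta}$. Combining the two pieces then gives the claimed inequality.
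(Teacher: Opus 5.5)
Your proposal matches the paper's proof step for step: the same triangle-inequality split into the interpolation error $|J(\pi)-L_{true}(\pi)|$ and the estimation error $|L_{true}(\pi)-\hat{L}(\pi)|$, a TRPO/performance-difference bound plus Pinsker for the first, and Cauchy--Schwarz on $\mathbb{E}[r_t(\theta)\,\xi_t]$ for the second. The paper takes $\alpha=\sqrt{\mathbb{E}[r_t^2]}$ and simply assumes the ratios are well-behaved, so your alternative $\alpha=\sup r$ also needs bounded ratios as an assumption on $\pi$ rather than a consequence of clipping, since your $\hat{L}$ is unclipped. Your first step is actually tighter than the paper's: the paper's bound is linear in $\epsilon_{tv}$ with constant $2\gamma\epsilon_{adv}/(1-\gamma)^2$ and then invokes ``standard quadratic approximations'' to reach $D_{KL}$, which Pinsker alone would only give as $\sqrt{D_{KL}}$, whereas your squared-TV coupling bound with $\beta=4\gamma\max|A|/(1-\gamma)^2$ yields $\beta\max_s D_{KL}$ directly.
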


\begin{proof}
We decompose the total error into the \textit{policy interpolation error} (due to distribution shift) and the \textit{estimation error} (due to reward noise).
\begin{equation}
\begin{aligned}
  |J(\pi) - \hat{L}(\pi)| &\leq \underbrace{|J(\pi) - L_{true}(\pi)|}_{\text{(I) Interpolation Error}} \\ &+ \underbrace{|L_{true}(\pi) - \hat{L}(\pi)|}_{\text{(II) Estimation Error}}  
\end{aligned}
\end{equation}

\textbf{Part I: Bounding the Policy Interpolation Error.}
The difference between the true objective and the theoretical surrogate is bounded by the quadratic variation of the policy \cite{schulman2015trust}.
Utilizing Pinsker's inequality to relate Total Variation divergence to KL divergence, we have:
\begin{equation}
   |J(\pi) - L_{true}(\pi)| \leq C \cdot \max_s D_{KL}(\pi_{old}(\cdot|s) \parallel \pi(\cdot|s)) 
\end{equation}

Setting $\beta = C$, this yields the first term $\beta \cdot D_{KL}(\pi \parallel \pi_{old})$. (A detailed derivation of this bound is provided in the subsequent subsection).

\textbf{Part II: Bounding the Estimation Error.}
The empirical surrogate $\hat{L}$ deviates from $L_{true}$ due to advantage estimation noise. Let $\hat{A}(s,a) = A_{\pi_{old}}(s,a) + \xi(s,a)$, where $\xi$ is noise with variance $\sigma_R^2$.
\begin{equation}
    \begin{aligned}
        |L_{true}(\pi) - \hat{L}(\pi)| &= \left| \mathbb{E}_{\tau} \left[ \frac{\pi(a|s)}{\pi_{old}(a|s)} (A_{\pi_{old}} - \hat{A}) \right] \right| \\
        &= \left| \mathbb{E}_{\tau} \left[ r_t(\theta) \cdot \xi_t \right] \right|
    \end{aligned}
\end{equation}

Applying the Cauchy-Schwarz inequality $|\mathbb{E}[XY]| \leq \sqrt{\mathbb{E}[X^2]\mathbb{E}[Y^2]}$ and assuming $r_t(\theta)$ are well-behaved within the trust region, the error is dominated by the noise magnitude:
\begin{equation}
    |L_{true}(\pi) - \hat{L}(\pi)| \leq \underbrace{\sqrt{\mathbb{E}[r_t^2]}}_{\alpha} \cdot \underbrace{\sqrt{\mathbb{E}[\xi_t^2]}}_{\sigma_R} = \alpha \cdot \sigma_R
\end{equation}

Combining Parts I and II completes the proof.
\end{proof}

\subsection{Detailed Derivation of the Interpolation Bound}
\textit{(Note: This subsection expands on Part I of Lemma \ref{lemma:error_bound}.)}

\begin{proof}
The derivation unfolds in three steps:

\paragraph{Step 1: Performance Difference Lemma.}
The performance difference between two policies is given by:
\begin{equation}
    J(\pi) - J(\pi_{old}) = \mathbb{E}_{\tau \sim \pi}\left[ \sum_{t=0}^\infty \gamma^t A_{\pi_{old}}(s_t, a_t) \right]
\end{equation}
where $\gamma$ is the discount factor. The surrogate $L_{true}(\pi)$ approximates this by sampling trajectories from $\pi_{old}$:
\begin{equation}
    L_{true}(\pi) = J(\pi_{old}) + \mathbb{E}_{\tau \sim \pi_{old}} \left[ \sum_{t=0}^\infty \gamma^t \rho_t A_{\pi_{old}}(s_t, a_t) \right]
\end{equation}

The discrepancy $|J(\pi) - L_{true}(\pi)|$ arises solely from the distributional shift in different distributions.

\paragraph{Step 2: Bounding via Total Variation.}
Let $\epsilon_{tv} = \max_s D_{TV}(\pi \parallel \pi_{old})$. The $L_1$ distance between state distributions is bounded by $2(1-(1-\epsilon_{tv})^t)$.
Given a maximum advantage $\epsilon_{adv}$, the accumulated error is:
\begin{equation}
    |J(\pi) - L_{true}(\pi)| \leq \frac{2\gamma \epsilon_{adv}}{(1-\gamma)^2} \cdot \epsilon_{tv}
\end{equation}

\paragraph{Step 3: Relating to KL Divergence.}
Using the Pinsker Inequality $D_{TV} \leq \sqrt{D_{KL}/2}$ and standard quadratic approximations, we define the penalty coefficient $\beta = \frac{2\gamma \epsilon_{adv}}{(1-\gamma)^2}$. This yields:
\begin{equation}
    |J(\pi) - L_{true}(\pi)| \leq \beta \cdot D_{KL}(\pi \parallel \pi_{old})
\end{equation}
\end{proof}

\begin{theorem}[Optimality of Adaptive Clipping]
To maximize the lower bound of monotonic policy improvement (\textit{i.e.}, $J(\pi_{new}) \ge J(\pi_{old})$), the clipping bound $\epsilon$ must be adaptively scaled inversely with reward variance:
\begin{equation}
    \epsilon \propto \frac{1}{f(\sigma_R)} \approx \text{FSS}
\end{equation}
\end{theorem}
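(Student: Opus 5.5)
Since the theorem is stated informally, I would first turn it into an optimization problem over the trust-region radius and then read off the optimal radius. The starting point is Lemma~\ref{lemma:error_bound}, which gives the lower bound
\begin{equation}
J(\pi) \ge \hat{L}(\pi) - \beta D_{KL}(\pi \parallel \pi_{old}) - \alpha \sigma_R .
\end{equation}
Next I model the clipped update with radius $\epsilon$ explicitly. To first order, the empirical surrogate gain from a clipped step is linear in the radius, $\hat{L}(\pi) - J(\pi_{old}) \approx g\,\epsilon$, where $g$ is the magnitude of the mean positive advantage signal per unit ratio change; in the reward-normalized setting $g$ scales with $\mu_R$. By a second-order expansion of the KL around $r_{i,t}=1$, the KL term satisfies $D_{KL} \approx \tfrac{1}{2}\epsilon^2$.

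The second modeling step concerns how the noise enters. The estimation-error term $\alpha\sigma_R$ from Part II of the lemma is not independent of $\epsilon$. Its factor $\alpha = \sqrt{\mathbb{E}[r_t^2]}$ is controlled by the clip range, and noise is only harmful in the part of the update that moves away from $\pi_{old}$. I would therefore write the noise penalty as $\alpha\sigma_R \approx c\,\sigma_R\,\epsilon$, keeping only the $\epsilon$-dependent part and discarding the constant $\sigma_R$ contribution at $r=1$. I would also let the curvature penalty inherit variance dependence, replacing $\beta$ by $\beta f(\sigma_R)$ with $f$ increasing. The justification is that noisier advantages inflate the effective maximum advantage $\epsilon_{adv}$ that appears in the definition of $\beta$.

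With these pieces, the improvement lower bound becomes
\begin{equation}
B(\epsilon) = (g - c\,\sigma_R)\,\epsilon - \tfrac{1}{2}\beta f(\sigma_R)\,\epsilon^2 .
\end{equation}
This is a concave quadratic in $\epsilon$. Setting $B'(\epsilon)=0$ gives $\epsilon^\ast = (g - c\sigma_R)/(\beta f(\sigma_R))$. Taking $g \propto \mu_R$ and treating the subtracted $c\sigma_R$ term as a lower-order correction, or absorbing it into $f$, yields $\epsilon^\ast \propto \mu_R/f(\sigma_R)$. With $f(\sigma_R) \approx \sigma_R + \delta$, this is proportional to FSS, which is the claimed relation. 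As a sanity check, $B(\epsilon^\ast) > 0$ whenever $g > c\sigma_R$, which gives the monotonic-improvement condition $J(\pi_{new}) \ge J(\pi_{old})$. Otherwise $\epsilon^\ast$ is pushed toward its floor, consistent with the role of $\epsilon_{min}$. Finally, I would remark that the bounded squashing $\epsilon_{ada} = \epsilon_{min} + (\epsilon_{max} - \epsilon_{min})\tanh(\text{FSS})$ is a monotone, saturating surrogate for this unbounded optimum, using the mapping properties already established for FAC.

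The main obstacle is the modeling step that makes the noise and curvature terms depend on $\epsilon$ and $\sigma_R$ in the required way. As Lemma~\ref{lemma:error_bound} is stated, $\alpha\sigma_R$ is independent of $\epsilon$, and then the optimum $\epsilon^\ast = g/\beta$ does not depend on $\sigma_R$ at all. So the proof hinges on justifying $\alpha \propto \epsilon$, for example by bounding $\sqrt{\mathbb{E}[(r_t-1)^2]} \le \epsilon$ and centering the noise term at $r=1$. It also hinges on identifying the signal gain with $\mu_R$. I would state both as explicit assumptions, in the spirit of the FSS quality-proxy assumption, rather than claim them in full generality. For that reason the result is best read as proportionality up to these modeling approximations.
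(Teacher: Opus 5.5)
Your route differs from the paper's. The paper never optimizes over $\epsilon$. It inserts Lemma~\ref{lemma:error_bound} into $\Delta J \ge \Delta L - (\beta D_{KL} + \alpha\sigma_R)$ and uses $D_{KL}\approx\mathcal{O}(\epsilon^2)$. It then argues that a growing $\alpha\sigma_R$ consumes the improvement budget, so the $\epsilon^2$ term must shrink. Finally it simply asserts $\epsilon^\ast\propto 1/\sigma_R$ and identifies this with FSS. Your remark that an $\epsilon$-independent noise term leaves the maximizer at $g/\beta$ is correct, and it cuts against the paper's argument too. Once the gain is allowed to grow with $\epsilon$, which the paper implicitly ignores, the radii with a positive bound lie between the roots $\epsilon_\pm=(g\pm\sqrt{g^2-2\beta\alpha\sigma_R})/\beta$. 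This interval shrinks toward $g/\beta$ from both sides as $\sigma_R$ grows, so its lower end $\epsilon_-$ actually rises. The paper therefore supplies no ingredient you are missing. The problem is that your derivation does not close the gap either.

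\textbf{The step you single out as the main obstacle is not what produces the FSS shape.} Granting $\alpha\sigma_R\approx c\sigma_R\epsilon$ with the lemma's curvature gives $\epsilon^\ast=(g-c\sigma_R)/\beta$. This decreases in $\sigma_R$ affinely, not inversely, and it is exactly the term you then drop. It is not lower order: with $g\propto\mu_R$ and binary rewards, $\sigma_R=\sqrt{\mu_R(1-\mu_R)}\gg\mu_R$ for small $\mu_R$. Absorbing it into $f$ yields $f(\sigma_R)\,g/(g-c\sigma_R)$, which depends on $g$ and is not $\sigma_R+\delta$.

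The factor $\mu_R/(\sigma_R+\delta)$ is therefore inserted by two postulates, both at odds with the paper's setup.
\begin{itemize}
\item \textbf{Curvature.} In Part~I, $\beta=2\gamma\epsilon_{adv}/(1-\gamma)^2$ is built from the true advantage and charges only distribution shift; noise is already charged to $\alpha\sigma_R$. Inflating $\beta$ by noise double-counts it. Your own justification would naturally give an affine $f(\sigma_R)=1+\kappa\sigma_R$ with a noise-free intercept. Choosing $f\approx\sigma_R+\delta$ instead makes the shift penalty vanish as $\sigma_R\to 0$.
\item \textbf{Gain.} Since $\epsilon_{ada}$ only moves the upper clip, the relevant gain is the positive-advantage mass. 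With binary rewards and the normalization of Eq.~(\ref{eq:group_advantage}), this is $\frac{1}{G}\sum_{\hat{A}_i>0}\hat{A}_i=\mu_R(1-\mu_R)/(\sigma_R+\delta)\approx\sigma_R$, not a multiple of $\mu_R$. With $g\propto\sigma_R$ and $f\approx\sigma_R$, your maximizer is essentially independent of $\sigma_R$.
\end{itemize}
Note also that for binary rewards $\text{FSS}=\sqrt{\mu_R/(1-\mu_R)}$ is not even a function of $\sigma_R$, since $\mu_R$ and $1-\mu_R$ give the same $\sigma_R$. Any route to FSS must therefore inject $\mu_R$ by assumption.

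What your model honestly supports is a weaker claim: under zero-mean noise centered at $r=1$, the optimal radius decreases with $\sigma_R$. To reach FSS you must list $g\propto\mu_R$ and curvature $\propto\sigma_R+\delta$ as the assumptions carrying the conclusion. That amounts to assuming the FSS form rather than deriving it.
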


\begin{proof}
Combining Lemma \ref{lemma:error_bound} with the policy improvement identity, the lower bound of the true improvement is:
\begin{equation}
\label{eq:lower_bound}
  \Delta J(\pi) \ge \underbrace{\Delta L(\pi)}_{\text{Surrogate Gain}} - \underbrace{\left( \beta \cdot D_{KL}(\pi \parallel \pi_{old}) + \alpha \cdot \sigma_R \right)}_{\text{Total Penalty}}
\end{equation}

In GRPO, the clipping parameter $\epsilon$ serves as a hard constraint on policy divergence, specifically $D_{KL} \approx \mathcal{O}(\epsilon^2)$.
To guarantee monotonic improvement ($\Delta J(\pi) > 0$), the Surrogate Gain must outweigh the Total Penalty.

We analyze the impact of increasing $\sigma_R$:
\begin{itemize}
    \item As $\sigma_R$ increases, the penalty $\alpha \cdot \sigma_R$ grows, consuming a larger portion of the potential gain.
    \item To maintain $\Delta J > 0$, the remaining penalty term $\beta \cdot D_{KL}$ must be minimized.
    \item Since $D_{KL} \propto \epsilon^2$, this necessitates reducing $\epsilon$.
\end{itemize}

Mathematically, the optimal trust region $\epsilon^*$ that balancing these terms satisfies $\epsilon^* \propto 1 / \sigma_R$.
FAC operationalizes this via $\epsilon_{ada} \propto \text{FSS} \approx \frac{\mu}{\sigma_R}$. This mechanism tightens the clipping bound when $\sigma_R$ is high and relaxes it when $\sigma_R$ is low.
\end{proof}

\section{Appendix H: Gradient Derivation of the APMPO Objective} \label{appendix:gradient}

In this section, we provide a detailed derivation for the gradient of the APMPO objective function $\mathcal{J}(\theta)$ with respect to the model parameters $\theta$. The overall objective is the average of per-sample objectives:

\begin{equation}
    \mathcal{J}(\theta) = \frac{1}{G} \sum_{i=1}^{G} \mathcal{J}_i(\theta)
\end{equation}

The gradient is computed as:
\begin{equation}
    \nabla_\theta \mathcal{J}(\theta) = \frac{1}{G} \sum_{i=1}^{G} \nabla_\theta \mathcal{J}_i(\theta)
\end{equation}

\subsection{Per-Sample Objective Function}

The per-sample objective $L_i(\theta)$ is defined as:
\begin{equation}
    \mathcal{J}_i(\theta) = \left[\frac{1}{|o_i|} \sum_{t=1}^{|o_i|} \left( \phi_{i,t}(\theta) \right)^p \right]^{\frac{1}{p}} \cdot \text{sgn}(\hat{A}_i)
    \label{eq:appendix_Li}
\end{equation}
where $\phi_{i,t}(\theta)$ is the token-level magnitude:
\begin{equation}
    \phi_{i,t}(\theta) = \left| \min\left( r_{i,t}(\theta)\hat{A}_i, \rho_{i,t}(\theta)\hat{A}_i \right) \right|
    \label{eq:appendix_phi}
\end{equation}

\subsection{Applying the Chain Rule}
We apply the chain rule to compute $\nabla_\theta \mathcal{J}_i(\theta)$. For simplicity, let $M_p(\theta) = \left[ \frac{1}{|o_i|} \sum_{t} (\phi_{i,t}(\theta))^p \right]^{\frac{1}{p}}$. The objective is $\mathcal{J}_i(\theta) = M_p(\theta) \cdot \text{sgn}(\hat{A}_i)$. Since $\text{sgn}(\hat{A}_i)$ is a constant scalar, the gradient is:
\begin{equation}
    \nabla_\theta \mathcal{J}_i(\theta) = \text{sgn}(\hat{A}_i) \cdot \nabla_\theta M_p(\theta)
    \label{eq:appendix_grad_L}
\end{equation}

Now we compute the gradient of the power-mean term $M_p(\theta)$. Let $S(\theta) = \frac{1}{|o_i|} \sum_{t} (\phi_{i,t}(\theta))^p$, so $M_p(\theta) = (S(\theta))^{\frac{1}{p}}$. We can derive that:
\begin{equation}
\begin{aligned}
    &\nabla_\theta M_p(\theta) = \frac{1}{p} (S(\theta))^{\frac{1}{p} - 1} \cdot \nabla_\theta S(\theta) \\
    &= \frac{1}{p} (S(\theta))^{\frac{1-p}{p}} \cdot \frac{1}{|o_i|} \sum_{t} \Big[ p (\phi_{i,t}(\theta))^{p-1} \cdot \nabla_\theta \phi_{i,t}(\theta) \Big] \\
    &=(M_p(\theta))^{1-p} \cdot \frac{1}{|o_i|} \sum_{t} \Big[(\phi_{i,t}(\theta))^{p-1} \cdot \nabla_\theta \phi_{i,t}(\theta) \Big]
    \label{eq:appendix_grad_Mp}
\end{aligned}
\end{equation}

\subsection{Gradient of \texorpdfstring{$\phi_{i,t}(\theta)$}{phi}}
Let $U_{i,t}(\theta) = \min(r_{i,t}(\theta)\hat{A}_i, \rho_{i,t}(\theta)\hat{A}_i)$, we can know that $\phi_{i,t}(\theta) = |U_{i,t}(\theta)|$. Using the chain rule and the subgradient of the absolute value function ($\frac{d|x|}{dx} = \text{sgn}(x)$), we get:
\begin{equation}
    \nabla_\theta \phi_{i,t}(\theta) = \text{sgn}(U_{i,t}(\theta)) \cdot \nabla_\theta U_{i,t}(\theta)
    \label{eq:appendix_grad_phi_chain}
\end{equation}

The term $U_{i,t}(\theta)$ is a minimum of two terms. Both $r_{i,t}(\theta)$ and $\rho_{i,t}(\theta)$ are positive ratios close to 1. Therefore, the sign of $U_{i,t}(\theta)$ is determined solely by the sign of $\hat{A}_i$. Therefore, we can know that $\text{sgn}(U_{i,t}(\theta)) = \text{sgn}(\hat{A}_i)$.

Next, the subgradient of the $\min$ function is given by:
\begin{equation}
\nabla_\theta U_{i,t}(\theta) = \begin{cases}
    \hat{A}_i \cdot \nabla_\theta r_{i,t}(\theta) & \text{if } r_{i,t}(\theta) < \rho_{i,t}(\theta) \\
    \hat{A}_i \cdot \nabla_\theta \rho_{i,t}(\theta) & \text{if } \rho_{i,t}(\theta) < r_{i,t}(\theta)
\end{cases}
\label{eq:u(theta)}
\end{equation}

Substituting back into Eq.~(\ref{eq:appendix_grad_phi_chain}):
\begin{align}
    &\nabla_\theta \phi_{i,t}(\theta) = \text{sgn}(\hat{A}_i) \cdot \nabla_\theta U_{i,t}(\theta) \\
    &= \begin{cases}
    \text{sgn}(\hat{A}_i) \cdot \hat{A}_i \cdot \nabla_\theta r_{i,t}(\theta) & \text{if } r_{i,t}(\theta) < \rho_{i,t}(\theta) \\
    \text{sgn}(\hat{A}_i) \cdot \hat{A}_i \cdot \nabla_\theta \rho_{i,t}(\theta) & \text{if } \rho_{i,t}(\theta) < r_{i,t}(\theta)
\end{cases}
    \label{eq:appendix_grad_phi_final}
\end{align}

\subsection{Assembling the Final Gradient}
Now we substitute the gradient of $M_p$ (Eq.~(\ref{eq:appendix_grad_Mp})) back into the main gradient expression (Eq.~(\ref{eq:appendix_grad_L})).
\begin{equation}
\begin{aligned}
    \nabla_\theta \mathcal{J}_i(\theta) &= 
    \text{sgn}(\hat{A}_i) \cdot (M_p(\theta))^{1-p} \\
    &\cdot \frac{1}{|o_i|} \sum_{t=1}^{|o_i|} \left[ (\phi_{i,t}(\theta))^{p-1} \cdot \nabla_\theta \phi_{i,t}(\theta) \right]
\end{aligned}
\end{equation}

Next, we substitute the subgradient of $\phi_{i,t}(\theta)$ from Eq.~(\ref{eq:appendix_grad_phi_final}) into the expression above. This yields:
\begin{equation}
\begin{aligned}
    \nabla_\theta \mathcal{J}_i(\theta) &= 
    \text{sgn}(\hat{A}_i) \cdot (M_p(\theta))^{1-p} \cdot \frac{1}{|o_i|} \\
    &\cdot \sum_{t=1}^{|o_i|} \Big[ (\phi_{i,t}(\theta))^{p-1} \cdot \underbrace{\text{sgn}(\hat{A}_i) \cdot \nabla_\theta U_{i,t}(\theta)}_{\nabla_\theta \phi_{i,t}(\theta)} \Big]
\end{aligned}
\end{equation}

Since $\text{sgn}(\hat{A}_i) \cdot \text{sgn}(\hat{A}_i) = (\text{sgn}(\hat{A}_i))^2 = 1$, this simplifies the expression to:
\begin{equation} \label{eq:appendix_grad_simplified}
\begin{aligned}
    \nabla_\theta \mathcal{J}_i(\theta) &= (M_p(\theta))^{1-p} \cdot \frac{1}{|o_i|} \sum_{t=1}^{|o_i|} \Big[ (\phi_{i,t}(\theta))^{p-1} \\
    &\cdot \nabla_\theta U_{i,t}(\theta) \Big]
\end{aligned}
\end{equation}
where $\nabla_\theta U_{i,t}(\theta)$ is given in Eq. (\ref{eq:u(theta)}).

We define the token-level weight $w_{i,t}(\theta)$ as:
\begin{equation}
    w_{i,t}(\theta) = (M_p(\theta))^{1-p} \cdot (\phi_{i,t}(\theta))^{p-1}
    \label{eq:w(theta)}
\end{equation}

We can derive that:
\begin{equation}
    \nabla_\theta \mathcal{J}_i(\theta) = \frac{1}{|o_i|} \sum_{t=1}^{|o_i|} \left[w_{i,t}\cdot \nabla_\theta U_{i,t}(\theta) \right]
\end{equation}

Assuming for simplicity that $r_{i,t} < \rho_{i,t}$, The final gradient expression becomes:
\begin{equation}
    \begin{split}
    \nabla_\theta L_i(\theta) &\approx \frac{1}{|o_i|} \sum_{t} \left[ w_{i,t}(\theta) \cdot \hat{A}_i \cdot \nabla_\theta r_{i,t}(\theta) \right] \\
    &\approx \frac{1}{|o_i|} \sum_{t} \Big[ w_{i,t}(\theta) \cdot \hat{A}_i \cdot r_{i,t}(\theta) \\
    &\cdot \nabla_\theta \log \pi_\theta(o_{i,t}|o_{i,<t}) \Big]
    \label{eq:appendix_grad_final_simple}
\end{split}
\end{equation}
where the last step uses the log-derivative technique $\nabla_\theta r_{i,t}(\theta) = r_{i,t}(\theta) \cdot \nabla_\theta \log \pi_\theta(o_{i,t}|o_{i,<t})$.

\subsection{Analysis}
The final gradient form in Eq.~(\ref{eq:appendix_grad_final_simple}) can be interpreted as follows.

\paragraph{Direction.} The gradient direction is determined by $\hat{A}_i$ and $\nabla_\theta \log \pi_\theta$.

\paragraph{Magnitude Modulation.} The core innovation of APMPO lies in the adaptive weight $w_{i,t}(\theta)$, particularly its token-level component $(\phi_{i,t}(\theta))^{p-1}$.
\begin{itemize}
    \item \textbf{When $p \to 1$ (GRPO-like):} We can know that $(\phi_{i,t}(\theta))^{p-1} \to 1$. The weights $w_{i,t}(\theta)$ become approximately uniform across tokens, and the gradient resembles the standard GRPO gradient.
    \item \textbf{When $p \to 0$ (GMPO-like):} We can know that $(\phi_{i,t}(\theta))^{p-1} = 1/\phi_{i,t}(\theta)$. This means that tokens with a larger update magnitude $\phi$ receive smaller weight in the gradient summation. This leads to a smoother gradient update.
\end{itemize}

This derivation shows that the APMPO objective results in a well-behaved gradient. Its core mechanism is to adaptively re-weight the contribution of each token to the total gradient, thereby smoothly interpolating between an aggressive and a conservative update strategy based on the exponent $p$.

\section{Appendix I: Convergence Analysis of APMPO} \label{appendix:convergence}
\newcommand{\E}{\mathbb{E}}
\newcommand{\norm}[1]{\left\|#1\right\|}
\newcommand{\abs}[1]{\left|#1\right|}
\newcommand{\inner}[2]{\left\langle #1, #2 \right\rangle}
\newcommand{\TrueObj}{\mathcal{F}}

In this section, we provide a theoretical analysis of the convergence properties of APMPO. To address the challenge that the power mean $p$ and clipping threshold $\epsilon_\text{ada}$) evolve adaptively with the policy, we treat the adaptive hyperparameters as functions of the policy parameters $\theta$ and analyze the convergence of the implicit composite objective.

\subsection{Problem Setup}
Let $\eta$ represent the vector of adaptive hyperparameters $\eta = [p, \epsilon_{ada}]^\top$. In APMPO, these parameters are determined by the reward statistics of the current policy $\pi_\theta$. We explicitly denote this dependency as $\eta(\theta)$.

APMPO optimizes a parameterized objective $f(\theta; \eta)$. However, since $\eta$ is updated continuously, the \textit{true} implicit objective we aim to maximize is:
\begin{equation}
    \TrueObj(\theta) \coloneqq f(\theta; \eta(\theta))
\end{equation}

The parameter update follows stochastic gradient descent (SGD) using a stochastic estimator $g_k$. Crucially, standard RL compute the gradient of $f$ assuming $\eta$ is fixed, ignoring the path dependence of $\eta$ on $\theta$. The update rule is:
\begin{equation}
    \theta_{k+1} = \theta_k - \alpha_k g_k
\end{equation}
where $g_k$ estimates the partial gradient $\nabla_\theta f(\theta; \eta)|_{\eta=\eta(\theta_k)}$.

\subsection{Assumptions and Plausibility Analysis}

\begin{assumption}[Smoothness of the Parametric Objective] \label{ass:smoothness}
For any fixed $\eta$, the function $f(\cdot; \eta)$ is $L_f$-smooth. Furthermore, the gradient is bounded by the clipping mechanism:
\begin{equation}
    \norm{\nabla_\theta f(\theta; \eta)} \le M_f
\end{equation}
\end{assumption}

\begin{remark}[\textbf{Plausibility}]
This is structurally enforced by APMPO's design. FAC constrains the sampling ratio $r_{i,t}(\theta)$ and limits the gradient magnitude via $\epsilon$-clipping. This explicit constraint ensures the objective behaves as a locally Lipschitz function.
\end{remark}

\begin{assumption}[Stochastic Gradient Oracle] \label{ass:subgradient}
The stochastic gradient $g_k$ is an unbiased estimator of the \textit{partial} gradient with bounded variance:
\begin{enumerate}
    \item $\E[g_k | \theta_k] = \nabla_\theta f(\theta_k; \eta(\theta_k))$.
    \item $\E[\norm{g_k - \nabla_\theta f(\theta_k; \eta(\theta_k))}^2] \le \sigma^2$.
\end{enumerate}
\end{assumption}

\begin{assumption}[Lipschitz Continuity of Adaptive Parameters] \label{ass:lipschitz_params}
The mapping from policy parameters to adaptive hyperparameters, $\theta \mapsto \eta(\theta)$, is $L_\eta$-Lipschitz continuous. Additionally, the objective $f(\theta; \eta)$ is smooth with respect to $\eta$, with bounded gradient $\norm{\nabla_\eta f} \le M_\eta$.
\end{assumption}

\begin{remark}[\textbf{Plausibility: Smoothness of Expectation}]
The parameters $p$ and $\epsilon_{ada}$ are derived from reward statistics $\mu_R(\theta) = \E_{\tau \sim \pi_\theta}[R(\tau)]$ and $\sigma_R(\theta)$. Since the policy $\pi_\theta$  is a smooth function of $\theta$ and rewards are bounded, the expectation $\mu_R(\theta)$ is Lipschitz continuous w.r.t. $\theta$. Since $p(\mu_R)$ (exponential) and $\epsilon_{ada}(\sigma_R)$ (tanh) are smooth bounded functions, the composite mapping $\eta(\theta)$ preserves Lipschitz continuity.
\end{remark}

\subsection{Convergence Proof}

We analyze the convergence of the true implicit objective $\TrueObj(\theta)$. The key challenge is that our update direction $g_k$ approximates the \textit{partial} gradient $\nabla_\theta f$, whereas the true gradient of $\TrueObj(\theta)$ includes a total derivative term.

By the Chain Rule, the true gradient is:
\begin{equation} \label{eq:total_grad}
    \nabla \TrueObj(\theta) = \nabla_\theta f(\theta; \eta) + \nabla_\eta f(\theta; \eta)^\top \nabla_\theta \eta(\theta)
\end{equation}

Let $B(\theta) \coloneqq \nabla_\eta f(\theta; \eta)^\top \nabla_\theta \eta(\theta)$ denote the approximation bias. Under Assumption \ref{ass:lipschitz_params}, this bias is bounded:
\begin{equation}
    \norm{B(\theta)} \le \norm{\nabla_\eta f} \norm{\nabla_\theta \eta} \le M_\eta L_\eta \coloneqq C_{bias}
\end{equation}

Therefore, our algorithm performs SGD with a \textit{biased} gradient estimator.

\begin{theorem}[Convergence with Bounded Bias]
Under Assumptions \ref{ass:smoothness}--\ref{ass:lipschitz_params}, with a learning rate schedule satisfying $\sum \alpha_k = \infty$ and $\sum \alpha_k^2 < \infty$, the algorithm converges to a neighborhood of a stationary point of $\TrueObj(\theta)$. Specifically:
\begin{equation}
    \liminf_{k\to\infty} \E\left[ \norm{\nabla \TrueObj(\theta_k)}^2 \right] \le K \cdot C_{bias}^2
\end{equation}
for some constant $K$, implying convergence up to the limit of the adaptive drift.
\end{theorem}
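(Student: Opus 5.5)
The plan is the standard nonconvex analysis of SGD with a biased but bounded oracle, applied to the implicit objective $\TrueObj$. Because the update is written as $\theta_{k+1}=\theta_k-\alpha_k g_k$, I will treat $\TrueObj$ as the quantity being descended; equivalently one works with $-\TrueObj$, and the stationarity measure $\norm{\nabla \TrueObj}^2$ is unaffected. Write $g_k=\nabla_\theta f(\theta_k;\eta(\theta_k))+\xi_k$, where by Assumption \ref{ass:subgradient} $\E[\xi_k\mid\theta_k]=0$ and $\E\norm{\xi_k}^2\le\sigma^2$. By the chain-rule identity (\ref{eq:total_grad}), $\nabla_\theta f(\theta_k;\eta(\theta_k))=\nabla\TrueObj(\theta_k)-B(\theta_k)$ with $\norm{B}\le C_{bias}$.

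\textbf{Step 1 (descent lemma for $\TrueObj$).} I first need $\TrueObj$ to be $L_{\TrueObj}$-smooth. Differentiating (\ref{eq:total_grad}) once more, its Lipschitz constant is controlled by $L_f$, by $L_\eta$ and $M_\eta$, and by the Lipschitz constants of $\nabla_\eta f$ and $\nabla_\theta\eta$. The last two are not literally contained in Assumptions \ref{ass:smoothness}--\ref{ass:lipschitz_params}, so I will make this an explicit regularity requirement. Its plausibility follows the earlier remark: $\exp$ and $\tanh$ are $C^\infty$, and $\mu_R(\theta),\sigma_R(\theta)$ are expectations under a smooth policy.

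With smoothness in hand, the descent lemma gives
\[
\TrueObj(\theta_{k+1})\le \TrueObj(\theta_k)-\alpha_k\inner{\nabla\TrueObj(\theta_k)}{g_k}+\tfrac{L_{\TrueObj}}{2}\alpha_k^2\norm{g_k}^2 .
\]

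\textbf{Step 2 (taking expectations and splitting off the bias).} Conditioning on $\theta_k$ removes the noise in the cross term, leaving $\inner{\nabla\TrueObj}{\nabla\TrueObj-B}$. Young's inequality then gives
\[
\inner{\nabla\TrueObj}{\nabla\TrueObj-B}\ge \tfrac12\norm{\nabla\TrueObj}^2-\tfrac12 C_{bias}^2 .
\]
For the second-order term I use Assumption \ref{ass:smoothness}, which gives $\E\norm{g_k}^2\le 2(M_f^2+\sigma^2)$.

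\textbf{Step 3 (telescoping).} Summing from $k=0$ to $T$ yields
\[
\tfrac12\sum_{k\le T}\alpha_k\E\norm{\nabla\TrueObj(\theta_k)}^2\le \Delta_0+\tfrac{C_{bias}^2}{2}\sum_{k\le T}\alpha_k+L_{\TrueObj}(M_f^2+\sigma^2)\sum_{k\le T}\alpha_k^2 .
\]
Here $\Delta_0=\TrueObj(\theta_0)-\inf\TrueObj$ is finite, since $\TrueObj$ is bounded. That boundedness holds because $\phi_{i,t}\le(1+\epsilon_{\max})|\hat A_i|$ and the group-normalized advantages are bounded, so the power mean of the $\phi_{i,t}$ is bounded.

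Now divide by $\sum_{k\le T}\alpha_k$. Since $\sum\alpha_k=\infty$ and $\sum\alpha_k^2<\infty$, the first and last terms vanish as $T\to\infty$. Hence the $\alpha$-weighted average of $\E\norm{\nabla\TrueObj(\theta_k)}^2$ has $\limsup$ at most $C_{bias}^2$. A weighted average cannot stay below the $\liminf$ of its terms if that $\liminf$ were larger, so $\liminf_k\E\norm{\nabla\TrueObj(\theta_k)}^2\le C_{bias}^2$. This proves the statement with $K=1$; any $K\ge1$ also works if Young's inequality is used with a different weight.

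\textbf{Main obstacle.} The delicate step is Step 1: establishing smoothness of $\TrueObj$ itself, not just of $f(\cdot;\eta)$ for fixed $\eta$. The $\min$, $|\cdot|$ and clip operations make $f$ only piecewise smooth. Moreover, $p\mapsto M_p$ and the factor $\phi^{p-1}$ appearing in (\ref{eq:w(theta)}) can blow up as $\phi\to0$.

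I would first try to restrict to a region where $\phi_{i,t}\ge c>0$, which is guaranteed when $|\hat A_i|$ is bounded away from zero and $r_{i,t}$ stays in the trust region. On that region every map involved is $C^{1,1}$ and the constant $L_{\TrueObj}$ can be written down explicitly. If that fails, the fallback is to state $L_{\TrueObj}$-smoothness directly as part of the assumptions, with the same plausibility justification as the remarks.
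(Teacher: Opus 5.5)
Your proposal is correct and follows the paper's proof essentially step for step: smoothness of $\mathcal{F}$ is assumed rather than derived, then the descent lemma, the split $\mathbb{E}[g_k\mid\theta_k]=\nabla\mathcal{F}(\theta_k)-B(\theta_k)$ with Young's inequality, a second-moment bound, and telescoping. Your weighted-average argument also supplies the rigorous $\liminf$ step, with $K=1$, which the paper only gestures at.

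One justification needs fixing. The bound $\phi_{i,t}\le(1+\epsilon_{\max})|\hat{A}_i|$ fails when $\hat{A}_i<0$ and $r_{i,t}>1+\epsilon_{\text{ada}}$, since then $\phi_{i,t}=r_{i,t}|\hat{A}_i|$ is unbounded, so the objective is not bounded. You only need the descended loss $-\mathcal{J}$ to be bounded below. That holds because positive-advantage terms are at most $(1+\epsilon_{\max})\hat{A}_i$, while the negative-advantage terms and $-\beta D_{\text{KL}}$ are nonpositive. This is also why the orientation is not cosmetic: literally descending $\mathcal{J}$ would give a function unbounded below.
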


\begin{proof}
Since $\TrueObj$ is a composition of smooth functions, we assume it is $L$-smooth. From the Descent Lemma:
\begin{equation}
\begin{aligned}
   \TrueObj(\theta_{k+1}) &\le \TrueObj(\theta_k) + \inner{\nabla \TrueObj(\theta_k)}{\theta_{k+1} - \theta_k} \\
   &+ \frac{L}{2}\norm{\theta_{k+1} - \theta_k}^2 
\end{aligned}
\end{equation}

Substituting the update $\theta_{k+1} - \theta_k = -\alpha_k g_k$:
\begin{equation}
    \TrueObj(\theta_{k+1}) \le \TrueObj(\theta_k) - \alpha_k \inner{\nabla \TrueObj(\theta_k)}{g_k} + \frac{L \alpha_k^2}{2} \norm{g_k}^2
\end{equation}

Taking expectations conditioned on $\theta_k$. Note that $\E[g_k] = \nabla_\theta f = \nabla \TrueObj(\theta_k) - B(\theta_k)$ (from Eq. (\ref{eq:total_grad})).
\begin{equation}
\begin{aligned}
    \E[\TrueObj(\theta_{k+1})] &\le \TrueObj(\theta_k) - \alpha_k \inner{\nabla \TrueObj(\theta_k)}{\nabla \TrueObj(\theta_k) - B(\theta_k)} \\
    &+ \frac{L \alpha_k^2}{2} \E[\norm{g_k}^2] \\
    &= \TrueObj(\theta_k) - \alpha_k \norm{\nabla \TrueObj(\theta_k)}^2 \\&+ \alpha_k \inner{\nabla \TrueObj(\theta_k)}{B(\theta_k)} + \frac{L \alpha_k^2}{2} M^2
\end{aligned}
\end{equation}

Using Young's Inequality $\inner{a}{b} \le \frac{1}{2}\norm{a}^2 + \frac{1}{2}\norm{b}^2$:
\begin{equation}
\begin{aligned}
    \E[\TrueObj(\theta_{k+1})] &\le \TrueObj(\theta_k) - \alpha_k \norm{\nabla \TrueObj(\theta_k)}^2 \\& +\frac{\alpha_k}{2}\norm{\nabla \TrueObj(\theta_k)}^2 + \frac{\alpha_k}{2}\norm{B(\theta_k)}^2 \\&+ \mathcal{O}(\alpha_k^2) \\
    &= \TrueObj(\theta_k) - \frac{\alpha_k}{2} \norm{\nabla \TrueObj(\theta_k)}^2 + \frac{\alpha_k}{2} C_{bias}^2 \\
    &+ \mathcal{O}(\alpha_k^2)
\end{aligned}
\end{equation}

Rearranging and summing over $k=0$ to $N$:
\begin{equation}
\begin{aligned}
    \sum_{k=0}^N \frac{\alpha_k}{2} \E[\norm{\nabla \TrueObj(\theta_k)}^2] &\le \TrueObj(\theta_0) - \E[\TrueObj(\theta_{N+1})] \\
    &+ \sum_{k=0}^N \frac{\alpha_k}{2} C_{bias}^2 + \sum_{k=0}^N \mathcal{O}(\alpha_k^2)
\end{aligned}
\end{equation}
s
As $N \to \infty$, for the LHS to remain consistent with the RHS (bounded objective), the gradient norm cannot remain arbitrarily large. The algorithm drives the gradient norm down until it is dominated by the bias term $C_{bias}$.

Practically, as the policy converges, the reward distribution statistics stabilize, meaning $\nabla_\theta \eta(\theta) \to 0$. Consequently, the bias $C_{bias} \to 0$, recovering standard convergence to a stationary point.
\end{proof}

\begin{remark}[Quantitative Bound on Gradient Bias]
While the qualitative boundedness of the bias term $B(\theta)$ suffices for the convergence proof, we can explicitly quantify this bound to analyze the convergence rate. 
Based on Assumption \ref{ass:lipschitz_params}, we have $\|\nabla_\theta \eta\| \leq L_\eta$. 
Furthermore, assuming the objective function's sensitivity to $\eta$ is bounded such that $\|\nabla_\eta J(\theta, \eta)\| \leq M_\eta$, applying the Cauchy-Schwarz inequality yields:
\begin{equation}
   \|B(\theta)\| = \|\nabla_\theta \eta \cdot \nabla_\eta J\| \leq \|\nabla_\theta \eta\| \cdot \|\nabla_\eta J\| \leq L_\eta \cdot M_\eta 
\end{equation}

This quantitative bound explicitly connects the approximation error to the stability of the adaptive parameter. It implies that as the policy stabilizes (\textit{i.e.}, $\nabla_\theta \eta \to 0$ and $L_\eta \to 0$ locally), the bias term vanishes asymptotically, ensuring that APMPO recovers the exact policy gradient properties in the limit.
\end{remark}

\begin{figure*}[h!]
\centering 
\includegraphics[width=1.0\textwidth]{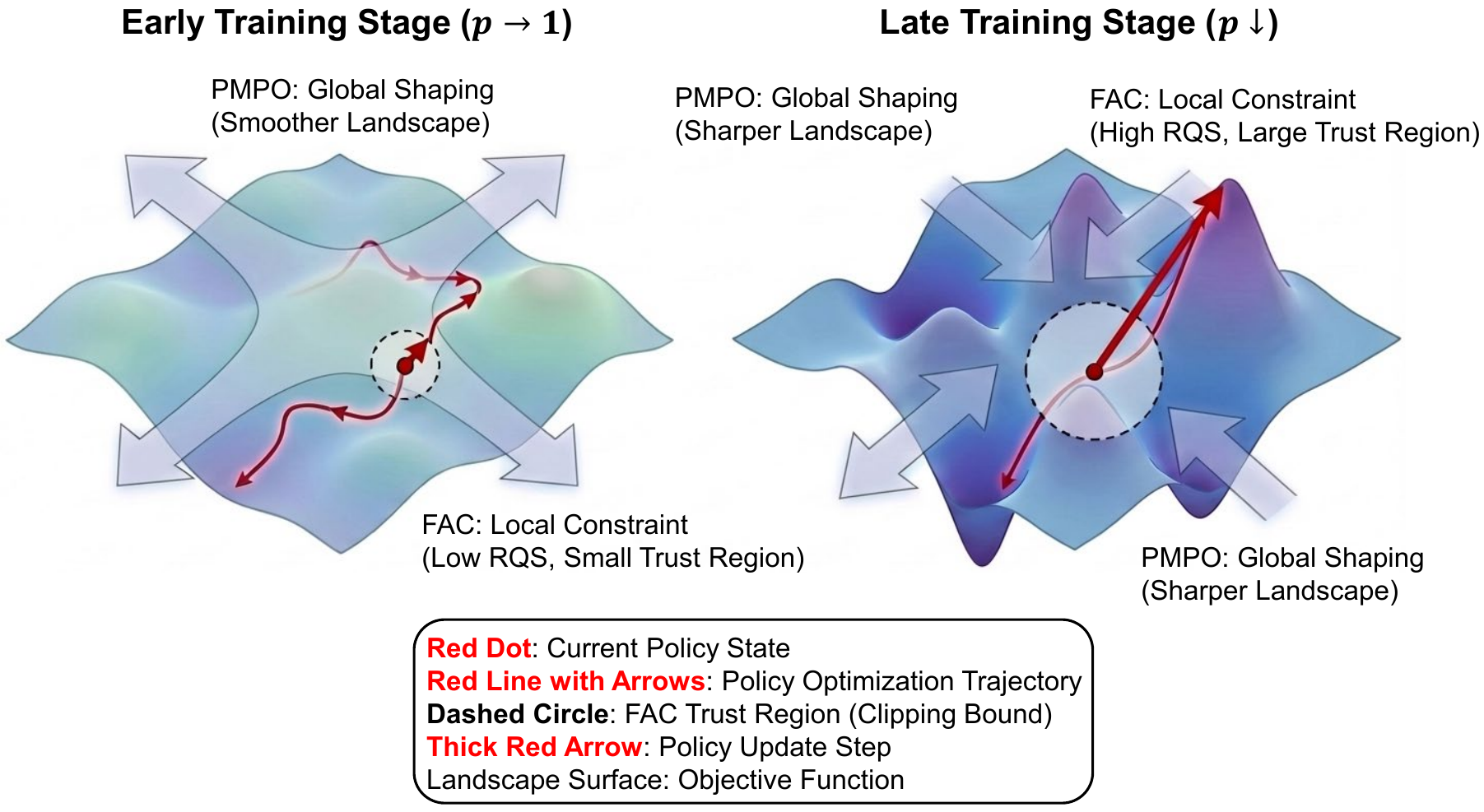}
\caption{Illustration of the synergy of PMPO and FAC.}
\label{fig:synergy}
\end{figure*}

\section{Appendix J: Pseudocode for APMPO} \label{appendix:pseudo}

Based on the description in Section \ref{sec:method}, 
the pseudocode for APMPO is presented in Algorithm
\ref{alg:apmpo}, which outlines its key steps and facilitates the reproducibility of our method.

\begin{algorithm*}[h!]
\small
\caption{Adaptive Power-Mean Policy Optimization (APMPO)}
\label{alg:apmpo}
\begin{flushleft}
\textbf{Input}: Policy model $\pi_\theta$, old policy model $\pi_{\theta_{\text{old}}}$, reference model $\pi_{\theta_{\text{ref}}}$, training data $\mathcal{D}$ comprising queries $q$;

\textbf{Required}: Learning rate $\eta$, KL divergence regularizer $\beta$, group size per query $G$, total training steps $T$, PMPO sensitivity parameter $\gamma$, FAC clipping bounds $(\epsilon_{\text{min}}, \epsilon_{\text{max}})$, fixed lower bound $\epsilon_{\text{low}}$, numerical stability constant $\delta$;

\textbf{Output}: Optimized policy model $\pi_\theta^*$;
\end{flushleft}
\begin{algorithmic}[1]
\For{$t = 1$ to $T$}
    \State $\pi_{\theta_{\text{old}}} \leftarrow \pi_\theta$; \Comment{Update old policy}
    \State Sample a query $q \sim \mathcal{D}$; 
    \State Initialize lists for responses $\mathcal{O} \leftarrow []$, rewards $\mathcal{R} \leftarrow []$, advantages $\mathcal{A} \leftarrow []$;
    \State Sample $G$ responses $\{o_i\}_{i=1}^G \sim \pi_{\theta_{\text{old}}}(\cdot|q)$;
    \State Obtain verifiable rewards $\{R_i\}_{i=1}^G$ for each response;
    \State $\mu_R \leftarrow \frac{1}{G} \sum_{i=1}^{G} R_i$; \Comment{Group average reward for PMPO}
    \State $\sigma_R \leftarrow \sqrt{\frac{1}{G} \sum_{i=1}^{G} (R_i - \mu_R)^2}$; \Comment{Group reward standard deviation for PMPO}
    
    \For{$i = 1$ to $G$}
         \State $\hat{A}_i \leftarrow (R_i - \mu_R) / \sigma_R$; \Comment{Compute group-normalized advantage}
         \State Append $\hat{A}_i$ to $\mathcal{A}$;
    \EndFor
    
    \State {/* --- \textit{\textbf{Core of APMPO: Adaptive Mechanisms}} --- */}
    \State \textbf{// PMPO: Determine adaptive exponent p}
    \State $p \leftarrow \exp(-\gamma \cdot \mu_R)$; \Comment{Eq. (\ref{eq:dynamic_p})}
    
    \State \textbf{// FAC: Determine adaptive clipping bound $\epsilon_{\text{ada}}$}
    \State FSS $\leftarrow \mu_{R} / (\sigma_{R} + \delta)$; \Comment{Eq. (\ref{eq:snr})}
    \State $\epsilon_{\text{ada}} \leftarrow \epsilon_{\text{min}} + (\epsilon_{\text{max}} - \epsilon_{\text{min}}) \cdot \text{FSS}$; \Comment{Eq. (\ref{eq:adaptive_epsilon})}

    \State {/* --- \textit{\textbf{Objective Function Construction}} --- */}
    \State Initialize total loss $\mathcal{J}_{\text{APMPO}}(\theta) \leftarrow 0$;
    \For{$i=1$ to $G$} \Comment{Iterate over all samples in the group}
        \State Let $o_i$ be the $i$-th response with length $|o_i|$ and advantage $\hat{A}_i$;
        \State Initialize token-level magnitude sum $S_i \leftarrow 0$;
        \For{$t=1$ to $|o_i|$}
            \State $r_{i,t}(\theta) \leftarrow \frac{\pi_\theta(o_{i,t}|q, o_{i,<t})}{\pi_{\theta_{\text{old}}}(o_{i,t}|q, o_{i,<t})}$; \Comment{Importance sampling ratio}
            \State $\rho_{i,t}(\theta) \leftarrow \max(1 - \epsilon_{\text{low}}, \min(r_{i,t}(\theta), 1 + \epsilon_{\text{ada}}))$; \Comment{Eq. (\ref{eq:effective_ratio})}
            \State $\phi_{i,t}(\theta) \leftarrow |\min(r_{i,t}(\theta) \cdot \hat{A}_i, \rho_{i,t}(\theta) \cdot \hat{A}_i)|$; \Comment{Eq. (\ref{eq:token_magnitude})}
            \State $S_i \leftarrow S_i + (\phi_{i,t}(\theta))^p$;
        \EndFor
        \State $\mathcal{J}_i(\theta) \leftarrow \left(\frac{1}{|o_i|} S_i\right)^{1/p} \cdot \text{sgn}(\hat{A}_i)$; \Comment{Eq. (\ref{eq:per_sequence_objective})}
        \State $\mathcal{J}_{\text{APMPO}}(\theta) \leftarrow \mathcal{J}_{\text{APMPO}}(\theta) + \mathcal{J}_i(\theta)$;
    \EndFor
    
    \State $\mathcal{J}_{\text{total}}(\theta) \leftarrow \frac{1}{G} \mathcal{J}_{\text{APMPO}}(\theta) - \beta D_{\text{KL}}(\pi_\theta || \pi_{\theta_{\text{ref}}})$; \Comment{Final objective to maximize}
    \State $\theta \leftarrow \theta + \eta \nabla_\theta \mathcal{J}_{\text{total}}(\theta)$; \Comment{Update policy parameters via gradient ascent}
\EndFor
\end{algorithmic}
\end{algorithm*}

\section{Appendix K: Further Analysis}

\subsection{Analysis of Training Efficiency} \label{appendix:complexity_analysis}
The empirical results in Figure~\ref{fig:training}(c) demonstrate that APMPO achieves significant performance gains with negligible additional wall-clock time compared to GRPO. This efficiency is justified by a computational complexity analysis of a single training step.

Let $B$ be the batch size and $G$ be the number of rollouts for each query. The total number of sampled responses per batch is $N = B \times G$. Let $L$ denote the average sequence length. The computational cost of one training step for GRPO can be broken down as follows:

\begin{itemize}
    \item \textbf{Rollout:} This involves $N$ forward passes to generate responses. The cost is given by:
    \begin{equation}
        \mathcal{T}_{\text{rollout}} = N \cdot \mathcal{T}_{\text{forward}} \approx O(B \cdot G \cdot L)
    \end{equation}
    where $\mathcal{T}_{\text{forward}}$ is the cost of one forward pass. This phase constitutes the most computationally expensive component of the process.

    \item \textbf{Advantage Calculation:} For GRPO, this requires computing the mean $\mu_R$ and standard deviation $\sigma_R$ of rewards across $N$ samples, followed by normalizing rewards for all $N$ samples. The complexity is given by:
    \begin{equation}
        \mathcal{T}_{\text{adv\_grpo}} = O(B \cdot G) = O(N)
    \end{equation}

    \item \textbf{Policy Update:} This involves a single forward and backward pass using the $N$ sampled responses to estimate the policy gradient. The cost is given by:
    \begin{equation}
        \mathcal{T}_{\text{update}} = \mathcal{T}_{\text{forward}} + \mathcal{T}_{\text{backward}} \approx O(B \cdot G \cdot L)
    \end{equation}
\end{itemize}

The total complexity for GRPO is therefore given as:
\begin{equation}
    \mathcal{T}_{\text{GRPO}} \approx \mathcal{T}_{\text{rollout}} + \mathcal{T}_{\text{adv\_grpo}} + \mathcal{T}_{\text{update}} \approx O(B \cdot G \cdot L)
\end{equation}

Next, the additional computations introduced by APMPO are analyzed:
\begin{itemize}
    \item \textbf{PMPO Overhead:} To compute the adaptive exponent $p$, the computation requires the batch-level average reward $\mu_R$. This statistic is already obtained in GRPO during advantage normalization. The subsequent $\exp(\cdot)$ operation is a single scalar computation, with a complexity of $O(1)$.
    \begin{equation}
        \mathcal{T}_{\text{overhead\_pmpo}} = O(1)
    \end{equation}

    \item \textbf{FAC Overhead:} To compute FSS, the calculation requires the mean $\mu_{R}$ and standard deviation $\sigma_{R}$ within the batch. This involves iterating through $N$ advantage values. The complexity is:
    \begin{equation}
        \mathcal{T}_{\text{overhead\_fac}} = O(N)
    \end{equation}
\end{itemize}

The total additional complexity introduced by APMPO is $\mathcal{T}_{\text{overhead\_apmpo}} = \mathcal{T}_{\text{overhead\_pmpo}} + \mathcal{T}_{\text{overhead\_fac}} \approx O(N)$.

Finally, the total complexity of APMPO is:
\begin{equation}
    \mathcal{T}_{\text{APMPO}} = \mathcal{T}_{\text{GRPO}} + O(N) \approx O(B \cdot G \cdot L) + O(B \cdot G)
\end{equation}

In the context of LLMs, where the sequence length $L$ is typically large, the additional $O(N)$ overhead remains negligible. This confirms the empirical finding that the adaptive mechanisms in APMPO are computationally lightweight.

\subsection{Analyzing the Synergy of PMPO and FAC} \label{appendix:synergy}
The superior performance of APMPO stems from the hierarchical synergy between PMPO and FAC in regulating optimization dynamics.

As illustrated in Figure \ref{fig:synergy}, PMPO functions at the strategic level by shaping the global optimization landscape. By conditioning the power parameter $p$ on the reward mean $\mu_R$, PMPO determines the learning regime, enabling smooth transitions between aggressive signal exploration and conservative policy consolidation.

Conversely, FAC operates at the tactical level, controlling the trust-region constraints. Guided by FSS, FAC modulates the magnitude of policy updates generated by PMPO, ensuring that step sizes are adaptively calibrated according to the stability of the reward signal.

This two‑tiered control mechanism underpins APMPO’s advantage. While PMPO alone can adapt the update direction, it remains sensitive to random fluctuations in the reward signal, which may cause instability. FAC mitigates this issue by imposing tighter constraints when reward signals are unreliable. Conversely, FAC alone promotes stability but lacks the capacity to reshape the gradient landscape. Their integration allows APMPO to achieve learning that is strategically adaptive.

\subsection{Further Analysis of FSS Components} \label{appendix:FSS}

The empirical superiority of the composite FSS formulation over its isolated components (\textit{i.e.}, the numerator $\mu_{R}$ or the denominator $1/\sigma_{R}$) underscores the necessity of a holistic metric for feedback quality.

Specifically, relying solely on $\mu_{R}$  provides a linear learning signal that biases the optimization toward batches with high mean reward, irrespective of their stability. This approach risks overfitting to batches containing high-reward outliers, leading to potentially erroneous policy updates. Conversely, an objective emphasizing stability (\textit{i.e.}, high $1/\sigma_{R}$) tends to be overly conservative, rewarding consistently poor behavior. Over‑penalizing instability can hinder the emergence of complex reasoning patterns that typically arise during early exploratory stages.

By integrating both components via the specific formulation, FSS allows FAC to distinguish between reliable high-quality signals and unreliable fluctuations. This enables the expansion of clipping bounds only when the learning signal is genuinely trustworthy, thereby ensuring a principled balance between learning efficacy and policy stability.

\subsection{Analysis on Clipping Asymmetry} \label{appendix:clipping asymmetry} 

To justify the design choice of FAC, we conducted an ablation study comparing our asymmetric clipping strategy (\textit{i.e.}, Fixed Lower Bound) against a fully symmetric variant, where both the upper and lower bounds are adaptively adjusted by FSS. The expression of the variant is given as:
\begin{equation}
\rho_{i,t}(\theta) = \max\left[1 - \epsilon_{\text{ada}}, \min(r_{i,t}(\theta), 1 + \epsilon_{\text{ada}}) \right]
\label{eq:effective_ratio1}
\end{equation}

As shown in Table~\ref{tab:ablation_asymmetry}, our default FAC consistently outperformed the symmetric adaptive variant. This performance gap stemmed from the distinct roles of the clipping bounds. While an adaptive upper bound allowed the model to aggressively capitalize on high-quality consensus (\textit{i.e.}, high FSS), an adaptive lower bound creates instability. Relaxing the lower bound allowed the policy to reduce the probability of certain tokens, potentially leading to excessive policy shifts. By keeping the lower bound fixed, FAC acted as a safety anchor, enabling aggressive positive reinforcement while preventing destabilizing negative updates. This confirmed that an asymmetric trust region was more suitable for reasoning tasks.

\begin{table*}[t]
\centering
\small
\setlength{\tabcolsep}{1.5mm}
\begin{tabular}{lccccccc}
\toprule
\textbf{Method} & \textbf{Math500} & \textbf{AIME24} & \textbf{AIME25} & \textbf{AMC23} & \textbf{Minerva} & \textbf{Olympiad} & \textbf{Avg.} \\
\midrule
\multicolumn{8}{c}{\textit{Qwen2.5-Math-1.5B-Instruct}} \\
\midrule
FSS (Adaptive Lower) & 77.5 & \textbf{20.0} & 16.7 & 60.0 & 30.1 & 42.0 & 41.1 \\
\textbf{FSS (Fixed Lower)} & \textbf{78.0} & \textbf{20.0} & \textbf{16.7} & \textbf{62.5} & \textbf{30.5} & \textbf{42.4} & \textbf{41.7} \\
\midrule
\multicolumn{8}{c}{\textit{Qwen2.5-3B-Instruct}} \\
\midrule
FSS (Adaptive Lower) & 68.2 & \textbf{10.0} & \textbf{10.0} & 42.5 & 27.6 & 32.9 & 31.8 \\
\textbf{FSS (Fixed Lower)} & \textbf{68.4} & \textbf{10.0} & \textbf{10.0} & \textbf{45.0} & \textbf{27.9} & \textbf{33.2} & \textbf{32.4} \\
\midrule
\multicolumn{8}{c}{\textit{DeepSeek-R1-Distill-Qwen-1.5B}} \\
\midrule
FSS (Adaptive Lower) & 81.4 & 20.0 & 23.3 & 62.5 & 32.4 & 46.3 & 44.4 \\
\textbf{FSS (Fixed Lower)} & \textbf{81.6} & \textbf{23.3} & \textbf{26.7} & \textbf{65.0} & \textbf{32.7} & \textbf{46.6} & \textbf{46.0} \\
\bottomrule
\end{tabular}
\caption{Ablation study on the asymmetry of FAC. We compare our default asymmetric design (Fixed Lower Bound) against a symmetric variant where both bounds are adaptive. The best results are highlighted in bold.}
\label{tab:ablation_asymmetry}
\end{table*}

\subsection{Details of Adaptive Exponent Formulation} \label{appendix:p}

The superior performance of the exponential-decay formulation for the adaptive exponent $p$ underscores the importance of achieving a smooth and asymptotic transition within the learning objective. Specifically, the primary limitation of the linear variant lies in its constant rate of change. The exponent $p$ responds to fluctuations in $\mu_R$ with a linear sensitivity determined by $\gamma$. This approach treats all changes in reward equally, failing to distinguish between minor reward fluctuations and significant performance shifts that warrant different response magnitudes. 

In contrast, the exponential-decay formulation provides two key advantages. First, it ensures a smooth and non-linear transition, where the rate of change gradually decreases as performance improves. Second, $p$ asymptotically approaches but never reaches zero. This property guarantees persistent exploratory pressure, preventing the policy from collapsing into overly deterministic behavior. Owing to its inherent smoothness and non-saturating behavior, the exponential-decay formulation represents a more stable mechanism for adaptive policy optimization.

\subsection{Empirical Analysis of Output Diversity}
\label{sec:diversity_analysis}

To substantiate the claim that APMPO mitigates the entropy collapse observed in GRPO and circumvents the conservatism of GMPO, the output diversity of models on the MATH500 dataset was analyzed. Notably, $8$ solutions were generated for each prompt using three complementary metrics:

\begin{itemize}
    \item \textbf{Self-BLEU:} The BLEU-4 score of each generated solution was calculated against others in the same sampled group. A lower Self-BLEU signifies reduced lexical overlap, suggesting that the model avoids simply repeating memorized templates.
    \item \textbf{Average Cosine Similarity (ACS):} Pairwise ACS was computed between the sentence embeddings\footnote{\url{https://huggingface.co/Qwen/Qwen3-Embedding-4B}} of all generated solutions per prompt. Lower ACS values indicate greater semantic diversity in the reasoning process.
    \item \textbf{GPT-based Diversity Score:} GPT-5 was employed to evaluate the distinctness of valid reasoning paths. For correct solutions, a score from 0 to 3 was assigned based on mathematical distinctness (detailed in the \textit{GPT-based prompt}). A higher score implies successful exploration of fundamentally different solving strategies.
\end{itemize}

\begin{table}[h]
    \centering
    \small
    \setlength{\tabcolsep}{1.4mm}
    \begin{tabular}{l|ccc}
    \toprule
    \textbf{Method} & \textbf{ACS} ($\downarrow$) & \textbf{Self-BLEU} ($\downarrow$) & \textbf{GPT-Diversity} ($\uparrow$) \\
    \midrule
    GRPO            & 0.84                        & 0.76                              & 1.22 \\
    DAPO            & 0.79                        & 0.71                              & 1.35 \\
    GMPO            & 0.75                        & 0.65                              & 1.41 \\
    \textbf{APMPO}  & \textbf{0.66}               & \textbf{0.54}                     & \textbf{1.62} \\
    \bottomrule
    \end{tabular}
    \caption{Diversity metrics on the MATH-500 dataset using Qwen2.5-Math-1.5B-Instruct. Comparisons are made against state-of-the-art RLVR-based baselines. $\downarrow$ indicates lower is better, and $\uparrow$ indicates higher is better.}
    \label{tab:diversity_metrics}
\end{table}

\paragraph{Analysis.} As presented in Table \ref{tab:diversity_metrics}, GRPO exhibited the highest ACS and Self-BLEU, confirming the ``mode collapse'' phenomenon indicated by the entropy curves in Figure \ref{fig:training}(b). The arithmetic mean objective in GRPO disproportionately reinforced the first successful path, thereby inducing early policy convergence to a single solution pattern. While DAPO and GMPO improved diversity, it remained more conservative than APMPO. Crucially, APMPO achieved superior performance across all diversity metrics, further validating the efficacy of PMPO and FAC.

\paragraph{GPT-based Prompt.} To ensure rigorous evaluation, the following prompt was employed for the GPT-based Diversity Score:

\textit{``Given the following set of correct solutions to a math problem, evaluate the diversity of the reasoning methods used. Assign a single integer score from 0 to 3 based on the following criteria:
\begin{itemize}
    \item \textbf{0:} Solutions use identical logic and phrasing (High Repetition).
    \item \textbf{1:} Solutions use the same underlying logic but different phrasing.
    \item \textbf{2:} Solutions use slightly different logical steps or intermediate derivations.
    \item \textbf{3:} Solutions employ fundamentally different mathematical approaches (e.g., Coordinate Geometry vs. Synthetic Geometry, or Induction vs. Direct Proof).
\end{itemize}
Output only the single numerical score.''}

\section{Appendix J: More Experimental Results}

\subsection{Results on SQL Generation and Multi-modal Reasoning} \label{appendix:geoSQL}
The experimental results on SQL generation (Spider and BIRD) and multi-modal reasoning (Geometry3K) are shown in Table \ref{tab:generalization_results_apmpo}.

\begin{table*}[t]
\centering
\begin{tabular}{lccc}
\toprule
\textbf{Method} & \textbf{Geometry3K} & \textbf{Spider} & \textbf{BIRD} \\
\midrule
Base     & 25.7              & 70.2              & 43.6      \\
GRPO     & 35.6$_{\pm 0.8}$  & 73.8$_{\pm 0.5}$  & 55.7$_{\pm 0.9}$  \\
DAPO     & 36.5$_{\pm 0.7}$  & 75.6$_{\pm 0.6}$  & 58.2$_{\pm 0.8}$ \\
GMPO     & 37.2$_{\pm 0.6}$  & 74.8$_{\pm 0.5}$  & 57.8$_{\pm 0.7}$  \\
\textbf{APMPO} & \textbf{37.9}$_{\pm 0.5}$ & \textbf{76.4}$_{\pm 0.4}$ & \textbf{60.6}$_{\pm 0.6}$ \\
\bottomrule
\end{tabular}
\caption{Generalization performance on multi-modal reasoning (Geometry3K) and SQL generation (Spider, BIRD). The results are given as mean and standard deviation across 3 random seeds.}
\label{tab:generalization_results_apmpo}
\end{table*}

\subsection{Effectiveness of PMPO and FAC} \label{appen:component}

The experimental results regarding the efficacy of different components in APMPO are presented in Table \ref{tab:ablation1}.

\begin{table*}[t]
\centering
\small
\setlength{\tabcolsep}{2.8mm}
\begin{tabular}{cccccccc}
\toprule
\textbf{Method} & \textbf{Math500} & \textbf{AIME24} & \textbf{AIME25} & \textbf{AMC23} & \textbf{Minerva} & \textbf{Olympiad} & \textbf{Avg.}  \\
\midrule
\multicolumn{8}{c}{\textit{Qwen2.5-Math-1.5B-Instruct}} \\
\midrule
GRPO & 75.2 & 13.3 & 13.3 & 52.5 & 29.4 & 39.0 & 37.5 \\
+ FAC & 76.4 & 16.7 & 13.3 & 57.5 & 29.8 & 40.1 & 39.0 \\
+ PMPO & 77.2 & \textbf{20.0} & \textbf{16.7} & 60.0 & 30.1 & 41.4 & 40.9 \\
APMPO & \textbf{78.0} & \textbf{20.0} & \textbf{16.7} & \textbf{62.5} & \textbf{30.5} & \textbf{42.4} & \textbf{41.7} \\
\midrule
\multicolumn{8}{c}{\textit{Qwen2.5-3B-Instruct}} \\
\midrule
GRPO & 66.0 & 6.7 & 6.7 & 40.0 & 25.4 & 31.5 & 29.4 \\
+ FAC & 67.2 & 6.7 & \textbf{10.0} & 42.5 & 26.5 & 32.0 & 30.8 \\
+ PMPO & 67.8 & \textbf{10.0} & \textbf{10.0} & \textbf{45.0} & 27.2 & 32.6 & 32.1 \\
APMPO & \textbf{68.4} & \textbf{10.0} & \textbf{10.0} & \textbf{45.0} & \textbf{27.9} & \textbf{33.2} & \textbf{32.4} \\
\midrule
\multicolumn{8}{c}{\textit{DeepSeek-R1-Distill-Qwen-1.5B}} \\
\midrule
GRPO & 75.4 & 13.3 & 20.0 & 57.5 & 29.8 & 43.2 & 39.9 \\
+ FAC & 77.6 & 16.7 & 20.0 & 60.0 & 30.9 & 44.1 & 41.6 \\
+ PMPO & 79.4 & 20.0 & 23.3 & 62.5 & 31.6 & 45.1 & 43.7 \\
APMPO & \textbf{81.6} & \textbf{23.3} & \textbf{26.7} & \textbf{65.0} & \textbf{32.7} & \textbf{46.6} & \textbf{46.0} \\
\bottomrule
\end{tabular}
\caption{Ablation study on multiple mathematical reasoning benchmarks using different modules. The best results are highlighted in bold, and we reported Pass@1 score.}
\label{tab:ablation1}
\end{table*}

\subsection{Ablation of FSS Components} \label{appen:FSS}
The experimental results regarding different FSS components are shown in Table \ref{tab:ablation2}.

\begin{table*}[t]
\centering
\small
\setlength{\tabcolsep}{2.8mm}
\begin{tabular}{cccccccc}
\toprule
\textbf{Method} & \textbf{Math500} & \textbf{AIME24} & \textbf{AIME25} & \textbf{AMC23} & \textbf{Minerva} & \textbf{Olympiad} & \textbf{Avg.}  \\
\midrule
\multicolumn{8}{c}{\textit{Qwen2.5-Math-1.5B-Instruct}} \\
\midrule
Only $\sigma_{R}$ & 77.2 & 16.7 & 16.7 & 60.0 & 29.8 & 41.4 & 40.3 \\
Only $\mu_{R}$ & 77.6 & \textbf{20.0} & \textbf{16.7} & 60.0 & 30.1 & 42.0 & 41.1 \\
FSS & \textbf{78.0} & \textbf{20.0} & \textbf{16.7} & \textbf{62.5} & \textbf{30.5} & \textbf{42.4} & \textbf{41.7} \\
\midrule
\multicolumn{8}{c}{\textit{Qwen2.5-3B-Instruct}} \\
\midrule
Only $\sigma_{R}$ & 67.6 & 6.7 & \textbf{10.0} & 42.5 & 26.8 & 32.6 & 31.0 \\
Only $\mu_{R}$ & 67.8 & \textbf{10.0} & \textbf{10.0} & 42.5 & 27.2 & 32.8 & 31.7 \\
FSS & \textbf{68.4} & \textbf{10.0} & \textbf{10.0} & \textbf{45.0} & \textbf{27.9} & \textbf{33.2} & \textbf{32.4} \\
\midrule
\multicolumn{8}{c}{\textit{DeepSeek-R1-Distill-Qwen-1.5B}} \\
\midrule
Only $\sigma_{R}$ & 80.8 & 20.0 & 20.0 & 62.5 & 32.0 & 45.8 & 43.5 \\
Only $\mu_{R}$ & 81.2 & 20.0 & 23.3 & 62.5 & 32.4 & 46.1 & 44.3 \\
FSS & \textbf{81.6} & \textbf{23.3} & \textbf{26.7} & \textbf{65.0} & \textbf{32.7} & \textbf{46.6} & \textbf{46.0} \\
\bottomrule
\end{tabular}
\caption{Ablation study on multiple mathematical reasoning benchmarks using different variants of FSS. The best results are highlighted in bold, and we reported Pass@1 score.}
\label{tab:ablation2}
\end{table*}

\subsection{Ablation of Adaptive Exponent Formulation} \label{appendix:exponent}
The experimental results regarding the adaptive exponent formulation in PMPO are presented in Table \ref{tab:ablation3}.

\begin{table*}[t]
\centering
\small
\setlength{\tabcolsep}{2.8mm}
\begin{tabular}{cccccccc}
\toprule
\textbf{Method} & \textbf{Math500} & \textbf{AIME24} & \textbf{AIME25} & \textbf{AMC23} & \textbf{Minerva} & \textbf{Olympiad} & \textbf{Avg.}  \\
\midrule
\multicolumn{8}{c}{\textit{Qwen2.5-Math-1.5B-Instruct}} \\
\midrule
Linear $p$ & 77.2 & 16.7 & \textbf{16.7} & 57.5 & 29.8 & 41.8 & 40.0 \\
Ours & \textbf{78.0} & \textbf{20.0} & \textbf{16.7} & \textbf{62.5} & \textbf{30.5} & \textbf{42.4} & \textbf{41.7} \\
\midrule
\multicolumn{8}{c}{\textit{Qwen2.5-3B-Instruct}} \\
\midrule
Linear $p$ & 68.0 & 6.7 & \textbf{10.0} & 40.0 & 27.2 & 32.8 & 30.8 \\
Ours & \textbf{68.4} & \textbf{10.0} & \textbf{10.0} & \textbf{45.0} & \textbf{27.9} & \textbf{33.2} & \textbf{32.4} \\
\midrule
\multicolumn{8}{c}{\textit{DeepSeek-R1-Distill-Qwen-1.5B}} \\
\midrule
Linear $p$ & 81.2 & \textbf{23.3} & 23.3 & 60.0 & 32.0 & 46.1 & 44.3 \\
Ours & \textbf{81.6} & \textbf{23.3} & \textbf{26.7} & \textbf{65.0} & \textbf{32.7} & \textbf{46.6} & \textbf{46.0} \\
\bottomrule
\end{tabular}
\caption{Ablation study on multiple mathematical reasoning benchmarks using different $p$ formulations. The best results are highlighted in bold, and we reported Pass@1 score.}
\label{tab:ablation3}
\end{table*}

\subsection{Ablation of $\gamma$ in PMPO}
\label{appen:gamma}
The experimental results regarding the parameter $\gamma$ in PMPO are shown in Table \ref{tab:ablation4}. Note that $(\epsilon_\text{min}, \epsilon_\text{max})=(0.2,0.4)$ was kept when comparing different values of $\gamma$.

\begin{table*}[t]
\centering
\small
\setlength{\tabcolsep}{2.8mm}
\begin{tabular}{cccccccc}
\toprule
\textbf{$\gamma$} & \textbf{Math500} & \textbf{AIME24} & \textbf{AIME25} & \textbf{AMC23} & \textbf{Minerva} & \textbf{Olympiad}& \textbf{Avg.}  \\
\midrule
\multicolumn{8}{c}{\textit{Qwen2.5-Math-1.5B-Instruct}} \\
\midrule
0.2 & 76.8 & 13.3 & 13.3 & 55.0 & 28.6 & 41.4 & 38.1 \\
0.4 & 77.2 & 13.3 & 13.3 & 57.5 & 29.4 & 41.8 & 38.8 \\
0.6 & 77.6 & 16.7 & \textbf{16.7} & 60.0 & 30.1 & 42.1 & 40.5 \\
0.8 & \textbf{78.0} & \textbf{20.0} & \textbf{16.7} & \textbf{62.5} & \textbf{30.5} & \textbf{42.4} & \textbf{41.7} \\
1.0 & 77.8 & \textbf{20.0} & 13.3 & \textbf{62.5} & 29.8 & 41.5 & 40.8 \\
\midrule
\multicolumn{8}{c}{\textit{Qwen2.5-3B-Instruct}} \\
\midrule
0.2 & 67.3 & 3.3 & 6.7 & 35.0 & 25.7 & 32.5 & 28.4 \\
0.4 & 67.6 & 6.7 & 6.7 & 40.0 & 26.5 & 32.8 & 30.1 \\
0.6 & 68.2 & 6.7 & \textbf{10.0} & 42.5 & 27.2 & 32.9 & 31.3 \\
0.8 & \textbf{68.4} & \textbf{10.0} & \textbf{10.0} & \textbf{45.0} & \textbf{27.9} & \textbf{33.2} & \textbf{32.4} \\
1.0 & 68.0 & \textbf{10.0} & 6.7 & 45.0 & 27.6 & \textbf{33.2} & 31.8 \\
\midrule
\multicolumn{8}{c}{\textit{DeepSeek-R1-Distill-Qwen-1.5B}} \\
\midrule
0.2 & 79.6 & 20.0 & 20.0 & 60.0 & 31.3 & 45.1 & 42.7 \\
0.4 & 80.3 & 20.0 & 23.3 & 62.5 & 31.6 & 45.6 & 43.9 \\
0.6 & 81.1 & \textbf{23.3} & 23.3 & \textbf{65.0} & 32.4 & 46.1 & 45.2 \\
0.8 & \textbf{81.6} & \textbf{23.3} & \textbf{26.7} & \textbf{65.0} & \textbf{32.7} & \textbf{46.6} & \textbf{46.0} \\
1.0 & 80.8 & 20.0 & 26.7 & 62.5 & 32.0 & 45.8 & 44.6 \\
\bottomrule
\end{tabular}
\caption{Ablation study on multiple mathematical reasoning benchmarks using different values of $\gamma$. The best results are highlighted in bold, and we reported Pass@1 score.}
\label{tab:ablation4}
\end{table*}

\subsection{Ablation of $(\epsilon_\text{min}, \epsilon_\text{max})$ in FAC} \label{appen:epsilon}
The experimental results regarding the parameters $(\epsilon_\text{min}, \epsilon_\text{max})$ in FAC are shown in Table \ref{tab:ablation5}. Note that $\gamma=0.8$ was kept when comparing different values of $(\epsilon_\text{min}, \epsilon_\text{max})$.

\begin{table*}[t]
\centering
\small
\setlength{\tabcolsep}{2.8mm}
\begin{tabular}{cccccccc}
\toprule
\textbf{$(\epsilon_\text{min}, \epsilon_\text{max})$} & \textbf{Math500} & \textbf{AIME24} & \textbf{AIME25} & \textbf{AMC23} & \textbf{Minerva} & \textbf{Olympiad}& \textbf{Avg.}  \\
\midrule
\multicolumn{8}{c}{\textit{Qwen2.5-Math-1.5B-Instruct}} \\
\midrule
$(0.1,0.3)$ & 77.2 & 16.7 & 13.3 & 55.0 & 29.4 & 41.2 & 38.8 \\
$(0.1,0.4)$ & 77.8 & \textbf{20.0} & 13.3 & 60.0 & 30.1 & 42.1 & 40.6 \\
$(0.2,0.3)$ & 77.6 & 16.7 & 13.3 & 60.0 & 29.8 & 41.4 & 39.8 \\
$(0.2,0.4)$ & \textbf{78.0} & \textbf{20.0} & \textbf{16.7} & \textbf{62.5} & \textbf{30.5} & \textbf{42.4} & \textbf{41.7} \\
\midrule
\multicolumn{8}{c}{\textit{Qwen2.5-3B-Instruct}} \\
\midrule
$(0.1,0.3)$ & 67.6 & 6.7 & 6.7 & 40.0 & 26.5 & 32.8 & 30.1 \\
$(0.1,0.4)$ & 68.0 & \textbf{10.0} & \textbf{10.0} & 42.5 & 27.2 & 32.9 & 31.8 \\
$(0.2,0.3)$ & 67.8 & 6.7 & \textbf{10.0} & 40.0 & 27.2 & 32.9 & 30.8 \\
$(0.2,0.4)$ & \textbf{68.4} & \textbf{10.0} & \textbf{10.0} & \textbf{45.0} & \textbf{27.9} & \textbf{33.2} & \textbf{32.4} \\
\midrule
\multicolumn{8}{c}{\textit{DeepSeek-R1-Distill-Qwen-1.5B}} \\
\midrule
$(0.1,0.3)$ & 80.7 & 20.0 & 23.3 & 60.0 & 31.6 & 45.8 & 43.5 \\
$(0.1,0.4)$ & 81.2 & 20.0 & \textbf{26.7} & \textbf{65.0} & 32.4 & 46.3 & 45.3 \\
$(0.2,0.3)$ & 80.4 & \textbf{23.3} & 23.3 & 62.5 & 32.0 & 46.1 & 44.6 \\
$(0.2,0.4)$ & \textbf{81.6} & \textbf{23.3} & \textbf{26.7} & \textbf{65.0} & \textbf{32.7} & \textbf{46.6} & \textbf{46.0} \\
\bottomrule
\end{tabular}
\caption{Ablation study on multiple mathematical reasoning benchmarks using different values of $(\epsilon_\text{min}, \epsilon_\text{max})$. The best results are highlighted in bold, and we reported Pass@1 score.}
\label{tab:ablation5}
\end{table*}

\end{document}